\documentclass{article} 
\usepackage{iclr2027_conference_arxiv,times}

\usepackage{amsmath,amsfonts,bm}

\def\eqref#1{equation~\ref{#1}}

\def\1{\bm{1}}

\DeclareMathAlphabet{\mathsfit}{\encodingdefault}{\sfdefault}{m}{sl}
\SetMathAlphabet{\mathsfit}{bold}{\encodingdefault}{\sfdefault}{bx}{n}

\DeclareMathOperator*{\argmax}{arg\,max}
\DeclareMathOperator*{\argmin}{arg\,min}

\usepackage[utf8]{inputenc} 
\usepackage[T1]{fontenc}    
\usepackage[colorlinks]{hyperref}       
\usepackage{url}            
\usepackage{booktabs}       
\usepackage{amsfonts}       
\usepackage{nicefrac}       
\usepackage{microtype}      
\usepackage{xcolor}         
\usepackage{mathtools}          
\usepackage{mathrsfs}           
\usepackage{graphicx}           
\usepackage{subcaption}         
\usepackage[space]{grffile}     
\usepackage{url}                
\usepackage{lipsum}             
\usepackage{amsthm,amsmath,amsfonts,environ,amssymb}
\usepackage[makeroom]{cancel}
\usepackage{standalone}
\usepackage{pgfplots}
\usepackage{wrapfig}
\usepackage{multicol}
\usepackage[addedmarkup=uline, defaultcolor=magenta, todonotes={textsize=scriptsize, textwidth=2cm}, authormarkuptext=name, commandnameprefix=always, xcolor]{changes}
\usepackage{dsfont}
\usepackage[scr=boondox]{mathalfa}
\usepackage{bbm}

\hypersetup{urlcolor=gray, citecolor=gray, linkcolor=gray}

\pgfplotsset{compat=1.18}
\pgfplotsset{set layers}
\usepgfplotslibrary{colormaps}
\usepgfplotslibrary{fillbetween}
\usepgfplotslibrary{patchplots}
\usetikzlibrary{positioning}
\newcommand{\norm}[1]{\left\lVert #1 \right\rVert}
\newcommand*\dd{\mathop{}\!\mathrm{d}}
\newcommand{\cS}{\mathcal{S}}

\newcommand{\cD}{\mathcal{D}}

\newcommand{\cQ}{\mathcal{Q}}

\usepackage{thmtools}
\usepackage{thm-restate}

\newtheorem{theorem}{Theorem}
\newtheorem{proposition}{Proposition}
\newtheorem{lemma}{Lemma}
\newtheorem{corollary}{Corollary}

\newtheorem{definition}{Definition}

\newtheorem{remark}{Remark}

\definechangesauthor[name={NR}, color=gray]{nr}

\usepackage{silence}
\pgfplotsset{
    colormap={custom_plasma}{
        rgb255=(13, 8, 135)
        rgb255=(84, 2, 163)
        rgb255=(156, 23, 158)
        rgb255=(208, 70, 107)
        rgb255=(240, 117, 78)
        rgb255=(253, 169, 41)
        rgb255=(240, 249, 33)
    }
}

\title{
A Unified Bellman Operator for
Safety-Critical Reinforcement Learning
}

\author{%
  Nishanth Arun Rao\textsuperscript{1,*} \quad
  Royina Karegoudra Jayanth\textsuperscript{2,*} \\
  \textbf{Benjamin Eysenbach}\textsuperscript{2} \quad
  \textbf{Jaime Fernández Fisac}\textsuperscript{1} \\
  \textsuperscript{1}Department of Electrical and Computer Engineering \\
  \textsuperscript{2}Department of Computer Science \\
  Princeton University \\
  \texttt{\{nr1053, rj5498, eysenbach, jfisac\}@princeton.edu}
}

\iclrarxivcopy
\begin{document}

\maketitle
\let\thefootnote\relax\footnotetext{*Equal contribution}

\begin{abstract}
Reinforcement learning in safety-critical domains requires maximizing task performance while strictly adhering to safety constraints. Existing safe reinforcement learning paradigms \textit{typically} force a trade-off: they either require \textit{a priori} knowledge to provide strict safety guarantees (e.g., safety filters), or they enable joint learning but only satisfy safety constraints on average. In this work, we propose a novel Bellman operator that unifies performance and safety objectives into a \textit{joint} value function.
  We show that temporal difference learning with the joint Bellman operator converges under a two-timescale stochastic approximation framework. On the fast timescale, the \textit{safety value} of the learning joint policy is estimated, while the \textit{joint value} is estimated on the slow timescale. Convergence is ensured by formulating the limiting dynamics as an \textit{occupation-averaged} differential inclusion, and showing that it asymptotically converges to a set of limiting optimal safety-constrained task value functions. \textit{Theoretically}, once converged, the resulting optimal policy maximizes task return while maintaining safety at all times. Empirical evaluations on continuous control tasks with \textit{neural approximations} demonstrate stable convergence with near-zero safety violations at test time.
\end{abstract}

\section{Introduction}\label{sec:introduction}

Biological agents instinctively prioritize safety while mastering diverse tasks to ensure efficient, harm-free exploration. Safe Reinforcement Learning (Safe RL) captures this paradigm, aiming to learn near-optimal policies with strict safety assurances. Existing methods typically formalize this balance either by treating \textit{safety as a soft constraint} during optimization or by theoretically enforcing \textit{hard safety guarantees}.

Some of the early works formalized safe RL as a \textit{Constrained Markov Decision Process} (CMDP), solving safety-constrained optimization problems using Lagrangian relaxation with dual multiplier methods \citep{altman1999constrained, garcia2015comprehensive, ray2019benchmarking}, with subsequent extensions via trust-region projections \citep{achiam2017constrained, yang2020projection, xu2021crpo, milosevic2025embedding}, state-augmentation \citep{sootla2022saute}, and offline distribution shaping \citep{yao2024oasis}. While these model-free techniques jointly optimize task performance and safety, they inherently bound \textit{expected cumulative costs}, yielding only probabilistic safety guarantees. Conversely, model-based methods achieve strict safety via \textit{predictive rollouts}, \textit{epigraph formulation} of the value function (for deterministic control problems), or by strictly avoiding unsafe future sets \citep{huangsafedreamer, jayant2022model, pfrommer2022safe, altarovici2013general}. However, they suffer from compounding model biases and computational overhead to infer safe actions in real-time, particularly in environments with complex or unmodeled dynamics. Motivated by these tradeoffs, we seek a model-free approach that strictly adheres to safety specifications while optimizing task performance.

Some of the control-theoretic approaches that rigorously hard-guarantee safety are based on \textit{Hamilton-Jacobi (HJ) reachability analysis}, casting safety and reach-avoid problems as \textit{zero-sum differential games} \citep{isaacs1965differential, mitchell2005time}. While deep RL techniques have enabled tractable neural solutions for the time-discounted HJI avoid/reach-avoid Bellman operators for safety analysis \citep{fisac2019bridging, hsu2021safety, bajcsy2019efficient}, it is not immediately obvious how these techniques can be extended to generic tasks specified by dense, continuous reward functions. Consequently, a dominant paradigm employs a \textit{decoupled approach}, where an \textit{a priori learned} task-oblivious \textit{safety filter} preemptively intercepts and projects nominal unsafe task actions onto a plausible safe set \citep{bui2021real, hsu2024safety_filter, nguyen2024gameplay, cai2025guardian, banerjee2024dynamic}. However, these switch-type interventions only occur near the boundary, causing erratic, jittery interventions when a safety-oblivious task policy persistently tries to exit the boundary. This is seldom desirable, especially in human-in-the-loop scenarios \citep{oh2025safety}.

Recent offline \textit{feasibility-guided models} instead utilize diffusion, flow-matching, or world-models to approximate the largest feasible region using HJ-reachability and steer task-driven actions to strictly lie within the safe envelope \citep{fisor2024, safefql2026, caofosp}. However, they either introduce iterative online inference latency or rely on post-hoc distillation that weakens strict safety guarantees. Alternatively, using \textit{precomputed} safety value functions as \textit{Control Barrier Functions} ($Q$-CBF) \citep{hsu2024safety_filter, pmlr-v283-lu25a, oh2025safety, oh2026synthesisdeploymentmaximalrobust} avoids planning latency, but necessitates computing action projections online. Additionally, relying on external projections introduces \textit{action aliasing}, which can be detrimental \citep{markgraf2026safe}. If the safe action set is known a priori, neural-based methods such as \citep{suttle2024sampling} can hard-guarantee safety; we instead assume \textit{no access to an oracle safe action set} and learn from data. Thus, a growing subset of literature advocates for \textit{unified architectures} that embed safety directly into the policy \citep{pcrpo2024, gu2024enhancing, pmlr-v235-cen24b}. Yet, jointly optimizing rewards and costs frequently destabilizes learning due to gradient conflicts,  necessitating algorithmic workarounds like gradient manipulation and explicit latent decoupling \citep{pcrpo2024, gu2024enhancing, kim2025conflictaverse, pmlr-v235-cen24b, yangq}.

In this paper, we combine \textit{safety and task objectives directly at the value-function level}, rather than at the policy level, and jointly optimize \textit{safety-constrained task performance} using a single \textit{joint policy} via a novel, \textit{joint Bellman operator} under adversarial disturbances. Our contributions are: 

\textbf{Unified Formulation:} We propose a robust Bellman operator 
and derive the corresponding objective function from robust optimal stopping theory. 
We propose a two-timescale stochastic approximation framework that enables learning of its fixed point, the \emph{joint value function}.

\textbf{Theoretical Safety and Optimality Guarantees:} We prove forward invariance of the safe set under the joint policy, and show convergence to the optimal robust task performance within the safe set.

\textbf{Empirical Validation:} We conduct experiments using neural approximations on a set of continuous control tasks from the \texttt{Gymnasium} and \texttt{SafetyGymnasium} suites under adversarial disturbances, demonstrating stable training convergence and near-zero safety violations during evaluation. 

\vspace{-4mm}
\section{Preliminaries}\label{sec:preliminaries}\vspace{-2mm}
This section introduces key definitions, followed by preliminaries on Constrained Markov Decision Processes and Hamilton-Jacobi safety analysis. Refer to Appendix \ref{app:preliminaries_section} for more details.
\vspace{-2mm}
\subsection{Definitions and Notations}\label{subsec:defns_notation}\vspace{-1mm}
    We consider a discounted MDP with finite state, action and disturbance spaces: $\mathcal{S} \subset \mathbb{R}^n$, $\mathcal{A} \subset \mathbb{R}^m$, and $\mathcal{D} \subset \mathbb{R}^l$ respectively, with the transition kernel $P(\cdot|s,a,d) \in \Delta(\mathcal{S})$. A deterministic joint policy $\pi = (\mu, \nu)$ pairs the mappings $\mu: \mathcal{S} \rightarrow \mathcal{A}$ and $\nu: \mathcal{S} \times \mathcal{A} \rightarrow \mathcal{D}$, yielding $\pi(s) = (\mu(s), \nu(s, \mu(s)))$. The reward and the safety margin are given by $r: \mathcal{S} \times \mathcal{A} \times \mathcal{D} \rightarrow [0, R_{\text{max}}], R_{\text{max}} >0$ w.l.o.g, and $g: \mathcal{S} \rightarrow [-G_{\text{min}}, G_{\text{max}}]$, with $G_{\text{min}}, G_{\text{max}} > 0$ respectively. We subsume the extended class-$\mathcal{K}_e$ scaling function $\beta(\cdot)$ into $g(s)$ to align with task return scales (unless stated otherwise). The \textit{constraint set} is $\mathcal{F}^c := \{s \in \mathcal{S}: g(s)\ge0\}$, complementing the failure set $\mathcal{F}$. The \textit{viability kernel} $\Omega \subseteq \mathcal{F}^c$ is the subset of states from which a control policy exists that can keep the agent away from the failure set $\mathcal{F}$ for all times, regardless of the adversarial disturbance. Finally, 
    the value and action-value function spaces are defined as $\mathcal{V}:=\mathbb{R}^{|\mathcal{S}|}$ and $Q:=\mathbb{R}^{|\mathcal{S}||\mathcal{A}||\mathcal{D}|}$, equipped with the supremum norm $\Vert\cdot\Vert_{\infty}$.
\vspace{-3mm}
\subsection{Constrained Markov Decision Processes}\label{subsec:cmdps}\vspace{-1mm}
A Constrained Markov Decision Process (CMDP) \citep{altman1999constrained} extends an MDP by introducing auxiliary costs to encode safety constraints. It is defined by the tuple $\mathcal{M}=\left(\mathcal{S, A, P}, r, c, \gamma, \mu_0\right)$, where $r$ and $c$ are the task reward and constraint cost functions, respectively, $\gamma\in[0,1)$ is the discount factor, and $\mu_0:\mathcal{S}\rightarrow[0,1]$ denotes the initial state distribution. For a stochastic policy $\pi:\mathcal{S}\rightarrow\Delta(\mathcal{A})$, let $J_r(\pi)$ and $J_c(\pi)$ 
denote the expected discounted cumulative return and cost. The CMDP objective seeks an optimal policy that maximizes returns subject to a predefined safety threshold $d\in\mathbb{R}$, typically solved 
via Lagrangian relaxation, yielding an equivalent unconstrained min-max saddle-point problem: $\min_{\lambda\ge 0}\max_{\pi\in\Pi} \left( J_r(\pi) - \lambda(J_c(\pi)-d) \right)$,
where $\lambda\ge 0$ is the Lagrange multiplier and $\Pi$ is the set of stationary stochastic policies. More details can be found in \citep{altman1999constrained, achiam2017constrained, tesslerreward, stooke2020responsive}.

\subsection{Hamilton-Jacobi Safety Analysis}\label{subsec:hji_safety_analysis}
Hamilton-Jacobi (HJ) reachability analysis seeks to find the \textit{safe set} $\Omega\subseteq\mathcal{F}^c$ \citep{mitchell2005time}. Formulated as a \textit{zero-sum differential game}, the robust safety value function is $V^\phi(s) = \inf_{\mathcal{N}[\pmb{d}]}\sup_{\pmb{a}} \inf_{t \geq 0} g(\xi_{s_0}^{\pmb{a}, \mathcal{N}[\pmb{d}]}(t))$, where $\xi^{\pmb{a},\mathcal{N}[\pmb{d}]}_{s_0}(\cdot)$ denotes the trajectory starting at $s_0$ and evolving according to the control and disturbance trajectories ($\pmb{a}, \mathcal{N}[\pmb{d}]$), with $\mathcal{N}[\pmb{d}]$ the set of \textit{non-anticipative disturbance strategies} \citep{evans1984differential, fisac2017general}. By Dynamic Programming, $V(s)$ is the unique viscosity solution to the Hamilton-Jacobi-Isaacs (HJI) variational inequality.
As the corresponding infinite-horizon Bellman operator does not induce a contraction mapping, \citep{fisac2019bridging} introduces a discount factor $\gamma \in (0,1)$ to enable TD-learning, resulting in the time-discounted robust Bellman equation under stochastic dynamics $V(s) = (1-\gamma) g(s) + \gamma \min \left\{ g(s), \max_{a \in \mathcal{A}}\min_{d\in\mathcal{D}} \mathbb{E}_{s'\sim p(\cdot|s, a, d)}[V(s')]\right\}$
More details can be found in \citep{mitchell2005time,  fisac2019bridging, fisac2017general,evans1984differential}.

\vspace{-2mm}
\section{Learning Safety-Constrained Task Action-Value Function}\label{sec:defining_joint_action_value_function}
This section presents our unified formulation to learn a joint safety-constrained task value function. We show that the corresponding \emph{joint Bellman operator} induces a contraction mapping when the \textit{auxiliary} safety value function is held fixed. We further see that the coupled system possesses a discontinuity due to changing joint greedy policies, and discuss its resolution.
\subsection{The Joint Formulation}\label{subsec:joint_formulation}
For clarity, we first introduce the joint Bellman equation to intuit its formulation. In the presence of adversarial disturbances, the \textit{time-discounted} \textit{task action-value function} $Q^\psi$ purely corresponding to task return maximization can be extended to the following robust stochastic equation:

\par \vspace{-2\parskip} {\normalsize
\begin{equation}\label{eq:robust_det_bell_eqn}
    Q^{\text{task}}(s,a,d) =  r(s,a,d) + \gamma \mathop{\mathbb{E}}_{s' \sim p(\cdot | s,a,d)}\left[ \max_{a' \in \mathcal{A}} \min_{d' \in \mathcal{D}} \ Q^{\text{task}}(s', a', d')\right]
\end{equation}}
Next, consider the \textit{time-discounted} safety value function, as introduced in \citep{fisac2019bridging}. We'd like to use this value function to instead assess the safety values of some given policy $\pi:=(\mu, \nu)$, yielding the following \textit{safe action-value function} evaluated $\pi$:
\par \vspace{-3\parskip} {\normalsize\begin{equation}\label{eq:safe_action_value}
    Q_\pi^\phi(s,a,d) = (1-\gamma)g(s) + \gamma \min\left\{ g(s), \mathop{\mathbb{E}}_{s' \sim p(\cdot | s,a,d)} \left[\max_{a' \in \mu(s')} \min_{d' \in \nu(s',a')} Q_\pi^\phi (s' , a', d') \right]\right\}
\end{equation}}
We can use $Q_\pi^\phi$ to cap the further accrual of task rewards, i.e., \textit{truncate task accrual} for the policy $\pi$ when the corresponding task action-value goes above $Q_\pi^\phi$, enabling the agent to instead optimize \textit{task performance subject to safety}. Since $\gamma$ serves as the \textit{probability of episode continuation}, by the notion of \textit{principled time-discounting} \cite{fisac2019bridging}, we can combine Eq.~\ref{eq:safe_action_value} with Eq.~\ref{eq:robust_det_bell_eqn}, yielding the stochastic time-discounted \textit{joint action-value function} as follows:
\begin{equation}\label{eq:joint_st_value_eqn}
\begin{aligned}
    Q(s,a,d) =& (1-\gamma) \min \{ r(s,a,d), g(s) \} \\
    &+ \gamma \min \left\{ r(s,a,d) + \mathop{\mathbb{E}}_{s' \sim p(\cdot | s,a,d)} \left[\max_{a' \in \mathcal{A}} \min_{d' \in \mathcal{D}} Q(s', a', d')\right], Q_\pi^\phi(s,a,d) \right\}
\end{aligned}
\end{equation}
where, with probability $1-\gamma$, the agent receives the reward $\min\{r(s, a,d), g(s)\}$, or continues with probability $\gamma$. The corresponding \textit{maximin joint policy} $\pi:=(\mu, \nu)$ is defined through a \textit{measurable selection} 
as:
\begin{align}\label{eq:greedy_mu_nu}
    \mu(s):=\argmax_{a\in\mathcal{A}} \min_{d\in\mathcal{D}} Q(s, a, d), \qquad \nu(s, \mu(s)):=\argmin_{d\in\mathcal{D}}Q(s, \mu(s), d)
\end{align}
 Here, the disturbance gets the advantage of observing the controller's action, while the controller must \textit{optimize} for the adversarial disturbance. 
Note that we do not assume $Q^\phi_\pi$ to be given; instead, 
$Q_\pi^\phi$ is learned alongside $Q$, estimating the safety values of the joint policy $\pi$. Note the \textit{coupling}: the joint policy $\pi$ is derived from the joint value $Q$, which is evaluated by $Q^\phi_\pi$, whose values in turn update $Q$. Next, we show that the safety and joint Bellman operators are contractions given a fixed policy and fixed safety value, respectively. The joint value function is \textit{conceptualized} in Fig. \ref{fig:fig_joint_and_region}.

\subsection{Safety Action-Value of the Joint Policy}\label{subsec:contraction_safety_action_value}
Given a joint policy $\pi$, the safety Bellman operator $\mathcal{T}^\phi_\pi: \mathcal{Q} \rightarrow \mathcal{Q}$ corresponding to Eq. \ref{eq:safe_action_value} is defined as:
\par \vspace{-4\parskip} {\normalsize\begin{equation}\label{eq:safety_bellman}
    \mathcal{T}^\phi_\pi\left[Q^\phi\right](s,a,d) = (1-\gamma)g(s) + \gamma \min\left\{ g(s), \mathop{\mathbb{E}}_{s' \sim p(\cdot | s,a,d)} \left[\max_{a' \in \mu(s')} \min_{d' \in \nu(s',a')} Q^\phi(s' , a', d') \right]\right\}
\end{equation}}
Note that $\nu(\cdot, a')$ refers to the specific sample $a'$ drawn from $\mu(\cdot)$ by the $\max_{a'\in\mu}$ operation.
\begin{restatable}[Contraction Mapping of $\mathcal{T}^\phi_\pi$]{proposition}{safetybellmancontraction}
\label{prop:safety_bellman_contraction}
For any arbitrary fixed joint policy $\pi$, the time-discounted Bellman operator defined in Eq.~\ref{eq:safety_bellman} induces a $\gamma$-contraction mapping under the supremum norm. That is, for a given discount factor $\gamma \in [0,1)$, and any $Q^\phi_\pi, \widetilde{Q}^\phi_\pi \in \mathcal{Q}$, we have $\norm{\mathcal{T}^\phi_\pi\left[Q^\phi_\pi\right] - \mathcal{T}^\phi_\pi\left[\widetilde{Q}^\phi_\pi\right]}_\infty \leq \gamma \norm{Q^\phi_\pi - \widetilde{Q}^\phi_\pi}_\infty$. (Proof in Appendix \ref{app:contraction_safe_action_value})
\end{restatable}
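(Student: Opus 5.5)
The plan is a pointwise argument: fix an arbitrary triple $(s,a,d)$, bound the difference of the two operator outputs there by $\gamma\norm{Q^\phi_\pi-\widetilde{Q}^\phi_\pi}_\infty$, and then take the supremum over $(s,a,d)$. Write $\Delta := \norm{Q^\phi_\pi-\widetilde{Q}^\phi_\pi}_\infty$.

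\textbf{Step 1: the $g$ term cancels.} The term $(1-\gamma)g(s)$ is identical in $\mathcal{T}^\phi_\pi[Q^\phi_\pi]$ and $\mathcal{T}^\phi_\pi[\widetilde{Q}^\phi_\pi]$, so it drops out of the difference. What remains is
\[
\gamma\left|\min\{g(s),X\}-\min\{g(s),\widetilde{X}\}\right|,
\]
where $X$ and $\widetilde{X}$ are the expected continuation values under $Q^\phi_\pi$ and $\widetilde{Q}^\phi_\pi$, respectively.

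\textbf{Step 2: $\min\{c,\cdot\}$ is $1$-Lipschitz.} I would use the elementary inequality $|\min\{c,x\}-\min\{c,y\}|\le|x-y|$, checked by cases on the position of $c$ relative to $x$ and $y$. This reduces the problem to showing $|X-\widetilde{X}|\le\Delta$.

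\textbf{Step 3: the expectation and the policy operator are nonexpansive.} The expectation over $s'\sim p(\cdot|s,a,d)$ is a convex combination, so $|X-\widetilde{X}|$ is at most the maximum over $s'$ of the difference of the inner quantities. The inner quantity is
\[
h_Q(s') := \max_{a'\in\mu(s')}\min_{d'\in\nu(s',a')} Q(s',a',d').
\]
The key observation is that the policy $\pi$ is fixed. Hence the index sets $\mu(s')$ and $\nu(s',a')$ are the same for both arguments of the operator; they are finite sets, or singletons for a deterministic selection. I would then invoke the standard facts
\[
|\max_i x_i-\max_i y_i|\le\max_i|x_i-y_i|,\qquad |\min_j x_j-\min_j y_j|\le\max_j|x_j-y_j|.
\]
Applying the min fact and then the max fact gives
\[
|h_Q(s')-h_{\widetilde{Q}}(s')|\le\max_{a',d'}|Q(s',a',d')-\widetilde{Q}(s',a',d')|\le\Delta.
\]

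\textbf{Step 4: assemble.} Chaining Steps 1--3 gives the pointwise bound $\gamma\Delta$, and taking the supremum over $(s,a,d)$ yields the stated $\gamma$-contraction. Because $\mathcal{Q}$ is a finite-dimensional complete space, Banach's theorem then also gives a unique fixed point $Q^\phi_\pi$, although the proposition only asks for the contraction.

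\textbf{Main obstacle.} The only delicate point is Step 3: one must make sure the max and min range over the same sets for both $Q$ and $\widetilde{Q}$. This holds precisely because $\pi$ is held fixed rather than being greedy with respect to the argument, and I would state that explicitly. If $\pi$ were greedy, the nested sets would change with the argument, and the bound would need the full max--min nonexpansiveness over all of $\mathcal{A}$ and $\mathcal{D}$ instead. That version still holds, but it is not needed here.
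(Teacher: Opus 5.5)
Your proposal is correct and follows essentially the same route as the paper's proof. Both arguments cancel the $(1-\gamma)g(s)$ term, use the $1$-Lipschitz property of $\min\{g(s),\cdot\}$, move the absolute value inside the expectation (the paper cites Jensen), and bound the max--min difference by the sup norm. The only difference is cosmetic: you cite the standard nonexpansiveness of $\max$/$\min$ over common index sets, while the paper proves it inline by taking a maximizer $a^*$ of the larger term and a minimizer $d^*$ of the smaller one; your explicit remark that these index sets coincide because $\pi$ is fixed is something the paper leaves implicit.
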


Hence, there exists a unique fixed point, $Q^\phi_\pi$, which we call the safety action-value of the fixed joint policy $\pi$. The corresponding safety value function is obtained as $V^\phi_\pi (s) = \max_{a\in\mu} \min_{d\in\nu} Q^\phi_\pi (s,a,d)$. The set $\Omega^\pi = \{ s \in \mathcal{S}: V^\phi_\pi (s) \geq 0\}$ is then the \textit{safe set of policy} $\pi$ (under $\gamma\rightarrow 1$ as discussed in \cite{fisac2019bridging} i.e., the set of all states from which the joint policy $\pi$ keeps the agent safe. Note that, in general, $\Omega^\pi \subseteq \Omega$, where the maximal robust safe set, $\Omega$, (viability kernel) is given by $\Omega = \{ s \in \mathcal{S}: \exists \; \mu \text{ such that } \forall \nu, \; V^\phi_{(\mu,\nu)} (s) \geq 0\}$.

\subsection{Joint Action-Value Function}\label{subsec:contraction_joint_action_value}
Next, for any fixed safety action-value function, $Y \in \mathcal{Q}$, the joint Bellman operator $\mathcal{T}_Y: \mathcal{Q} \rightarrow\mathcal{Q}$ corresponding to Eq. \ref{eq:joint_st_value_eqn} is given by:
\par \vspace{-3\parskip} {\normalsize
\begin{equation}\label{eq:joint_bellman}
\begin{aligned}
\mathcal{T}_Y[Q](s,a,d) = (1-\gamma) m(s,a,d)+ \gamma \min \Bigl\{ r(s,a,d) +\mathop{\mathbb{E}}_{s' \sim p(\cdot | s,a,d)} \left[V_Q(s')\right], Y(s,a,d) \Bigr\}
\end{aligned}
\end{equation}}
where $m(s,a,d) = \min \{ r(s,a,d), g(s)\}$ and $V_Q(s) = \max_{a \in \mathcal{A}} \min_{d \in \mathcal{D}} Q(s,a,d) $.
\begin{restatable}[Contraction Mapping of $\mathcal{T}_Y$]{proposition}{jointbellmancontraction}
\label{prop:joint_bellman_contraction}
For any arbitrary, fixed $Y \in \mathcal{Q}$, the Bellman operator defined in Eq.~\ref{eq:joint_bellman} induces a $\gamma$-contraction mapping under the supremum norm. That is, for a given discount factor $\gamma \in [0,1)$, and any $Q, \widetilde{Q} \in \mathcal{Q}$, we have $\norm{\mathcal{T}_Y[Q] - \mathcal{T}_Y[\widetilde{Q}]}_\infty \leq \gamma \norm{Q - \widetilde{Q}}_\infty$. (Proof in Appendix \ref{app:contraction_joint_action_value})
\end{restatable}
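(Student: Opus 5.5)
The plan is to bound the operator difference pointwise. Fix $Y$ and $Q,\widetilde{Q}\in\mathcal{Q}$, and fix an arbitrary triple $(s,a,d)$. In the difference $\mathcal{T}_Y[Q](s,a,d)-\mathcal{T}_Y[\widetilde{Q}](s,a,d)$ the term $(1-\gamma)m(s,a,d)$ does not depend on $Q$, so it cancels exactly. What remains is
\[
\gamma\Bigl(\min\{x, Y(s,a,d)\}-\min\{\tilde x, Y(s,a,d)\}\Bigr),
\]
where $x = r(s,a,d)+\mathbb{E}_{s'}[V_Q(s')]$ and $\tilde x = r(s,a,d)+\mathbb{E}_{s'}[V_{\widetilde{Q}}(s')]$.

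The argument then uses three elementary nonexpansiveness facts in sequence.

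\textbf{Step 1: the $\min$ with $Y$.} For any constant $c$, the map $z\mapsto\min\{z,c\}$ is $1$-Lipschitz, so $|\min\{x,c\}-\min\{\tilde x,c\}|\le|x-\tilde x|$. This is checked by considering the cases for whether $x$ and $\tilde x$ lie above or below $c$. It is the one place where holding $Y$ fixed is essential: the same $Y(s,a,d)$ appears on both sides.

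\textbf{Step 2: the reward and the expectation.} The reward $r$ cancels in $x-\tilde x$. Since expectation is linear and monotone,
\[
|x-\tilde x|\le \mathbb{E}_{s'}\bigl|V_Q(s')-V_{\widetilde{Q}}(s')\bigr|\le \max_{s'}\bigl|V_Q(s')-V_{\widetilde{Q}}(s')\bigr|.
\]

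\textbf{Step 3: the max--min operator.} I need $|\max_{a}\min_{d}Q(s',a,d)-\max_a\min_d\widetilde{Q}(s',a,d)|\le\|Q-\widetilde{Q}\|_\infty$. I will prove this with the standard inequality $|\min_d f-\min_d h|\le\max_d|f-h|$, and likewise for $\max_a$, applied twice. Concretely, $\min_d Q(s',a,d)\le\min_d\widetilde{Q}(s',a,d)+\|Q-\widetilde{Q}\|_\infty$ for each $a$. Maximizing over $a$ and then swapping the roles of $Q$ and $\widetilde{Q}$ gives the bound. Finiteness of $\mathcal{A},\mathcal{D}$ ensures these extrema are attained.

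Chaining the three steps gives $|\mathcal{T}_Y[Q](s,a,d)-\mathcal{T}_Y[\widetilde{Q}](s,a,d)|\le\gamma\|Q-\widetilde{Q}\|_\infty$ for every $(s,a,d)$. Taking the supremum over triples yields the claim.

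None of the steps is a genuine obstacle. The only point needing care is Step 1, namely checking that the outer $\min$ with $Y$ remains nonexpansive. Unlike the safety operator of Proposition~\ref{prop:safety_bellman_contraction}, where $g(s)$ plays the role of the constant, here the cap $Y(s,a,d)$ depends on the full triple. Because it is identical for $Q$ and $\widetilde{Q}$, the case analysis goes through unchanged.
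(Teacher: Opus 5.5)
Your proposal is correct and follows essentially the same route as the paper's proof. You cancel the $(1-\gamma)m$ term, use nonexpansiveness of $z\mapsto\min\{z,Y(s,a,d)\}$ with the common cap, cancel $r$ and apply Jensen to the expectation, and then bound the max--min difference by $\|Q-\widetilde{Q}\|_\infty$. The only cosmetic difference is in that last step: the paper picks a maximizer $a^*$ of the larger term and a minimizer $d^*$ of the smaller one, while you use the equivalent monotonicity-plus-shift argument, which handles each $s'$ separately and so avoids the paper's loosely stated per-state ``without loss of generality'' ordering.
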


Thus, for a fixed safety action-value function $Y$, there exists a unique fixed point $Q_Y$ of the joint Bellman operator.

\begin{figure}
    \centering
    \includegraphics[width=0.42\linewidth, trim={1cm 1cm 1cm 1.0cm}, clip]{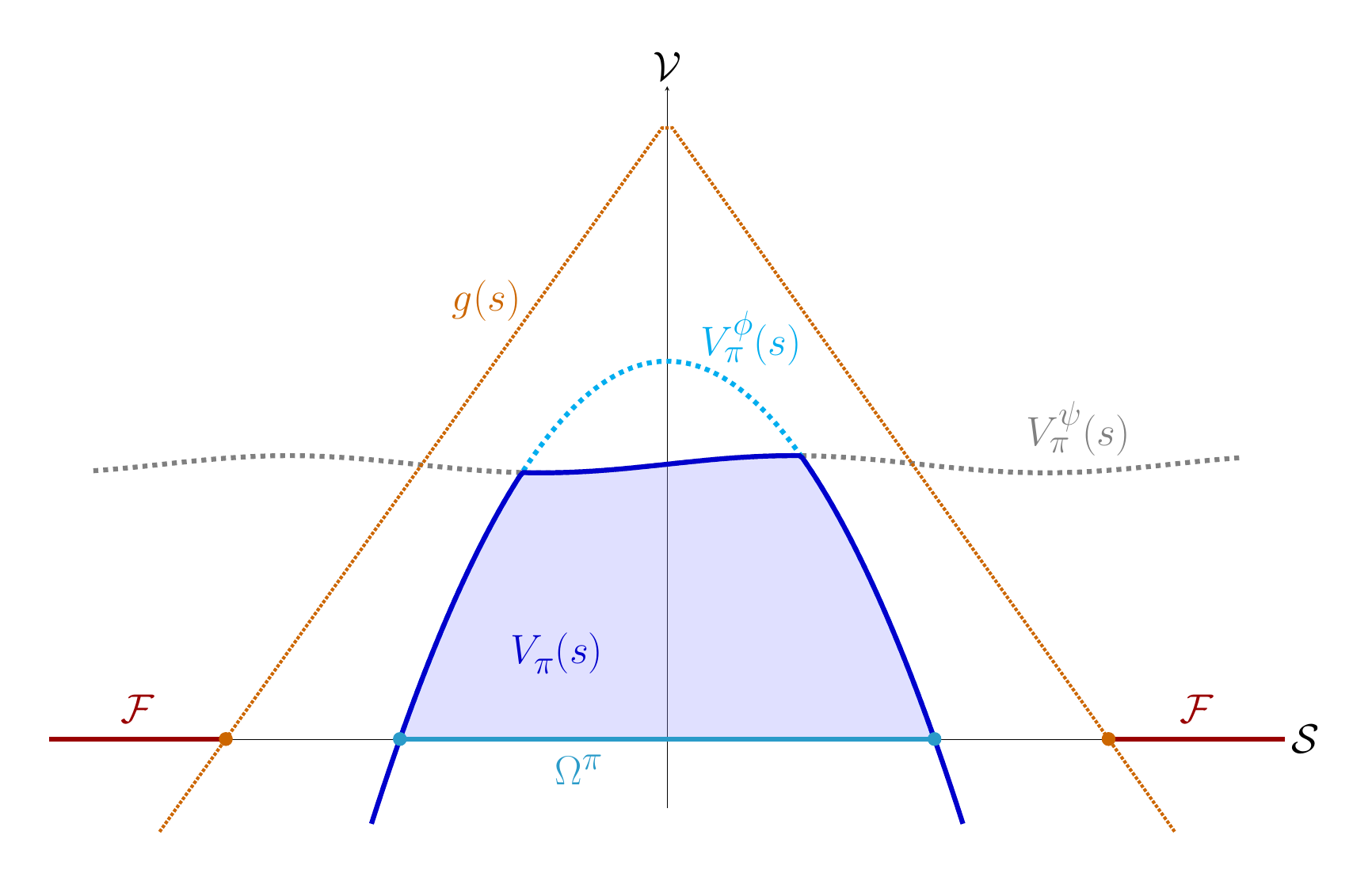}\hspace{0.6cm}
    \includegraphics[width=0.32\linewidth, trim={0cm 2.0cm 2.0cm 2.0cm}, clip]{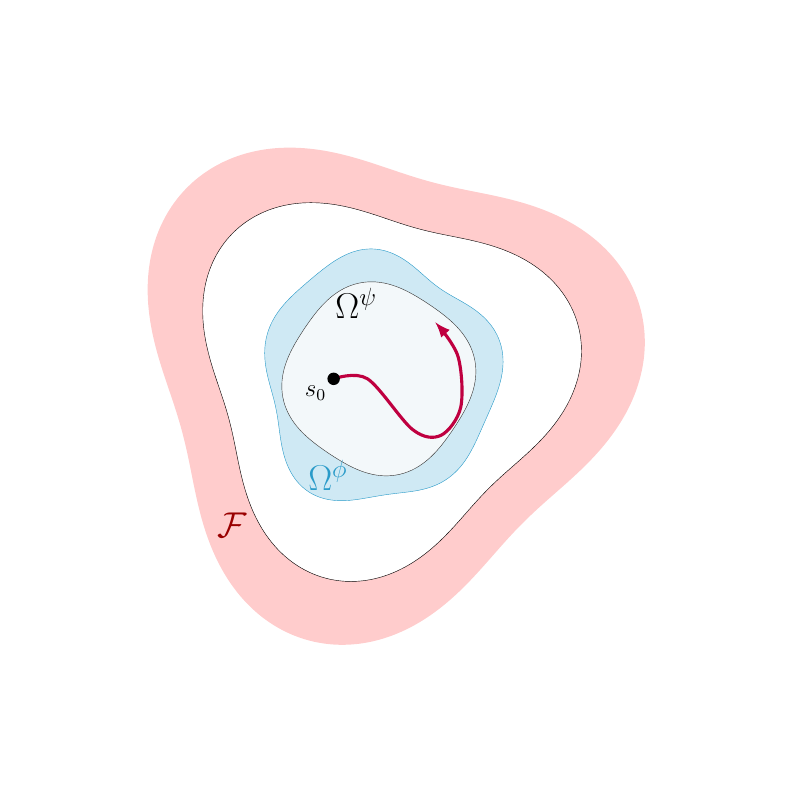}
    \caption{(Left) The Joint Value function is \textit{conceptualized} as a 2D figure, with the state-space indicated on the $x$-axis, and joint values on the $y$-axis. (Right) The corresponding regions within the safe set $\Omega^\pi$ are depicted. The locus of points where $V^\phi_\pi = V^\psi_\pi$ indicates the boundary between $\Omega^\phi$ and $\Omega^\psi$ as defined in Definition \ref{defn:opt_regions}. Trajectories must stay within $\Omega^\psi$ to maximize episodic return.}
    \label{fig:fig_joint_and_region}\vspace{-5mm}
\end{figure}

\subsection{Objective Function of Joint Action-Value Function}\label{subsec:objective_function}
Given the value functions $(Q, Y)$, we derive the \emph{joint objective function} from robust optimal stopping theory to provide a principled interpretation of the joint value function. To this end, we first define the task-optimizing and safety-maximizing regions, as illustrated in Fig. \ref{fig:fig_joint_and_region}. For a given joint action-value function $q$, safety critic $y$ and the joint policy $\pi=(\mu, \nu)$, let $V^\psi_q(s) := r(s, \mu(s), \nu(s, \mu(s))) + \mathbb{E}[V_q(s')]$, where $V_q(s) = \max_a \min_d q(s,a,d)$. Define the safety value function $V^\phi_\pi(s) := Q^\phi_\pi (s, \mu(s), \nu(s, \mu(s))$.


\begin{restatable}[Task and safety optimizing regions]{definition}{optregions}
\label{defn:opt_regions}
    The region where the task value function is strictly lower than the safe value function is defined as the task-optimizing region: $\Omega^\psi = \left\{ s \in \mathcal{S} \; \middle\vert \; V^\psi_q(s) < V^\phi_\pi(s) \right\}$. The region where the task value function is greater than the safe value function is defined as the safety-optimizing region: $\Omega^\phi = \left\{ s \in \mathcal{S} \; \middle\vert \; V^\psi_q(s) \geq V^\phi_\pi(s) \right\}$.
\end{restatable}

\begin{restatable}[Objective Function]{theorem}{objectivefunc}
\label{defn:objective_fun}
Given a Borel-measurable safety maximizing region, $\Omega^\phi \in \mathcal{B}(\mathcal{S})$, define the first switching time $\tau : \mathcal{W} \rightarrow \{ 0, \cdots, T\}$ as $\tau = \inf \{k \in \{0, \cdots, T  \} : s_k \in \Omega^\phi\}$, 
where $T$ is the time horizon and $\inf \emptyset = T$ ($T\rightarrow \infty$ for infinite horizon setting). Let $\Lambda_k$ be the set of stopping times $\tau$ bounded by time horizon $k$, i.e., $0 \leq \tau \leq k$.
Let $\omega$ be trajectories sampled using the joint policy as $a_t,d_t \sim \pi(s_t)$, with the dynamics $s_{t+1} \sim p(\cdot| s_t, a_t, d_t)$. Then, the corresponding joint objective function that the joint policy maximizes is given by: (Proof in Appendix \ref{app:obj_function_derivation})
\begin{equation}\label{eq:general_obj_func}\hspace{-0.4cm}
    \resizebox{0.97\textwidth}{!}{
        $J(\pi) := \inf_{\tau \in \Lambda_T} \mathop{\mathbb{E}}_{\omega \sim \pi} \left[\sum_{t=0}^{\tau -1} \gamma^t r(s_t, a_t, d_t) + \gamma^\tau \left( \underbrace{ (1-\gamma) \min \{r(s_\tau, a_\tau, d_\tau),g(s_\tau)\} +\gamma V^\phi_Y(s_\tau)}_{:=J_\tau} \right) \middle\vert s_0 = s \right]$
    }
\end{equation}
\end{restatable}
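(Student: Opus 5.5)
We interpret the statement as the claim that the fixed point $Q_Y$ of the joint Bellman operator $\mathcal{T}_Y$ (Proposition~\ref{prop:joint_bellman_contraction}), evaluated along the greedy joint policy of Eq.~\ref{eq:greedy_mu_nu}, equals the value of the robust optimal stopping problem in Eq.~\ref{eq:general_obj_func}. Here $V^\phi_Y(s)$ denotes the safety critic $Y$ evaluated at $(s,\mu(s),\nu(s,\mu(s)))$. The plan is the standard dynamic-programming route: first a finite-horizon backward induction, then $T\to\infty$ using the contraction property.

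\textbf{Step 1 (one-step stopping recursion).} Evaluate Eq.~\ref{eq:joint_bellman} at the greedy actions $(a,d)=\pi(s)$. Writing $m=\min\{r,g\}$, this gives
\[
V_Q(s) = (1-\gamma)m(s,\pi(s)) + \gamma\min\{V^\psi_Q(s),\,V^\phi_Y(s)\}.
\]
This has the Snell-envelope form ``stop and collect the safety payoff, or continue and collect $r+\mathbb{E}[V_Q(s')]$'', where the minimum is taken by the infimizing stopping player. We then rewrite the right-hand side as the minimum of two expressions.
\begin{itemize}
\item The stopping payoff $J_0=(1-\gamma)m+\gamma V^\phi_Y$.
\item A continuation expression. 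To match the discounting, the continuation branch $(1-\gamma)m+\gamma(r+\mathbb{E}V_Q)$ must be identified with $r+\gamma\,\mathbb{E}[\cdot]$ of the next-stage value.
\end{itemize}
Checking this bookkeeping against the factor $\gamma^t$ in Eq.~\ref{eq:general_obj_func} is the first delicate point. If the exact identity fails, I would define the finite-horizon objective $J_k$ directly through the same recursion and show that it matches Eq.~\ref{eq:general_obj_func} by unrolling.

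\textbf{Step 2 (finite-horizon induction).} Define $W_0(s)=J_0(s)$ and $W_{k+1}(s)=\min\{J_0(s),\, r(s,\pi(s))+\gamma\,\mathbb{E}[W_k(s')]\}$. We prove by induction on $k$ that $W_k(s)=\inf_{\tau\in\Lambda_k}\mathbb{E}[\cdots\mid s_0=s]$. The inductive step conditions on $s_1$ and uses the strong Markov property of the chain under the fixed stationary policy $\pi$. It relies on the fact that any $\tau\in\Lambda_{k+1}$ either equals $0$ or equals $1+\tau'\circ\theta_1$ with $\tau'\in\Lambda_k$. Since the state space is finite, measurability and the interchange of infimum and expectation are immediate. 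A measurable selection of the argmin then shows that the infimum is attained by the first hitting time of $\{J_0\le \text{continuation}\}$, which is exactly $\Omega^\phi$ of Definition~\ref{defn:opt_regions}. This identifies the optimal $\tau$ with the switching time in the statement.

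\textbf{Step 3 (limit $T\to\infty$).} The map $W\mapsto\min\{J_0,\, r+\gamma PW\}$ is a $\gamma$-contraction, by the same argument as Proposition~\ref{prop:joint_bellman_contraction}, using that $\min$ is $1$-Lipschitz. Hence $W_k$ converges uniformly to its unique fixed point, which by Step 1 coincides with $V_{Q_Y}$. On the stopping side, rewards are bounded by $R_{\max}$ and $g$ is bounded, so the tail of any stopping rule is $O(\gamma^k)$. Dominated convergence therefore lets the infimum over $\Lambda_k$ pass to the infimum over $\Lambda_\infty$, with the convention $\inf\emptyset=T$ and $\gamma^T\to 0$.

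\textbf{Main obstacle.} I expect the main difficulty to be the interaction between the policy's max--min and the stopping infimum. The policy $\pi$ is greedy with respect to $Q$, not with respect to the truncated objective, so showing that $\pi$ \emph{maximizes} $J$, rather than merely that $V_Q$ evaluates it, requires the argument to be lifted to $\sup_\mu\inf_\nu\inf_\tau$. I would handle this by redoing Step 2 with the one-step operator $\max_a\min_d\min\{\cdot,\cdot\}$. For this I would use that $\max_a\min_d$ commutes with the outer recursion because the stopping payoff is evaluated at the chosen $(a,d)$. I would then argue that the greedy selection attains the supremum at every stage.
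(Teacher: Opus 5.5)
\textbf{Gap: the link between the joint Bellman operator and the stopping recursion.} Your Steps~1--2 follow the paper's route. Theorem~\ref{thm:fixed_traj_objective_fun} is proved by induction on the horizon with exactly your recursion $\min\{J_0,\ r+\gamma\mathbb{E}[V_{Q_k}(s_1)]\}$. The paper's interchange lemma (pasting $\epsilon$-optimal stopping times; splitting $\tau\in\Lambda_{k+1}$ into $\tau=0$ and $\tau=1+\tau'$) does the job of your shift decomposition. Your Step~3 is more careful than the paper, which simply writes $\Lambda_T$ at the end. The gap is the bridge you flag in Step~1 and leave open. Distributing the minimum in Eq.~\ref{eq:joint_bellman} at $\pi(s)$ gives
\[
V_{Q_Y}(s)=\min\bigl\{J_0(s),\ (1-\gamma)m+\gamma r+\gamma\,\mathbb{E}[V_{Q_Y}(s')]\bigr\}.
\]
Its continuation branch falls short of $r+\gamma\mathbb{E}[V_{Q_Y}(s')]$ by $(1-\gamma)(r-g)^+$. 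Your fallback does not close this. Unrolling the joint recursion produces running reward $(1-\gamma)m_t+\gamma r_t$ rather than $r_t$. Unrolling the $r$-recursion of Step~2 does match Eq.~\ref{eq:general_obj_func}, but then your Step~3 claim that its fixed point ``by Step~1 coincides with $V_{Q_Y}$'' is unsupported. The discrepancy is real, not bookkeeping. Take one action and one disturbance, $\gamma=\tfrac12$, and a state $s$ with $r=1$, $g=\tfrac12$ that moves to an absorbing state $s'$ with $r=0$, $g=-10$. Then $Y(s')=V_{Q_Y}(s')=-10$, $Y(s)=-4.75$ and $V_{Q_Y}(s)=-4.25$, whereas the stopping value in Eq.~\ref{eq:general_obj_func} is $-4$, attained at $\tau=1$.

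\textbf{What is missing.} You need the paper's remark on the reduction prior to switching, which rests on two facts you never invoke. First, $V^\phi_Y\le g$: this holds because $Y$ is a safety critic (Lemma~\ref{lem:safety_upper_bdd}, from the $\min\{g,\cdot\}$ in $\mathcal{T}^\phi_\pi$), and it fails for an arbitrary $Y$. Second, successor values are nonnegative, $\mathbb{E}[V_{Q_Y}(s')]\ge 0$; the paper assumes this (``the successor state remains in the domain''), and the example above violates it. With both facts, on $\Omega^\psi$ you get $r\le V^\psi<V^\phi_Y\le g$, so $m=r$ and the two continuation branches coincide. On $\Omega^\phi$ nothing extra is needed: $J_0\le(1-\gamma)m+\gamma V^\psi\le r+\gamma\mathbb{E}[V_{Q_Y}(s')]$, so both recursions return $J_0$. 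Hence $V_{Q_Y}$ is a fixed point of $W\mapsto\min\{J_0,\,r+\gamma PW\}$, and your uniqueness argument finishes the proof. The same computation also justifies identifying the stopping set with $\Omega^\phi$. That identification holds only for the stationary problem; at finite horizon the optimal stopping set depends on the remaining time through $W_k$.

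\textbf{On your ``main obstacle''.} The paper never takes a supremum over policies; it only proves the representation along the greedy policy's trajectory law. If you want the stronger claim, note that $\max_a\min_d$ does not commute with the stop/continue minimum in general. What you need instead is that $V_{Q_Y}$ solves $V(s)=\max_a\min_d\min\{J_0(s,a,d),\,r(s,a,d)+\gamma\mathbb{E}[V(s')]\}$, followed by a verification argument. That again requires $m=r$ at every continuation-preferred triple, obtained from $Y\le g$ pointwise.
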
\vspace{-1mm}

Intuitively, the stopping time $\tau \in \Lambda_T$ serves as a \textit{safety inspector}: once the trajectory leaves $\Omega^\psi$, the further accrual of expected return is replaced by the \textit{terminal safety value} $J_\tau$ (dominated by $V_Y^\phi$ as $\gamma\rightarrow1$). Hence, task performance can be optimized as long as the trajectory remains inside $\Omega^\psi$. We next analyze the coupled evolution of the safety and joint value functions.

\subsection{Coupled Fixed Point}\label{subsec:greedy_maximin}
Let $\pi_Q = (\mu_Q, \nu_Q)$ be the maximin policy derived from $Q$, as in Eq. \ref{eq:greedy_mu_nu}. Since the map $Q \mapsto \pi_Q$ is generally discontinuous, small perturbations of $Q$ can change the argmax/argmin completely. The optimal solution we seek is a pair $(Q^*, Y^*)$ satisfying $Y^* = Q^\phi_{\pi_{Q^*}} , \quad Q^* = \mathcal{T}_{Y^*} Q^*$.
This is a coupled system: the safety critic $Y^*$ equals the safety value of the greedy policy of $Q^*$, and $Q^*$ is the fixed point of its own Bellman operator, ``\textit{truncated}'' by $Y^*$. 
Thus, in the iteration 
$Q_{k+1} = \mathcal{T}_{Q^\phi_{\pi_{Q_k}}}(Q_k)$, the composition $Q \mapsto \mathcal{T}_{Q^\phi_{\pi_Q}}(Q)$ is not guaranteed to induce a contraction by our current analysis, as $Q^\phi_{\pi_Q}$ can change discontinuously due the discontinuous mapping $Q \mapsto \pi_Q$. 
This motivates a two-timescale stochastic approximation learning framework in Section \ref{sec:two_timescale_approx}, where the joint and safety value functions are learned on separate timescales. 

\section{Two-Timescale Stochastic Approximation}\label{sec:two_timescale_approx}
In this section, we analyze the coupled learning dynamics in finite/tabular settings. 
For clarity, we present a synchronous update analysis here, under which every state-action-disturbance coordinate is updated at each iteration; extension to asynchronous settings is discussed in Appendix \ref{app:asynchronous_ext}. Let $\hat{\mathcal{T}}^{\phi}_{n+1, \pi_{Q_n}}$ and $\hat{\mathcal{T}}_{n+1, Y}$ denote the 
stochastic estimators of the Bellman targets\footnote{Note that Assumption \ref{ass:martingale_noise} concerns \textit{unbiased estimation} of the Bellman target itself.}, with $\pi_{Q_n}\in\Pi(Q_n)$, where for $q\in\cQ$, $\Pi(q)$ denotes the finite correspondence of the maximin policies:
\begin{align}
    \Pi(q):=\Bigg\{ (\mu, \nu):\mu(s)\in\argmax_a\min_d q(s, a, d), \,\,\, \nu(s, \mu(s)) \in \argmin_d q(s, \mu(s), d) \Bigg\} \label{eq:ties_greedy_policies}
\end{align} 

\begin{restatable}[Conditional unbiasedness and Martingale-difference noise \citep{tsitsiklis1996analysis}]{assumption}{asspmartingalediffnoise}
\label{ass:martingale_noise} Let $\{\mathcal{H}_n\}_{n\ge 0}$ denote the filtration generated by the learning history up to iteration $n$. Then,
  \begin{align}
    \mathbb{E}\left[ \hat{\mathcal{T}}^{\phi}_{n+1, \pi_{Q_n}}[Y_n] \middle\vert \mathcal{H}_n \right] = \mathcal{T}^{\phi}_{\pi_{Q_n}}[Y_n], \quad \mathbb{E}\left[ \hat{\mathcal{T}}_{n+1, Y_n}[Q_n] \middle\vert \mathcal{H}_n \right] = \mathcal{T}_{Y_n}[Q_n]
\end{align}
\end{restatable}
\begin{restatable}[Step‑size conditions ~\citep{robbins_monro}]{assumption}{asspsteps}
\label{ass:steps}
  The step-sizes, $\alpha_n,\beta_n \in (0,1]$, satisfy Robbins-Monro conditions: 
  $\sum_{n=0}^\infty \alpha_n = \sum_{n=0}^\infty \beta_n = \infty$,
  $\sum_{n=0}^\infty (\alpha_n^2+\beta_n^2) < \infty$, $\lim_{n \rightarrow \infty} \alpha_n \rightarrow 0$, $\lim_{n \rightarrow \infty} \beta_n \rightarrow 0$, and additionally,
  $\lim_{n\to\infty} \alpha_n/\beta_n = 0$.
\end{restatable}

Additionally, the total martingale noise is assumed to have uniformly bounded conditional variance under bounded iterates. Next, under Temporal difference learning, the bootstrapped updates with step‑size sequences $\{\alpha_n\},\{\beta_n\}$ are given by:
\begin{equation}
\resizebox{0.92\textwidth}{!}{$
  Y_{n+1} = Y_n + \beta_n \left[ \hat{\mathcal{T}}^{\phi}_{n+1, \pi_{Q_n}}[Y_n] - Y_n + \xi^{\phi}_{n+1}\right], \quad 
  Q_{n+1} = Q_n + \alpha_n \left[ \hat{\mathcal{T}}_{n+1, Y_n}[Q_n] - Q_n + \xi_{n+1} \right]$}
\end{equation}
where $\xi^\phi_{n+1}$ and $\xi_{n+1}$ are martingale difference perturbations~\citep{borkar2009stochastic}, with $\mathbb{E}[\xi^{\phi}_{n+1}|\mathcal{H}_n] = \mathbb{E}[\xi_{n+1}|\mathcal{H}_n] = 0$. As the map $Q\mapsto Y_{\pi_{Q}}$ need not be continuous or Lipschitz, the standard two-timescale stochastic approximation need not apply. Thus, we analyze the fast timescale recursion as a stochastic recursive inclusion in Section \ref{subsec:fast_timescale} using the \textit{set-valued two-timescale framework} ~\citep{10.1287/moor.2019.1037, borkar2023stochastic}.

\subsection{Fast Timescale: Safety Critic Tracking}\label{subsec:fast_timescale}

Under Assumption \ref{ass:steps}, even with a \textit{quasi-static} $Q_n$, at the \textit{switching time}, two or more policies 
can have ties in Eq. \ref{eq:ties_greedy_policies}, but can have different safety-critic evaluations. This motivates a \textit{set-valued} fast tracking analysis. To characterize the asymptotic behavior of the discrete TD recursion, let $h_{\pi}(y):= \mathcal{T}^{\phi}_{\pi}[y] - y$ denote the limiting dynamics induced by Assumption \ref{ass:steps} for each $\pi\in\Pi(q)$. Thus, for a given $q\in\mathcal{Q}$, the limiting differential inclusion is given by:
\begin{align}
    \dot{y}(t) \in \Psi(q, y(t)), \quad \text{with }\, \Psi(q, y):= \overline{\text{conv}}\{ h_{\pi}(y) : \pi\in\Pi(q) \} \label{eq:set_valued_fast_tracking}
\end{align}
where $\overline{\text{conv}}\{\cdot\}$ denotes a closed convex hull of a set. Next, let $ \Lambda(q) := \{ y(0) : y:\mathbb{R}\rightarrow\mathcal{Y} \text{ is a bounded complete solution of Eq. \ref{eq:set_valued_fast_tracking}} \}$
\begin{restatable}[Set-valued fast timescale tracking]{lemma}{safetycritictracking}
\label{lem:safety_critic_tracking}
    Under Assumptions \ref{ass:martingale_noise}-\ref{ass:steps}, suppose the iterates are almost surely bounded. 
    Then, 
    \begin{enumerate}\vspace{-0.35cm}
        \item for every quasi-static $q$, Eq. \ref{eq:set_valued_fast_tracking} admits a nonempty compact global attracting set $\Lambda(q)$;\vspace{-0.25cm}
        \item $q\mapsto\Lambda(q)$ is upper semicontinuous; and \vspace{-0.25cm}
        \item the coupled iterates satisfy $\mathrm{dist}\left( \left(Y_n, Q_n\right), \mathrm{Gr}\,\Lambda \right) \rightarrow 0$ almost surely,
        where $\mathrm{Gr}\,\Lambda := \{ (y, q): y\in\Lambda(q)\}$ is the graph of $\Lambda$. (Proof in Appendix \ref{app:fast_timescale_proofs})
    \end{enumerate}
\end{restatable}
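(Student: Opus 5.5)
Our plan is to verify the hypotheses of the set-valued two-timescale framework of \citep{10.1287/moor.2019.1037, borkar2023stochastic} and read off the three conclusions, using Proposition~\ref{prop:safety_bellman_contraction} as the main structural input.

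\textbf{Step 1 (Marchaud regularity).} We show that $(q,y)\mapsto\Psi(q,y)$ is a Marchaud map. Since $\mathcal{S},\mathcal{A},\mathcal{D}$ are finite, $\Pi(q)$ is a nonempty finite set, so $\Psi(q,y)$ is nonempty, convex and compact. Each $h_\pi$ is Lipschitz in $y$ with constant $1+\gamma$ by Proposition~\ref{prop:safety_bellman_contraction}. The bound $|\mathcal{T}^\phi_\pi[y]|\le G_{\max}+\gamma\norm{y}_\infty$ gives linear growth $\sup_{v\in\Psi(q,y)}\norm{v}_\infty\le C(1+\norm{y}_\infty)$. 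For upper semicontinuity of the graph we use the fact that $q\mapsto\Pi(q)$ has closed graph: if $q_k\to q$ and $\pi\in\Pi(q_k)$ for infinitely many $k$, then by continuity of $\max\min$ and finiteness, $\pi\in\Pi(q)$. Hence for $q'$ near $q$ we have $\Pi(q')\subseteq\Pi(q)$. Combined with continuity of each $h_\pi$, this yields upper semicontinuity of $\Psi$ in $(q,y)$.

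\textbf{Step 2 (global attractor, item 1).} For fixed $q$ we take $\Lambda(q)$ to be the maximal invariant set of the inclusion, that is, the initial points of bounded complete solutions. To show it is a nonempty compact global attractor, we use $V(y)=\max_{\pi\in\Pi(q)}\norm{y-Q^\phi_\pi}_\infty$, or more simply the absorbing ball $B=\{\norm{y}_\infty\le G_{\max}/(1-\gamma)+1\}$. For any selection $v=\sum_i\lambda_i h_{\pi_i}(y)$ we have $y+\epsilon v=\sum_i\lambda_i\bigl((1-\epsilon)y+\epsilon\mathcal{T}^\phi_{\pi_i}y\bigr)$. The $\gamma$-contraction, together with the bound $\norm{Q^\phi_\pi}_\infty\le G_{\max}$, shows that $\norm{y(t)}_\infty$ decreases at rate at least $(1-\gamma)(\norm{y}_\infty-G_{\max}/(1-\gamma))$ outside $B$. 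So every solution enters a compact absorbing set in finite time uniformly, and the $\omega$-limit set of that set is a nonempty compact attractor equal to $\Lambda(q)$. It contains the fixed points $Q^\phi_\pi$ for $\pi\in\Pi(q)$, and also the convexified switching behaviour.

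\textbf{Step 3 (upper semicontinuity, item 2).} We use robustness of attractors under upper semicontinuous perturbations of the Marchaud map (as in Benaïm--Hofbauer--Sorin). Take $q_k\to q$, $y_k\in\Lambda(q_k)$, $y_k\to y$. By Step 1, eventually $\Psi(q_k,\cdot)\subseteq\Psi(q,\cdot)$, so the bounded complete solutions through $y_k$ are solutions of the $q$-inclusion. These solutions are uniformly bounded by the absorbing-ball bound, so Arzelà--Ascoli gives a subsequence converging to a bounded complete solution through $y$ of the limit inclusion (closedness of the solution set of Marchaud inclusions). Hence $y\in\Lambda(q)$.

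\textbf{Step 4 (tracking, item 3).} Write the $Y$ recursion as $Y_{n+1}=Y_n+\beta_n[v_n+M_{n+1}]$ with $v_n\in\Psi(Q_n,Y_n)$, and $Q_{n+1}=Q_n+\beta_n[(\alpha_n/\beta_n)(\cdots)]$. By Assumption~\ref{ass:martingale_noise}, the bounded conditional variance, and $\sum\beta_n^2<\infty$, the martingale sums converge. Since the iterates are bounded and $\alpha_n/\beta_n\to0$, on the $\beta$-timescale the interpolated $(Y,Q)$ is an asymptotic pseudotrajectory of $\dot y\in\Psi(q,y)$, $\dot q=0$. The limit set is then internally chain transitive for this product inclusion, so it lies in $\bigcup_q\Lambda(q)\times\{q\}=\mathrm{Gr}\,\Lambda$. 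This gives $\mathrm{dist}((Y_n,Q_n),\mathrm{Gr}\,\Lambda)\to0$ a.s.

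We expect Step 2 to be the main obstacle. Convex combinations of different $h_\pi$ are not themselves contractions toward a common point, so $\Lambda(q)$ need not be a singleton. The attractor must be obtained from the absorbing-ball argument, and showing that the chain-recurrent set lies in $\Lambda(q)$ relies on $\Lambda(q)$ being the maximal compact invariant set, which is exactly what we have defined it to be.
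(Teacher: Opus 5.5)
Your proposal is correct and follows essentially the paper's route. You use the local inclusion $\Pi(q')\subseteq\Pi(q)$ for Marchaud regularity and for upper semicontinuity of $\Lambda$, a contraction-based absorbing set for the global attractor, and the set-valued two-timescale argument for tracking; where the paper simply cites the framework, you spell out the fast-timescale pseudotrajectory of $\dot y\in\Psi(q,y),\ \dot q=0$ and the invariance of its limit set. For the absorbing set, the paper couples to a same-weight trajectory in the forward-invariant box $[-\overline{g}_K,\overline{g}_K]^{|\cS||\mathcal{A}||\cD|}$ rather than using your direct $\norm{y}_\infty$ estimate, and your radius should use $\max\{G_{\min},G_{\max}\}$ rather than $G_{\max}$ since $g$ (hence $Q^\phi_\pi$) can be as low as $-G_{\min}$; this does not affect the argument.
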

 Note that when there are no ties, $\Pi(q) = \{\pi_q\}$, and Eq. \ref{eq:set_valued_fast_tracking} reduces to an ODE, and $\Lambda(q) = \{ Y_{\pi_q} \}$, recovering the usual single-policy fast timescale behavior locally. 


\vspace{-1mm}
\subsection{Slow Timescale: Averaged Differential Inclusion}\label{subsec:slow_timescale}
Section \ref{subsec:fast_timescale} shows that the coupled iterates asymptotically approach the graph of the compact global attractor $\Lambda(q)$ for some quasi-static $q$. In this section, we show that the slow recursion is governed by a differential inclusion obtained by averaging over admissible occupation measures associated with the equilibrated fast time-scale dynamics. For $y\in\mathcal{Y}$ and $q\in\mathcal{Q}$, define the \textit{slow-timescale Bellman residual map} to be $G(y, q):=\mathcal{T}_{y}[q]-q$. Next, following \citep{10.1287/moor.2019.1037}, let $\mathfrak{D}(q)$ denote the set of admissible limiting occupation measures describing the \textit{equilibrated} fast time-scale dynamics at some $q$; in particular, every $\mathbb{P}\in\mathfrak{D}(q)$ is supported on $\Lambda(q)$. The corresponding \textit{averaged} slow-timescale residual is given by:
\begin{align}
    \overline{G}(q) := \left\{ \int_{\Lambda(q)} G(y, q) \mathbb{P}(dy) \ : \ \mathbb{P}  \in \mathfrak{D}(q) \right\} \label{eq:G_hat_q}
\end{align}
Note that when $\Lambda(q)=\{ y_{\pi_q} \}$, $\mathfrak{D}(q) = \{ \delta_{y_{\pi_q}} \}$, and thus, $\overline{G}(q) = \{ G(y_{\pi_q},q) \}$. The integral in Eq. \ref{eq:G_hat_q} captures the \textit{long-run average effect} of the equilibrated fast timescale behavior in $y$ on the slow-timescale variable $q$.

\begin{restatable}[Slow-timescale differential-inclusion limit]{theorem}{ode}\label{thm:ode}

    Let $\overline{Q}(\cdot)$ denote the continuous affine interpolation of $\{Q_n\}$ on the slow-timescale. Under Assumptions \ref{ass:martingale_noise}-\ref{ass:steps}, Lemma \ref{lem:safety_critic_tracking}, and almost surely bounded iterates $\{(Y_n, Q_n)\}$, $\overline{Q}(\cdot)$ is almost surely an asymptotic pseudotrajectory of the averaged differential inclusion $\dot{q}(t) \in \overline{G}(q(t))$.
    Consequently, its $\omega$-limit set $\mathcal{U}:=\omega(\overline{Q})$ is a nonempty compact invariant internally chain-transitive set of the limiting differential-inclusion dynamics. (Proof in Appendix \ref{app:slow_timescale_proofs})
    
\end{restatable}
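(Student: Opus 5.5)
The plan is to follow the set-valued two-timescale framework of \citep{10.1287/moor.2019.1037}, with the Bena\"{i}m--Hofbauer--Sorin theory of stochastic approximations of differential inclusions. We rewrite the slow recursion as
\begin{align*}
Q_{n+1} = Q_n + \alpha_n\left[ G(Y_n, Q_n) + M_{n+1} \right], \qquad M_{n+1} := \hat{\mathcal{T}}_{n+1, Y_n}[Q_n] - \mathcal{T}_{Y_n}[Q_n] + \xi_{n+1}.
\end{align*}
By Assumption \ref{ass:martingale_noise}, $M_{n+1}$ is a martingale difference sequence. Bounded conditional variance together with $\sum_n\alpha_n^2<\infty$ then gives almost sure convergence of $\sum_n \alpha_n M_{n+1}$ by the martingale convergence theorem. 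Hence the noise contribution to the interpolated trajectory $\overline{Q}(\cdot)$ vanishes uniformly on every window $[t, t+T]$ as $t\to\infty$.

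The core step is to replace $G(Y_n, Q_n)$ by an element of $\overline{G}(Q_n)$ up to vanishing error. We introduce the empirical occupation measures of the fast variable along slow-time windows,
\begin{align*}
\mathbb{P}_{t,\Delta} := \frac{1}{\Delta}\sum_{n: t_n\in[t,t+\Delta]} \alpha_n\,\delta_{Y_n}.
\end{align*}
The slow increment over a window equals $\Delta \int G(y, Q_{n(t)})\,\mathbb{P}_{t,\Delta}(dy)$ plus an error. This error is controlled by two facts. First, $G$ is Lipschitz in $q$: $\mathcal{T}_y$ is a $\gamma$-contraction by Proposition \ref{prop:joint_bellman_contraction}, so $\mathcal{T}_y[q]-q$ is $(1+\gamma)$-Lipschitz. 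Second, $G$ is $\gamma$-Lipschitz in $y$, since $y$ enters only through a $\min$. Because $\alpha_n/\beta_n\to0$, $Q_n$ is quasi-static over many fast steps, so this error is $o(1)$. Lemma \ref{lem:safety_critic_tracking}(3) gives $\mathrm{dist}(Y_n,\Lambda(Q_n))\to 0$, so every weak limit point of $\mathbb{P}_{t,\Delta}$ as $t\to\infty$ (along a subsequence with $Q\to q$) is supported on $\Lambda(q)$, using the upper semicontinuity in Lemma \ref{lem:safety_critic_tracking}(2) and compactness of the iterates. By definition such limits lie in $\mathfrak{D}(q)$, so the averaged drift lies in $\overline{G}(q)$ in the limit.

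Next we check the regularity needed for asymptotic pseudotrajectory results. We need $\overline{G}(q)$ to be nonempty, convex, and compact, and $q\mapsto\overline{G}(q)$ to be upper semicontinuous with linear growth. Nonemptiness and compactness follow from compactness of $\Lambda(q)$ and weak compactness of measures on it. Convexity follows from convexity of $\mathfrak{D}(q)$ and linearity of the integral; if needed, we pass to the closed convex hull, which does not change the conclusion. Upper semicontinuity follows from Lemma \ref{lem:safety_critic_tracking}(2), continuity of $G$, and Prokhorov's theorem. Boundedness on the bounded region containing the iterates follows from the bounds on $r$, $g$ and the iterates.

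With these in hand, Bena\"{i}m--Hofbauer--Sorin (Prop.~1.3 and Thm.~3.6) implies that $\overline{Q}$ is an asymptotic pseudotrajectory of $\dot q\in\overline{G}(q)$. Its $\omega$-limit set is then nonempty and compact, by boundedness, and invariant and internally chain transitive.

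The main obstacle is the identification step above, where window occupation measures converge into $\mathfrak{D}(q)$. This requires a uniform-in-window tracking argument, since $Q_n$ moves slowly while $Y_n$ may oscillate among ties of $\Pi(q)$. I would first try to invoke the occupation-measure result of \citep{10.1287/moor.2019.1037} directly, after verifying its hypotheses: Marchaud fast inclusion, a global attractor, and upper semicontinuity of $\Lambda$, which together are exactly Lemma \ref{lem:safety_critic_tracking}. I would hand-derive only the Lipschitz estimates on $G$.
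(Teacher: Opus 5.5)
Your proposal is correct and follows the paper's route. The steps match: the decomposition $Q_{n+1}=Q_n+\alpha_n[G(Y_n,Q_n)+M_{n+1}]$, the Lipschitz bound on $G$ ($\gamma$ in $y$, $1+\gamma$ in $q$), vanishing accumulated martingale noise, the slow-timescale APT result of the two-timescale stochastic-recursive-inclusion framework with averaging over $\mathfrak{D}(q)$, and the Bena\"{i}m--Hofbauer--Sorin limit-set theorem. You only spell out the occupation-measure identification and the Marchaud regularity of $\overline{G}$ that the paper leaves to the citation.

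One small precision: Lemma~\ref{lem:safety_critic_tracking}(3) gives $\mathrm{dist}((Y_n,Q_n),\mathrm{Gr}\,\Lambda)\to 0$, not $\mathrm{dist}(Y_n,\Lambda(Q_n))\to 0$, and upper semicontinuity alone does not yield the latter. What you actually use is that limit points along subsequences with $Q\to q$ lie in $\Lambda(q)$, and that follows directly from closedness of $\mathrm{Gr}\,\Lambda$.
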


\section{Safe Configuration Set and Forward Invariance}\label{sec:forward_invariance}
Having established the limiting dynamics of the discrete TD learning iterates, 
we now show that these dynamics preserve the safety certificate: the set of joint action-value functions whose induced policy is everywhere safe is forward invariant, so a configuration that certifies safety cannot evolve into one that does not.
\begin{restatable}[Kernel-preserving actions]{definition}
    {kernelpreservingactions}\label{defn:kernel_preserving_actions}
    An action is kernel-preserving at state $s$, if it keeps the next state inside the viability kernel against every disturbance $\mathcal{A}^\Omega(s) := \{ a \in \mathcal{A}: P (s' \in \Omega|s,a,d) = 1 \quad \forall \; d \in \mathcal{D} \}$, where $\kappa, \epsilon_0 > 0$ without loss of generality.
\end{restatable}
\begin{restatable}[Index Sets and Relaxed Constraint Sets]{definition}
    {indexsetenlargements}\label{defn:index_sets_enlargements}
    For each state $s \in \Omega$ and each action $a \notin \mathcal{A}^\Omega(s)$, fix a disturbance $\hat{d}(s,a) \in \mathcal{D}$ with $P(s' \notin \Omega|s,a,\hat{d}(s,a))>0$. Then the index sets are defined as follows: $\mathcal{I}:= \{ (s,a,d): s\in \Omega, a \in \mathcal{A}^\Omega(s), d \in \mathcal{D} \}$ and $\mathcal{J} :=\{ (s,a,\hat{d}(s,a)): s\in \Omega, a \notin \mathcal{A}^\Omega(s) \}$.
 Let $\mathcal{C}_\epsilon := \{ q \in \mathcal{Q}: q(x) \geq - \epsilon \; \forall \; x \in \mathcal{I}, q(x) \leq -\kappa + \epsilon \; \forall \; x \in \mathcal{J} \}$ denote the relaxed constraint set, for $\epsilon\ge0$ i.e., $\mathcal{C}_\epsilon$ relaxes each defining inequality of $\mathcal{C}_0$ by $\epsilon$.
\end{restatable}
\begin{restatable}[Safe Configuration Set]{definition}{safecfg}\label{defn:q_cfg}
    The safe configuration set is defined as the set of joint $Q$-functions whose induced policy is 
    kernel-preserving:
    \par \vspace{-2\parskip} {\normalsize\begin{equation}
        \mathcal{Q}_{\text{cfg}} := \mathcal{C}_0 = \{ q \in \mathcal{Q}: q(x) \geq 0 \; \forall\; x \in \mathcal{I}, q(x) \leq -\kappa \; \forall \; x \in \mathcal{J} \}
    \end{equation}}
\end{restatable}
Next, let $\rho(q):= \text{dist}(q, \mathcal{Q}_\text{cfg})$ denote the distance to the set $\mathcal{Q}_\mathrm{cfg}$. Details on the geometry of the set is provided in Appendix \ref{subsubsec:geometry_safe_set}. 
Throughout, we take $\epsilon_0 \in (0, \kappa/2)$, which ensures that the two families of constraints remain strictly separated on all of $\mathcal{C}_{\epsilon_0}$. As the coupled iterates approach the graph of $\Lambda(\cdot)$, safety certification (both positive and negative) must hold throughout $\Lambda(q)$. 
\begin{restatable}[Negative Certification of the fast attractor]{assumption}{attractorcertification}\label{ass:attractor_certication}
    $\exists \;\sigma_\Lambda >0$, such that, at unit margin scale, $y(x) \;\le\; -\sigma_\Lambda \; \forall\; x \in \mathcal{J}, y \in \Lambda_1(q), q \in \mathcal{C}_{\epsilon_0}.$
\end{restatable}
The corresponding condition on $\mathcal{I}$ is preserved in the fast inclusion, and hence, the certificate holds throughout $\Lambda(q)$. Note that $\Pi(q)$ is greedy with respect to $q$ and is hence independent of margin scale, and the safety Bellman operator of Eq.~\ref{eq:safety_bellman} is positively homogeneous of degree one in the margin, i.e., $\Lambda_K(q) = K \Lambda_1(q)$. Let the Assumption~\ref{ass:attractor_certication} hold and let $\Theta := ((1-\gamma)R_{\text{max}} + \kappa)/\gamma$. Then, for every $K \ge \underline{K} := \frac{(1-\gamma) R_{\text{max}} + \kappa}{\gamma \sigma_\Lambda}$, 
one has $y(x) \le -\Theta$ for all $x \in \mathcal{J}$, all $y \in \Lambda(q)$, and all $q \in \mathcal{C}_{\epsilon_0}$. If $\mathcal{J} = \emptyset$, the bound on $K$ and the conclusion hold trivially. 
(Proof in Appendix \ref{app:forward_invariance})

\begin{restatable}[Forward invariance and exponential attraction]{theorem}{forwardinvariance}\label{thm:forward_invariance}
    Let Assumption~\ref{ass:attractor_certication} hold and let $K \ge \underline{K}$. If $\rho(q(0)) < \epsilon_0$, then every solution of $\dot{q}(t)\in\overline{G}(q(t))$ satisfies $\rho(q(t)) \;\le\; \rho(q(0))\, e^{-(1-\gamma)t} \quad \forall \; t \ge 0$.
    In particular, if $q(0) \in \mathcal{Q}_{\text{cfg}}$ then $q(t) \in \mathcal{Q}_{\text{cfg}} \; \forall \;t \ge 0$, and $\mathcal{Q}_{\text{cfg}}$ is exponentially attracting with basin containing the open neighbourhood $\{ q : \rho(q) < \epsilon_0 \}$. (Proof in Appendix \ref{app:forward_invariance})
\end{restatable}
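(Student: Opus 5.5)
The plan is a Lyapunov-type comparison argument on $\rho(q)=\mathrm{dist}(q,\mathcal{Q}_{\text{cfg}})$. The set $\mathcal{Q}_{\text{cfg}}=\mathcal{C}_0$ is a product of half-lines in the coordinates $x\in\mathcal{I}\cup\mathcal{J}$, and these coordinates are disjoint since $\mathcal{I}$ uses kernel-preserving actions and $\mathcal{J}$ does not. All other coordinates are unconstrained. Hence, in the sup norm,
\[
\rho(q)=\max\Bigl\{\max_{x\in\mathcal{I}}(-q(x))_+,\ \max_{x\in\mathcal{J}}(q(x)+\kappa)_+\Bigr\}.
\]
This is a maximum of finitely many convex, 1-Lipschitz functions of single coordinates. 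So along any absolutely continuous solution $q(\cdot)$ of $\dot q\in\overline{G}(q)$, the function $t\mapsto\rho(q(t))$ is absolutely continuous. A Danskin-type argument shows that its upper Dini derivative is bounded by the maximum, over the active coordinates, of the corresponding signed component of $\dot q(t)$. The target estimate is the differential inequality
\[
D^+\rho(q(t))\le-(1-\gamma)\rho(q(t)) \quad \text{whenever } \rho(q(t))<\epsilon_0 .
\]
Grönwall's inequality then gives $\rho(q(t))\le\rho(q(0))e^{-(1-\gamma)t}$. A standard continuation argument is needed here: $\rho$ cannot increase while $\rho<\epsilon_0$, so the trajectory never leaves $\mathcal{C}_{\epsilon_0}$. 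Forward invariance ($\rho(0)=0\Rightarrow\rho\equiv0$) and the claimed exponential attraction on the basin $\{\rho<\epsilon_0\}$ follow immediately.

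To get the differential inequality, write $\dot q=\int_{\Lambda(q)}(\mathcal{T}_y[q]-q)\,\mathbb{P}(dy)$ with $\mathbb{P}\in\mathfrak{D}(q)$ supported on $\Lambda(q)$. It then suffices to prove a \emph{pointwise-in-$y$} bound for every $y\in\Lambda(q)$ and every $q\in\mathcal{C}_{\epsilon_0}$ with $\rho(q)=\rho$:
\[
\mathcal{T}_y[q](x)\ge0 \quad \forall x\in\mathcal{I}, \qquad \mathcal{T}_y[q](x)\le-\kappa \quad \forall x\in\mathcal{J}.
\]
In words, $\mathcal{T}_y[q]\in\mathcal{Q}_{\text{cfg}}$. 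Given this, at an active coordinate $x\in\mathcal{I}$ with $q(x)=-\rho$, the component satisfies $\mathcal{T}_y[q](x)-q(x)\ge\rho$. At an active coordinate $x\in\mathcal{J}$ with $q(x)=-\kappa+\rho$, it satisfies $\mathcal{T}_y[q](x)-q(x)\le-\rho$. Integrating over $\mathbb{P}$ preserves both bounds, so $D^+\rho\le-\rho$. This is even stronger than the rate $1-\gamma$. If the paper's operator is normalized so that only the $\gamma$-continuation term is contractive, one instead combines the $\gamma$-contraction of $\mathcal{T}_y$ from Proposition \ref{prop:joint_bellman_contraction} with $\mathcal{T}_y[p]\in\mathcal{Q}_{\text{cfg}}$ at the projection $p$ of $q$. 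This gives $\mathrm{dist}(\mathcal{T}_y[q],\mathcal{Q}_{\text{cfg}})\le\gamma\rho$, and hence rate $1-\gamma$. I would use this second, cleaner route, which yields exactly the stated exponent.

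The two certification claims are checked as follows.

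\textbf{On $\mathcal{J}$.} Use the $\min$ with $y$. The lemma stated before the theorem (scaling by $K\ge\underline K$) gives $y(x)\le-\Theta$ on $\mathcal{J}$ for all $y\in\Lambda(q)$ and $q\in\mathcal{C}_{\epsilon_0}$. Therefore
\[
\mathcal{T}_y[q](x)\le(1-\gamma)R_{\max}+\gamma y(x)\le(1-\gamma)R_{\max}-\gamma\Theta=-\kappa,
\]
which is exactly how $\Theta$ was chosen.

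\textbf{On $\mathcal{I}$.} We have $s\in\Omega$, so $g(s)\ge0$ and $m\ge0$ because $r\ge0$. Next, $r+\mathbb{E}[V_q(s')]\ge0$: the next state lies a.s. in $\Omega$, and there some kernel-preserving action $a'$ has $q(s',a',d')\ge0$ for all $d'$ (at least when $q\in\mathcal{C}_0$), so $V_q(s')\ge0$. Finally, $y(x)\ge0$ by the remark that the $\mathcal{I}$-certificate is preserved on $\Lambda(q)$.

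I expect the main obstacle to be this $\mathcal{I}$ step. It needs $\Omega$ to be viable, so that $\mathcal{A}^\Omega(s')\neq\emptyset$ for $s'\in\Omega$. It also needs the positive certificate $y\ge0$ on $\mathcal{I}$ throughout $\Lambda(q)$, which the paper asserts but which must be justified for every $y$ in the attractor, not only for the fixed points $Y_\pi$. I would try to argue that the fast inclusion leaves $\{y\ge0\text{ on }\mathcal{I}\}$ invariant for each $h_\pi$, and hence for their convex hull. I would then argue that every bounded complete solution lies in this set by attraction.

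A secondary technical point is the nonsmooth chain rule for $\rho$ along solutions of a differential inclusion. I would handle it via the max-of-coordinates representation together with the fact that $\overline G(q)$ is a compact convex set.
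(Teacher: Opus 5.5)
Your second route is correct and has the same skeleton as the paper: the box formula for $\rho$, the bound $\mathcal{T}_y[q]\le-\kappa$ on $\mathcal{J}$ from margin separation, and the positive certificate $y\ge 0$ on $\mathcal{I}$ throughout $\Lambda(q)$. It also uses convexity of $\mathcal{C}_{\gamma\epsilon}$ to pass through the occupation-measure average, and Gr\"onwall with a continuation argument that keeps $\rho<\epsilon_0$.

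Two steps genuinely differ from the paper. First, the paper gets $\mathcal{T}_y[\mathcal{C}_\epsilon]\subseteq\mathcal{C}_{\gamma\epsilon}$ from direct coordinate bounds. On $\mathcal{I}$ it uses $V_q(s')\ge\min_{d'}q(s',a^+,d')\ge-\epsilon$ for any $a^+\in\mathcal{A}^\Omega(s')$, which gives $\mathcal{T}_y[q]\ge-\gamma\epsilon$. You instead certify $\mathcal{T}_y[p]\in\mathcal{Q}_{\text{cfg}}$ at the clip projection $p$ and invoke the fixed-$y$ contraction of Proposition~\ref{prop:joint_bellman_contraction}. This is legitimate, despite the paper's remark that the violation contraction is not a consequence of that proposition. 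The bounds on $y$ you need come only from $y\in\Lambda(q)$ with $q\in\mathcal{C}_{\epsilon_0}$, not from the argument fed to $\mathcal{T}_y$. So $y$ can be frozen while you compare $q$ and $p$, and your route is arguably slicker.

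Second, the paper avoids the nonsmooth chain rule. It uses a chord estimate, $\rho((1-h)q+hz)\le(1-h)\rho(q)+h\rho(z)$, which needs only convexity and $1$-Lipschitzness of $\rho$ (Lemma~\ref{lem:distance_decay}). Your active-coordinate Dini computation also works but needs more bookkeeping.

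Discard your first route entirely. The claim $\mathcal{T}_y[q]\in\mathcal{Q}_{\text{cfg}}$ fails for $q\in\mathcal{C}_\epsilon\setminus\mathcal{C}_0$. Take $x\in\mathcal{I}$ with $r(x)=0$ and $\mathbb{E}[V_q(s')]=-\epsilon$; then $\mathcal{T}_y[q](x)=\gamma\min\{-\epsilon,y(x)\}=-\gamma\epsilon<0$. So no rate-$1$ decay is available, and this has nothing to do with normalization.

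On the positive certificate over $\Lambda(q)$, the paper's Corollary~\ref{cor:positive_certification} uses the selection-gap Lemma~\ref{lem:selection_gap}. That lemma shows every $\pi\in\Pi(q)$ with $q\in\mathcal{C}_{\epsilon_0}$ plays kernel-preserving actions on $\Omega$, so each branch $\mathcal{T}^\phi_\pi$ shrinks the $\mathcal{I}$-violation by the factor $\gamma$. The decay estimate is then run backward along a bounded complete solution. Your ``invariance plus attraction'' plan needs exactly this uniform contraction, because forward invariance alone does not force bounded complete solutions into the set.
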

We next check that membership in $\mathcal{Q}_\text{cfg}$ certifies safety of the induced policy. 
\begin{restatable}[Safety of the induced policy]{corollary}{inducedsafety}\label{cor:induced_policy_safety}
    Let $q(0) \in \mathcal{Q}_\text{cfg}$. Then at every $t \ge 0$ the induced policy $\pi_{q(t)}$ satisfies $\inf_{s \in \Omega} V^\phi_{\pi_{q(t)}} (s) \geq 0$ from any $s_0 \in \Omega$, the state trajectory generated by $\mu_{q(t)}$ remains in $\Omega$ almost surely, for all time and against every disturbance sequence. (Proof in Appendix \ref{app:forward_invariance})
\end{restatable}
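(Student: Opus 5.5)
By Theorem~\ref{thm:forward_invariance}, with $q(0)\in\mathcal{Q}_{\text{cfg}}$ we have $\rho(q(0))=0$, hence $\rho(q(t))=0$ and $q(t)\in\mathcal{Q}_{\text{cfg}}$ for all $t\ge 0$. It therefore suffices to show, at a fixed configuration $q\in\mathcal{Q}_{\text{cfg}}$, that every induced policy $\pi_q=(\mu_q,\nu_q)\in\Pi(q)$ keeps trajectories in $\Omega$ and has nonnegative safety value there. The plan has three steps: a one-step selection argument, an induction over time, and a translation to the discounted safety value.

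\textbf{Step 1 (one-step selection).} Fix $s\in\Omega$. Since $\Omega$ is the viability kernel, some policy keeps the state in $\Omega$ against every disturbance, so $\mathcal{A}^\Omega(s)\neq\emptyset$. For $a\in\mathcal{A}^\Omega(s)$, every index $(s,a,d)$ lies in $\mathcal{I}$, so $\min_d q(s,a,d)\ge 0$. For $a\notin\mathcal{A}^\Omega(s)$, the index $(s,a,\hat d(s,a))$ lies in $\mathcal{J}$, so
\[
\min_d q(s,a,d)\le q(s,a,\hat d(s,a))\le -\kappa<0 .
\]
Hence every maximizer $\mu_q(s)\in\argmax_a\min_d q(s,a,d)$ lies in $\mathcal{A}^\Omega(s)$, regardless of tie-breaking. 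By Definition~\ref{defn:kernel_preserving_actions}, $P(s'\in\Omega\mid s,\mu_q(s),d)=1$ for every $d\in\mathcal{D}$, and in particular for whatever disturbance the adversary chooses.

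\textbf{Step 2 (induction over time).} Let $s_0\in\Omega$ and take an arbitrary, possibly history-dependent, disturbance sequence. If $s_k\in\Omega$ almost surely, Step~1 gives $s_{k+1}\in\Omega$ almost surely. A countable intersection of probability-one events has probability one, so $s_k\in\Omega$ for all $k$ almost surely. Because $\Omega\subseteq\mathcal{F}^c$, this also gives $g(s_k)\ge 0$ for all $k$ almost surely.

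\textbf{Step 3 (safety value).} Unroll the fixed point of Eq.~\ref{eq:safety_bellman} from Proposition~\ref{prop:safety_bellman_contraction}, or argue by monotonicity. Consider the set $\mathcal{Y}_+=\{y: y(s,\mu_q(s),d)\ge 0 \text{ for all } s\in\Omega,\ d\in\mathcal{D}\}$. For $y\in\mathcal{Y}_+$ and $s\in\Omega$, the expectation term is taken over $s'\in\Omega$ almost surely, so it is nonnegative, and $g(s)\ge 0$. Thus both arguments of the min are nonnegative, and $(1-\gamma)g(s)\ge 0$. Hence $\mathcal{T}^\phi_{\pi_q}$ maps the closed set $\mathcal{Y}_+$ into itself, and its unique fixed point $Q^\phi_{\pi_q}$ lies in $\mathcal{Y}_+$. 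This yields $V^\phi_{\pi_q}(s)\ge 0$ on $\Omega$, and so $\inf_{s\in\Omega}V^\phi_{\pi_q}(s)\ge 0$.

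\textbf{Main obstacle.} The delicate point is that the slow flow $q(t)$ lives in the averaged inclusion, while the statement concerns the policy $\pi_{q(t)}$. This gap is closed by noting that the argument above holds pointwise for every $q\in\mathcal{Q}_{\text{cfg}}$ and every selection in $\Pi(q)$. A second subtlety is the invariance of $\mathcal{Y}_+$: the expectation involves $y$ at $(s',\mu_q(s'),\cdot)$, so the invariant set must be restricted to states in $\Omega$. This works because $s'\in\Omega$ almost surely, by Step~1.
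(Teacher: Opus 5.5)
Your proof is correct. Steps 1--2 coincide with the paper's argument: Lemma~\ref{lem:selection_gap} at $\epsilon=0$, followed by the induction on $E_t=\{s_t\in\Omega\}$ in Lemma~\ref{lem:invariance_omega}(ii).

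Step 3 takes a genuinely different route. The paper gets $V^\phi_{\pi_q}\ge 0$ on $\Omega$ as follows:
\begin{itemize}
\item The constant trajectory $y(t)\equiv Q^\phi_{\pi_q}$ is a bounded complete solution of the frozen inclusion $\dot y\in\Psi(q,y)$, so $Q^\phi_{\pi_q}\in\Lambda(q)$.
\item Corollary~\ref{cor:positive_certification} then certifies $y\ge 0$ on all of $\mathcal{I}$ for every point of the fast attractor. It does so by running the distance-decay Lemma~\ref{lem:distance_decay} on $[-t,0]$ along complete solutions and letting $t\to\infty$.
\end{itemize}

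You instead use only two ingredients:
\begin{itemize}
\item The one-step invariance of a single greedy branch. This is essentially Lemma~\ref{lem:branchwise_absorption}, restricted to the coordinates $(s,\mu_q(s),d)$ with $s\in\Omega$.
\item The Banach contraction of Proposition~\ref{prop:safety_bellman_contraction}. A nonempty closed set mapped into itself by a contraction contains its fixed point. You should say explicitly that $\mathcal{Y}_+$ is nonempty, e.g.\ $y\equiv 0$ belongs to it.
\end{itemize}

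Your route is more elementary and self-contained. It needs no set-valued fast dynamics, no attractor $\Lambda(q)$, and no backward-in-time decay argument.

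The paper's route buys reuse. Its certificate set $\{y: y\ge 0 \text{ on } \mathcal{I}\}$ does not depend on the policy, so it is invariant under every branch $\pi\in\Pi(q)$ and hence under the convex-hull field $\Psi(q,\cdot)$. That is exactly what Lemma~\ref{lem:target_bounds}(ii) needs: positivity for every critic in $\Lambda(q)$, not just for the fixed point of one greedy branch. Once that machinery is in place, the corollary comes at no extra cost. Your $\mathcal{Y}_+$ is tailored to a single $\mu_q$, which is all this corollary requires.
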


\section{Convergence to Optimal Safety-Constrained Task Performance}\label{sec:convergence}
Forward invariance ensures that the agent eventually learns a policy that is guaranteed to be safe. In this section, we show that the system not only remains safe but also maximizes task performance in the safe set.

\begin{restatable}[Optimal safety-constrained task value function]{definition}{optjointvaluefunc}\label{defn:opt_joint_value_func}
    Let $\mathcal{Q}_\Omega := \mathbb{R}^{\mathcal{I}} = \{ \mathfrak{q}:\mathcal{I}\rightarrow \mathbb{R} \}$ denote the restricted action-value space on the index set $\mathcal{I}$. Define the robust safety-constrained task Bellman operator $\mathcal{T}_\Omega: \mathcal{Q}_\Omega \rightarrow \mathcal{Q}_\Omega$:
    \begin{align}
        \mathcal{T}_{\Omega}[\mathfrak{q}](s, a, d) = r(s, a, d) + \gamma\mathbb{E}\left[ \max_{a'\in A^{\Omega}(s')}\min_{d'\in\mathcal{D}} \mathfrak{q}(s', a', d') \right], \quad (s, a, d)\in\mathcal{I} \label{eq:safety_constr_task_bellman}
    \end{align}
    Note that since $A^{\Omega}(s)\neq \emptyset$ for $s\in\Omega$, by contraction, $\mathcal{T}_{\Omega}[\cdot]$ has a unique fixed point $Q^*_{\Omega} = \mathcal{T}_{\Omega} Q^*_{\Omega}$, and thus, the optimal safety-constrained task value function is given by $V^*_{\Omega}(s) = \max_{a\in A^{\Omega}(s)}\min_{d\in\mathcal{D}} Q^*_{\Omega}(s, a, d)$,
    which is the maximum expected return that a safe policy can accrue by keeping the agent in $\Omega$. Moreover, let $\mathcal{Q}^* := \left\{ q\in\mathcal{Q}_{\mathrm{cfg}} : q\vert_{\mathcal{I}} = Q^*_{\Omega} \right\}$.
    Thus, every $q\in\mathcal{Q}^*\subseteq \mathcal{Q}$ corresponds to a policy that is safe and attains the maximal possible return among all safe policies.
\end{restatable}
Let $Q^{\phi, 1}_{\pi}$ denote the safety critic obtained by scaling the margin $g(\cdot)$ by $K=1$. By homogeneity, $Q^{\phi}_{\pi}:=Q^{\phi, K}_{\pi}=KQ^{\phi, 1}_{\pi}$. Next, let $\delta_{\Omega} := \inf_{q\in\mathcal{Q}_{\mathrm{cfg}}}\min_{\pi\in\Pi(q)}\min_{s\in\Omega} V^{\phi, 1}_{\pi} > 0$ represent the margin of the safe policy that comes closest to $\partial\Omega$.

\begin{restatable}[Convergence to optimal safety-constrained task performance]{theorem}{convergencefixedpt}\label{thm:convergence_fixed_pt}
Let the sufficient margin scale $K^*$ be defined as $K^* := \frac{R_{\mathrm{max}} + \overline{V}}{\delta_{\Omega}} = \frac{(2-\gamma)R_\mathrm{max}}{(1-\gamma)\delta_\Omega}$, with $\overline{V}:=\frac{R_\mathrm{max}}{1-\gamma}$. Let $\mathcal{U}\subset\{q:\rho(q)<\epsilon_0\}$. Then, 
under Theorems \ref{thm:ode}-\ref{thm:forward_invariance}, almost surely bounded iterates $(Y_n, Q_n)$ and $K\ge K^*\ge\underline{K}$, $\delta_{\Omega} > 0$, every internally chain-transitive set of the averaged slow-timescale differential inclusion contained in $\mathcal{Q}_{\mathrm{cfg}}$ is a subset of $\mathcal{Q}^*$: $\mathcal{U}\subseteq \mathcal{Q}^* := \left\{ q\in\mathcal{Q}_{\mathrm{cfg}}: q\vert_{\mathcal{I}}=Q^*_{\Omega} \right\}$, and thus, $\mathrm{dist}\left( Q_n, \mathcal{Q}^* \right) \rightarrow 0$ almost surely.
Moreover, on $\mathcal{U}$ the averaged slow-timescale residual on kernel-preserving coordinates $\mathcal{I}$ is given by $\left.\overline{G}(q)\middle\vert_{\mathcal{I}}\right. = \left\{ \mathcal{T}_{\Omega}[q\vert_{\mathcal{I}}] - q\vert_{\mathcal{I}} \right\}$.
Thus, every corresponding limiting controller attains the optimal task value $V^{*}_{\Omega}$ on $\Omega$ while selecting only kernel-preserving actions. (Proof in Appendix \ref{app:convergence_proofs}).
\end{restatable}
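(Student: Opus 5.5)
The plan is to show that on the limit set $\mathcal{U}$ the truncation by the safety critic is inactive on kernel-preserving coordinates. The averaged inclusion restricted to $\mathcal{I}$ then becomes the single-valued ODE $\dot{\mathfrak{q}} = \mathcal{T}_\Omega[\mathfrak{q}] - \mathfrak{q}$. Because $\mathcal{T}_\Omega$ is a $\gamma$-contraction, this ODE has $Q^*_\Omega$ as a globally exponentially stable equilibrium, and internal chain transitivity (Theorem~\ref{thm:ode}) forces $\mathcal{U}|_{\mathcal{I}} = \{Q^*_\Omega\}$.

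First we place $\mathcal{U}$ in $\mathcal{Q}_{\mathrm{cfg}}$. Since $\mathcal{U}\subset\{\rho<\epsilon_0\}$ is compact and invariant, every point of $\mathcal{U}$ lies on a complete trajectory within the basin. Theorem~\ref{thm:forward_invariance} gives $\rho(q(t))\le \rho(q(-T))e^{-(1-\gamma)T}\le \epsilon_0 e^{-(1-\gamma)T}$ for every $T$, so $\rho\equiv 0$ on $\mathcal{U}$, i.e. $\mathcal{U}\subseteq\mathcal{Q}_{\mathrm{cfg}}$. For $q\in\mathcal{Q}_{\mathrm{cfg}}$ and $s\in\Omega$, the constraints give $q\ge 0$ on $\mathcal{I}$ and $q\le-\kappa$ on $\mathcal{J}$. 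Hence the maximin over $\mathcal{A}$ is attained only at kernel-preserving actions: a non-preserving $a$ has $\min_d q(s,a,d)\le q(s,a,\hat d)\le-\kappa<0$, while $\mathcal{A}^\Omega(s)\ne\emptyset$ supplies a value $\ge 0$. Thus $V_q(s')=\max_{a'\in\mathcal{A}^\Omega(s')}\min_{d'} q(s',a',d')$ for $s'\in\Omega$. For $(s,a,d)\in\mathcal{I}$ we have $s'\in\Omega$ almost surely, so the first argument of the $\min$ in $\mathcal{T}_y[q]$ equals $\mathcal{T}_\Omega[q|_\mathcal{I}](s,a,d)$.

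Next we show the cap $y(s,a,d)$ is slack for all $y\in\Lambda(q)$, $q\in\mathcal{U}$, $(s,a,d)\in\mathcal{I}$. This is the main obstacle. We need $y(s,a,d)\ge K\delta_\Omega$-type bounds on $\mathcal{I}$, uniformly over the fast attractor, and not just for the Dirac case. We would first argue that every $\pi\in\Pi(q)$ keeps $\Omega$ invariant, by the argument of Corollary~\ref{cor:induced_policy_safety}. The fixed point $Q^{\phi}_\pi = KQ^{\phi,1}_\pi$ then satisfies $Q^{\phi,1}_\pi(s,a,d)\ge (1-\gamma)g(s)+\gamma\min\{g(s),\mathbb{E}V^{\phi,1}_\pi(s')\}$, which should be bounded below in terms of $\delta_\Omega$ on $\mathcal{I}$. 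Care is needed in relating $g(s)$ to $\delta_\Omega$: since $V^{\phi,1}_\pi\le g$, we get $g(s)\ge\delta_\Omega$ on $\Omega$. The set $\{y:\ y|_{\mathcal{I}}\ge K\delta_\Omega\}$ is convex and invariant under each $h_\pi$ flow (each $\mathcal{T}^\phi_\pi$ maps it into itself). It therefore attracts the convexified inclusion, so $\Lambda(q)$, being the set of bounded complete solutions, lies inside it. Independently, the iterates are bounded and $q$ is bounded on $\mathcal{U}$ by $\overline V$, so $r+\mathbb{E}V_q\le R_{\max}+\overline V\le K\delta_\Omega\le y$ whenever $K\ge K^*$. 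The min therefore selects the task branch. One must also check that $(1-\gamma)m + \gamma(r+\mathbb{E}V_q) = r + \gamma\mathbb{E}V_q$ up to the $m=\min\{r,g\}=r$ identification, which needs $r\le g$ on $\Omega$, again guaranteed by the scale $K$. If the exact constants do not match, the residual should be interpreted through that identification.

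It follows that for every $\mathbb{P}\in\mathfrak{D}(q)$ the integrand on $\mathcal{I}$ is the constant $\mathcal{T}_\Omega[q|_\mathcal{I}]-q|_\mathcal{I}$, giving $\overline G(q)|_\mathcal{I}=\{\mathcal{T}_\Omega[q|_\mathcal{I}]-q|_\mathcal{I}\}$. With $W(q)=\|q|_\mathcal{I}-Q^*_\Omega\|_\infty$, contraction yields $D^+W\le-(1-\gamma)W$ along solutions in $\mathcal{U}$. Invariance and compactness of $\mathcal{U}$ (again running trajectories backward) force $W\equiv0$, so $\mathcal{U}\subseteq\mathcal{Q}^*$. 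Finally, $\mathrm{dist}(Q_n,\mathcal{Q}^*)\to0$ almost surely follows because $\overline Q$ is an asymptotic pseudotrajectory with $\omega$-limit $\mathcal{U}$. The limiting controller picks actions in $\mathcal{A}^\Omega$, as shown above, and achieves $V^*_\Omega$.
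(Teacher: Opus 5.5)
Your proposal is correct and follows essentially the same route as the paper: you place $\mathcal{U}$ in $\mathcal{Q}_{\mathrm{cfg}}$ (your backward-time decay makes this step more explicit than the paper does), and you show every $y\in\Lambda(q)$ satisfies $y\ge K\delta_\Omega$ on $\mathcal{I}$ via a bounded-complete-solution argument. You phrase that argument as attraction to a convex set preserved by each $\gamma$-contraction $\mathcal{T}^\phi_\pi$, where the paper uses a Dini-derivative bound on $\zeta(y)$. You then deduce that both truncations are inactive, so that $\overline{G}(q)\vert_{\mathcal{I}}=\{\mathcal{T}_\Omega[q\vert_{\mathcal{I}}]-q\vert_{\mathcal{I}}\}$, and conclude from the contraction of $\mathcal{T}_\Omega$ together with compactness of $\mathcal{U}$. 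The one loosely justified step is the bound $q\le\overline{V}$ on $\mathcal{U}$: it does not follow from boundedness of the iterates, but from the same backward-time argument applied to $\{q:\max q\le\overline{V}\}$, using $\mathcal{T}_y[q]\le R_{\mathrm{max}}+\gamma\max q$; the paper also relies on this bound only implicitly.
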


The sufficient margin scale $K^*$ 
ensures that the safety truncation is never active when $\delta_\Omega > 0$.
Consequently, if $\delta_\Omega \rightarrow 0$, 
then $K^* \rightarrow \infty$; if $\delta_\Omega = 0$, Theorem \ref{thm:convergence_fixed_pt} no longer guarantees convergence to the optimal safety-constrained task value for any finite $K$, motivating a finite-$K$ characterization.   
Let $\Gamma_{\Omega}$ be the set of kernel-preserving controller policies, and for $\zeta\ge 0$, define $V^{\ast}_{\Omega, \zeta}(s) := \sup_{\mu\in\Gamma_{\Omega, \zeta}} \inf_\nu V^{\psi}_{\mu, \nu}(s)$ and $\Gamma_{\Omega, \zeta} : = \left\{ \mu\in\Gamma_{\Omega} : \inf_{\nu}\inf_{s\in\Omega} V^{\phi, 1}_{(\mu, \nu)}(s) \ge \zeta \right\}$.
Let $\Delta_{\mathrm{marg}}(\zeta):=\Vert V^*_{\Omega} - V^*_{\Omega, \zeta} \Vert_\infty$, and let $V^*_{K}$ denote the optimal joint value of Eq. \ref{eq:joint_st_value_eqn} over kernel-preserving controllers, for some finite $K\ge \underline{K}$. 

\begin{restatable}[Finite-$K$ characterization]{theorem}{finitekchar}\label{thm:finite_k_char}
    Define $\zeta_K := \frac{R_{\mathrm{max}} + \overline{V}}{K} = \frac{(2-\gamma)R_{\mathrm{max}}}{(1-\gamma)K}$.
    Assume there exists $\overline{\zeta}$ such that $\Gamma_{\Omega, \zeta_K} \neq \emptyset$ for all $0 < \zeta_K < \overline{\zeta}$ (sufficiently large $K$). Then, $0\le V^*_{\Omega}(s) - V^*_{K}(s) \le \Delta_{\mathrm{marg}}(\zeta_K)$ for $ s\in\Omega$.
    Hence, if $\Delta_{\mathrm{marg}}(\zeta)\rightarrow 0$ as $\zeta\downarrow 0$, then $V^*_K\rightarrow V^*_{\Omega}$ as $K\rightarrow \infty$. (Proof in Appendix \ref{app:convergence_proofs}).
\end{restatable}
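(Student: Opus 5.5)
The plan is a sandwich argument: bound $V^*_K$ below by the value of a margin-restricted safe controller, and above by the unconstrained optimum $V^*_\Omega$. Throughout, $V^*_K$ is the optimal joint value of Eq.~\ref{eq:joint_st_value_eqn} over kernel-preserving controllers, with margin scaled by $K$. Homogeneity gives $Q^\phi_\pi = K Q^{\phi,1}_\pi$. The argument will mirror the mechanism behind Theorem~\ref{thm:convergence_fixed_pt}: the truncation in $\mathcal{T}_Y$ is inactive whenever the safety critic exceeds every attainable task target. The difference is that this is imposed only on a restricted controller class, rather than through the global margin $\delta_\Omega$.

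\textbf{Upper bound, $V^*_K \le V^*_\Omega$.} I would show that for any fixed kernel-preserving $\pi$ and any $Y$, the joint operator is dominated pointwise on $\mathcal{I}$ by the task operator. The key term is
\[
(1-\gamma)\min\{r,g\} + \gamma\min\{r+\mathbb{E}V_Q(s'),Y\} \le (1-\gamma) r + \gamma\bigl(r+\mathbb{E}V_Q(s')\bigr).
\]
This needs care, since it is not immediately $\le r+\gamma\mathbb{E}V_Q$. I would instead use the stopping-time representation of Theorem~\ref{defn:objective_fun}. Taking the stopping time $\tau=T\to\infty$ in the infimum bounds $J(\pi)$ by the undiscounted-at-truncation task return $\sum_t\gamma^t r$ of the same policy. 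That return is at most $V^*_\Omega$, because kernel-preserving controllers keep the state in $\Omega$ and only use actions in $\mathcal{A}^\Omega$. Monotonicity of the maximin operators in the restricted action set then yields $V^*_K\le V^*_\Omega$.

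\textbf{Lower bound, $V^*_K \ge V^*_{\Omega,\zeta_K}$.} Fix $\mu\in\Gamma_{\Omega,\zeta_K}$; this set is nonempty by assumption for large $K$. For every disturbance policy $\nu$, $K V^{\phi,1}_{(\mu,\nu)}(s)\ge K\zeta_K = R_{\max}+\overline V$ on $\Omega$. Since $Q$-values are bounded by $\overline V$ and $r\le R_{\max}$, the target $r+\mathbb{E}V_Q(s')$ never exceeds $Y$. Hence the $\min$ with $Y$ is inactive, and so is the first-stage cap $\min\{r,g\}$ once $g$ is scaled by $K\ge\underline K$ on $\Omega$. The joint Bellman equation restricted to $(\mu,\cdot)$ then coincides with the robust task evaluation equation, as in the proof of Theorem~\ref{thm:convergence_fixed_pt} via contraction (Proposition~\ref{prop:joint_bellman_contraction}). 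Its fixed point is therefore $\inf_\nu V^\psi_{\mu,\nu}$. Taking the supremum over $\mu\in\Gamma_{\Omega,\zeta_K}$ gives $V^*_K(s)\ge V^*_{\Omega,\zeta_K}(s)$.

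\textbf{Combining.} The two bounds give $0\le V^*_\Omega-V^*_K\le V^*_\Omega-V^*_{\Omega,\zeta_K}\le\Delta_{\mathrm{marg}}(\zeta_K)$ on $\Omega$. Since $\zeta_K\downarrow0$ as $K\to\infty$, the limit statement follows directly. The main obstacle is the upper bound: $\min\{r,g\}$ can differ from $r$ in the $(1-\gamma)$ term, so the naive pointwise comparison of operators is not valid, and the domination must go through the objective-function representation or a careful induction on iterates. I would first try the induction $\mathcal{T}_Y^n[0]\le\mathcal{T}_\Omega^n[0]$ using $\min\{r,g\}\le r$ together with bounding $\min\{\cdot,Y\}$ by its first argument. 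The remaining worry is whether the paper's normalization makes the joint value a scaled rather than plain discounted return. If so, I would absorb the discrepancy by comparing against $(1-\gamma)r+\gamma(\cdot)$ and noting that both fixed points satisfy the same recursion structure.
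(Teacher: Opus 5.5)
Your sandwich is the same as the paper's. The upper bound comes from the per-policy domination $V^K_{\mu,\nu}\le V^\psi_{\mu,\nu}$, followed by $\inf_\nu$ and $\sup_{\mu\in\Gamma_\Omega}$. The lower bound comes from showing that truncation is inactive for every $\mu\in\Gamma_{\Omega,\zeta_K}$. Two local points are off, though neither breaks the argument.

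\textbf{The upper bound has no obstacle.} Since $\min\{r,Kg\}\le r$ and $\min\{r+\mathbb{E}[v(s')],Y\}\le r+\mathbb{E}[v(s')]$,
\[
(1-\gamma)\min\{r,Kg\}+\gamma\min\{r+\mathbb{E}[v(s')],Y\}\le(1-\gamma)r+\gamma r+\gamma\mathbb{E}[v(s')]=r+\gamma\mathbb{E}[v(s')].
\]
So the joint operator for fixed $(\mu,\nu)$ is dominated pointwise by the task evaluation operator. Monotonicity and contraction then carry this to the fixed points. This is exactly the paper's step, so the detour through Theorem~\ref{defn:objective_fun} is unnecessary. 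That representation is also heavier than it looks: its continuation branch $r+\gamma\mathbb{E}[V(s')]$ agrees with the Bellman recursion only where $Kg\ge r$. There is also no normalization mismatch. Your own lower bound uses this same identity, namely that with both truncations inactive the operator is exactly $r+\gamma\mathbb{E}[v(s')]$. Doubting it on the upper side is therefore inconsistent.

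\textbf{The first-stage cap is inactive for a different reason than the one you give.} Inactivity of $\min\{r,Kg\}$ does not follow from $K\ge\underline{K}$. That threshold is calibrated to $\sigma_\Lambda$ on $\mathcal{J}$ and says nothing about the size of $g$ on $\Omega$. The correct reason is Lemma~\ref{lem:safety_upper_bdd}, which gives $Q^\phi_\pi\le Kg$ for any policy. Hence for $\mu\in\Gamma_{\Omega,\zeta_K}$, any $\nu$, and $s\in\Omega$,
\[
Kg(s)\ge KV^{\phi,1}_{(\mu,\nu)}(s)\ge K\zeta_K=R_{\mathrm{max}}+\overline{V}\ge R_{\mathrm{max}}\ge r.
\]
This is the same chain as Eq.~\ref{eq:safety_trunc_inactive}. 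The $Y$-truncation is then inactive because $r+\mathbb{E}[v(s')]\le R_{\mathrm{max}}+\overline{V}\le KV^{\phi,1}_{(\mu,\nu)}$. This uses $v\le\overline{V}$, which follows from the upper bound just established, and that $\mu$ keeps successors in $\Omega$. Plugging $V^\psi_{\mu,\nu}$ into the joint operator and invoking uniqueness of the fixed point gives $V^K_{\mu,\nu}=V^\psi_{\mu,\nu}$, as you claim. The combination step then goes through as written.
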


Thus, when $\delta_\Omega > 0$, with sufficiently large, finite-$K$, the tabular two-timescale recursion converges to the optimal safety-constrained task value; when $\delta_\Omega=0$, the finite-$K$ optimality gap vanishes as $K\rightarrow \infty$. 

\vspace{-2mm}
\section{Experiments}\label{sec:empirical_evaluation}\vspace{-2mm}
In this section, we perform empirical evaluations on our proposed framework, where we choose to answer the following questions:
(1) \textit{Does the two-timescale learning achieve stable convergence on continuous-control tasks via neural approximations?} (2) \textit{How does our joint policy compare against CMDP baselines in maintaining near-zero violations during evaluation without sacrificing task performance?} (3) \textit{How does the learned safe set dynamically expand during training? }


Appendix \ref{app:subsec_low_dim_tabular} discusses the exact tabular DP solution of the joint Bellman equation on a discrete double integrator, with comparisons to projection and safety-filtering baselines. Here, we instead discuss the results of scaling the optimal joint policy to high-dimensional continuous spaces where exact dynamic programming is intractable. To this end, we introduce \texttt{JointSAC}: a deep RL approximation that closely follows the ISAACS framework \citep{hsu2023isaacs} (built on SAC \citep{haarnoja2018soft}). Controller and disturbance policies are trained using \textit{gradient descent-ascent finite timescale separation} ($\tau$-GDA) \citep{fiez2021local}.

\vspace{-2mm}
\subsection{Experimental Setup}\label{subsec:exp_setup}

We benchmark our framework on high-dimensional continuous control tasks in \texttt{MuJoCo} \citep{todorov2012mujoco}, utilizing three \texttt{Gymnasium} locomotion environments \citep{towers2026gymnasium} and eight \texttt{SafeVelocity} environments from \texttt{SafetyGymnasium} \citep{ji2023safety}. We augment the environment with an adversarial disturbance that actively attacks the agent to either violate safety constraints or to prevent liveness.

\begin{figure}[!htb]
    \centering
    \includegraphics[width=0.9\linewidth, trim=0 0 0 0, clip]{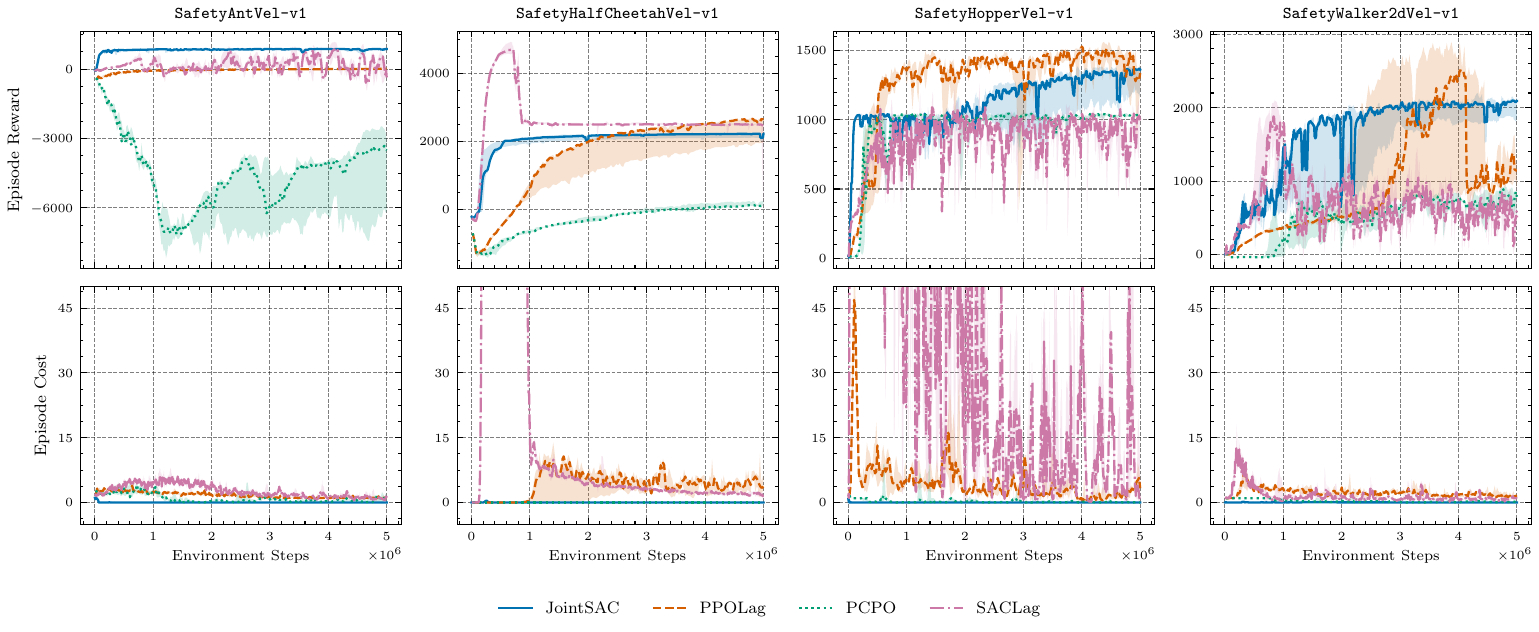}\vspace{-2mm}
    \caption{Training performance across 5 seeds (mean $\pm$ standard deviation). The top row indicates episodic returns, while the bottom row indicates episodic costs as training progresses. \texttt{JointSAC} demonstrates stable convergence with near-zero episode costs.}\vspace{-2mm}
    \label{fig:training_curves}
\end{figure}

\textbf{Baselines}:
We compare our method against least-restrictive safety filters (LRSF), and three well-established CMDP baselines. We specifically select these CMDP baselines, as they represent the two dominant paradigms of model-free Safe RL: \textit{Lagrangian dual-descent} (\texttt{PPOLag, SACLag}) and \textit{trust-region projections} \texttt{PCPO} from omnisafe \citep{ji2024omnisafe}. Additional experiments and details can be found in Appendix \ref{app:additional_exp_section} (including the safe set expansion); an ablation of episode return/cost as $K$ varies can be found in Appendix \ref{app:K_ablation}.
\vspace{-2mm}
\subsection{Main Results}\label{subsec:main_results}
Fig.~\ref{fig:training_curves} illustrates training curves across four \texttt{SafeVelocity} (\texttt{v1}) environments (5 seeds). \texttt{JointSAC} demonstrates stable convergence to high task rewards with near-zero costs, whereas baselines exhibit higher variance, with only a subset that meets the permissible cost threshold ($d=0$). Table~\ref{tab:robust_true} summarizes final rollout evaluations (300 evaluations per training seed, totaling $1,500$ distinct eval seeds). Across all 11 environments, \texttt{JointSAC} maintains near-zero violations (with maximum cost of $0.03\pm0.08$ on \texttt{Hopper-v4}), while baselines exhibit inconsistent, sometimes, major cost violations (\texttt{PCPO} walks backwards on \texttt{SafetyAntVel}). Overall, \texttt{JointSAC} yields stable, high task performance while enforcing near-zero safety violations.\vspace{-2mm} 

\begin{table*}[!htb]
\centering
\caption{Evaluation Metrics on Adversarial Environments with values indicating mean $\pm$ standard deviation across $5$ training seeds and $300$ rollouts each (totalling $1,500$ evals). Results indicate that \texttt{JointSAC} (Ours) has near-zero costs (maximum of $0.03 \pm 0.08$ on only \texttt{Hopper-v4}).}\vspace{-2mm}\
\label{tab:robust_true}
\resizebox{\textwidth}{!}{
\begin{tabular}{@{}lcccccccccc@{}}
\toprule
 & \multicolumn{5}{c}{\textbf{Reward}} & \multicolumn{5}{c}{\textbf{Cost}} \\
\cmidrule(lr){2-6} \cmidrule(lr){7-11}
\textbf{Environment} & \textbf{JointSAC (Ours)} & \textbf{Safety Filter} (LRSF) & \textbf{PCPO} & \textbf{PPO-Lag} & \textbf{SAC-Lag} & \textbf{JointSAC (Ours)} & \textbf{Safety Filter} (LRSF) & \textbf{PCPO} & \textbf{PPO-Lag} & \textbf{SAC-Lag} \\
\midrule
Hopper-v4 & $3266.89 \pm 103.80$ & $1169.92 \pm 326.17$ & $1373.82 \pm 303.63$ & $1074.26 \pm 562.70$ & $754.89 \pm 495.76$ & $0.03 \pm 0.08$ & $0.00 \pm 0.00$ & $0.00 \pm 0.00$ & $0.97 \pm 0.07$ & $0.40 \pm 0.54$ \\
InvertedPendulum-v4 & $999.43 \pm 1.28$ & $1000.00 \pm 0.00$ & $34.77 \pm 4.93$ & $506.30 \pm 464.77$ & $1000.00 \pm 0.00$ & $0.00 \pm 0.00$ & $0.00 \pm 0.00$ & $1.00 \pm 0.00$ & $0.60 \pm 0.55$ & $0.00 \pm 0.00$ \\
Walker2d-v4 & $2869.59 \pm 319.83$ & $973.26 \pm 3.10$ & $849.93 \pm 94.91$ & $3962.76 \pm 1138.50$ & $700.85 \pm 357.47$ & $0.00 \pm 0.00$ & $0.00 \pm 0.00$ & $0.01 \pm 0.01$ & $0.25 \pm 0.42$ & $0.27 \pm 0.39$ \\
\midrule 
SafetyAntVel-v0 & $941.08 \pm 8.26$ & $674.19 \pm 127.31$ & $-1509.56 \pm 1724.06$ & $112.66 \pm 230.86$ & $1251.74 \pm 562.78$ & $0.00 \pm 0.00$ & $0.05 \pm 0.04$ & $0.04 \pm 0.07$ & $0.89 \pm 0.38$ & $0.12 \pm 0.12$ \\
SafetyAntVel-v1 & $940.70 \pm 9.06$ & $632.65 \pm 62.56$ & $-2915.70 \pm 2404.71$ & $62.72 \pm 103.62$ & $582.13 \pm 963.46$ & $0.00 \pm 0.00$ & $0.00 \pm 0.00$ & $0.38 \pm 0.47$ & $1.03 \pm 0.15$ & $0.97 \pm 1.18$ \\
SafetyHalfCheetahVel-v0 & $1783.42 \pm 292.07$ & $986.44 \pm 76.15$ & $453.22 \pm 247.44$ & $2214.42 \pm 656.13$ & $2255.89 \pm 100.72$ & $0.00 \pm 0.00$ & $0.01 \pm 0.01$ & $0.01 \pm 0.02$ & $152.87 \pm 341.14$ & $0.15 \pm 0.34$ \\
SafetyHalfCheetahVel-v1 & $2238.10 \pm 107.59$ & $1152.01 \pm 110.50$ & $422.09 \pm 204.15$ & $2295.79 \pm 725.03$ & $1686.74 \pm 1505.66$ & $0.00 \pm 0.00$ & $0.00 \pm 0.00$ & $0.00 \pm 0.00$ & $0.36 \pm 0.59$ & $0.47 \pm 0.68$ \\
SafetyHopperVel-v0 & $1056.06 \pm 0.50$ & $974.38 \pm 24.15$ & $936.84 \pm 269.07$ & $1016.16 \pm 375.81$ & $1045.57 \pm 61.95$ & $0.00 \pm 0.00$ & $0.00 \pm 0.00$ & $0.90 \pm 1.99$ & $1.38 \pm 1.95$ & $17.06 \pm 32.27$ \\
SafetyHopperVel-v1 & $1329.36 \pm 89.75$ & $1001.40 \pm 13.29$ & $1039.24 \pm 8.99$ & $1274.01 \pm 557.27$ & $1027.49 \pm 18.75$ & $0.02 \pm 0.03$ & $0.00 \pm 0.00$ & $0.01 \pm 0.02$ & $4.11 \pm 8.84$ & $0.00 \pm 0.00$ \\
SafetyWalker2dVel-v0 & $1842.44 \pm 309.17$ & $547.07 \pm 138.62$ & $640.29 \pm 352.84$ & $1583.43 \pm 1248.16$ & $462.38 \pm 353.09$ & $0.01 \pm 0.01$ & $0.35 \pm 0.33$ & $0.22 \pm 0.36$ & $1.56 \pm 1.48$ & $0.76 \pm 0.61$ \\
SafetyWalker2dVel-v1 & $2056.53 \pm 332.37$ & $575.99 \pm 131.44$ & $828.76 \pm 141.58$ & $1178.47 \pm 1061.40$ & $433.04 \pm 366.89$ & $0.02 \pm 0.04$ & $0.21 \pm 0.19$ & $0.11 \pm 0.14$ & $1.52 \pm 1.52$ & $0.60 \pm 0.46$ \\
\bottomrule
\end{tabular}
}\vspace{-3mm}
\end{table*}

\section{Conclusion and Limitations}\label{sec:conclusion_limitations}\vspace{-2mm}
In this paper, we present a novel robust Bellman operator that jointly enables safety-constrained task optimization. We propose a two-timescale stochastic approximation algorithm to compute its fixed point and prove theoretical convergence to a set of optimal safety-constrained task value functions. The optimal joint policy theoretically guarantees safety while maximizing task returns. To scale the computation of the optimal fixed point to continuous-control tasks with a high-dimensional state space, we propose \texttt{JointSAC}. Empirical evaluations demonstrate that the neural approximation yields stable, low-variance training, while achieving near-zero safety violations.

\textbf{Limitations.}
Firstly, while our theoretical analysis ensures strict safety, scaling to high-dimensional continuous spaces necessitates neural approximations, violating strict forward invariance. Secondly, theoretical convergence relies on a heavily scaled safety margin ($K\ge K^*$). This can introduce numerical instabilities; in practice, $K$ needs to be heuristically tuned as high as practically possible. Finally, our approach assumes well-defined, continuous safety margins under full state observability, which are often difficult to perfectly specify in unstructured real-world environments.

\section{Acknowledgment}
We thank members of the Safe Robotics Lab for their insightful discussion and comments on our work. This work was made possible through the use of computational resources and support provided by Princeton Research Computing. Generative AI tools were used for grammar and spelling checks, word choice, rephrasing, and assistance with editing code and launching experiments.

\bibliography{iclr2027_conference}
\bibliographystyle{iclr2027_conference}

\appendix
\section{Preliminaries}
\label{app:preliminaries_section}
In this section, we extend Sec.~\ref{sec:preliminaries} to provide definitions of some of the notations and concepts used in the paper, followed by brief preliminaries on Reinforcement Learning, Differential Inclusions, Asymptotic Pseudotrajectories, Internally Chain-Transitive Sets, Constrained Markov Decision Processes, and Hamilton-Jacobi Reachability analysis.

\subsection{Definitions and Notations} \label{app:defn_notations}
\begin{definition}
    (State, Action, and Disturbance Spaces) 
    We consider a discounted MDP with finite state, action and disturbance spaces: $\mathcal{S} \subset \mathbb{R}^n$, $\mathcal{A} \subset \mathbb{R}^m$, and $\mathcal{D} \subset \mathbb{R}^l$ respectively, with the transition kernel $P(\cdot|s,a,d) \in \Delta(\mathcal{S})$. 
\end{definition}
\begin{definition}
    (Reward, Safety Margin, and the Viability Kernel) 
    The reward and the safety margin are given by $r: \mathcal{S} \times \mathcal{A} \times \mathcal{D} \rightarrow [0, R_{\text{max}}], R_{\text{max}} >0$, and $g: \mathcal{S} \rightarrow [-G_{\text{min}}, G_{\text{max}}]$, with $G_{\text{min}}, G_{\text{max}} > 0$ respectively. We subsume the extended class-$\mathcal{K}_e$ scaling function $\beta(\cdot)$ into $g(s)$ to align with task return scales (unless stated otherwise). The \textit{constraint set} is $\mathcal{F}^c := \{s \in \mathcal{S}: g(s)\ge0\}$, complementing the failure set $\mathcal{F}$. The \textit{viability kernel} $\Omega \subseteq \mathcal{F}^c$ is the subset of states from which a control policy exists that can keep the agent away from the failure set $\mathcal{F}$ for all times, regardless of the adversarial disturbance.
\end{definition}
\begin{definition}
    (Action and Disturbance Policies) 
    A deterministic joint policy $\pi = (\mu, \nu)$ pairs the mappings $\mu: \mathcal{S} \rightarrow \mathcal{A}$ and $\nu: \mathcal{S} \times \mathcal{A} \rightarrow \mathcal{D}$, yielding $\pi(s) = (\mu(s), \nu(s, \mu(s)))$.
\end{definition}
\begin{definition}
    (Value and Action-Value Functions) 
    the value and action-value function spaces are defined as $\mathcal{V}:=\mathbb{R}^{|\mathcal{S}|}$ and $Q:=\mathbb{R}^{|\mathcal{S}||\mathcal{A}||\mathcal{D}|}$, equipped with the supremum norm $\Vert\cdot\Vert_{\infty}$.
\end{definition}

\subsection{Reinforcement Learning} \label{app:rl}
Reinforcement Learning (RL)~\citep{sutton1998introduction, agarwal21} consists of data-driven methods enabling agents to compute approximations of the optimal value function and/or optimal policy to an optimal control problem. In RL, the objective is to maximize the expected cumulative sum of rewards over a finite horizon $T$, or the expected discounted cumulative sum in the infinite horizon setting. Accordingly, the general form of the objective function without disturbances is written as 
\par \vspace{-0\parskip} {\normalsize
\begin{equation}
    J(\pi) := \mathop{\mathbb{E}}_{\omega \sim \pi} \left[ \sum_{t=0}^{\infty} {\gamma}^t r(s_t, a_t)\middle\vert s_0\sim\mu_0\right]
\end{equation}}

where $\omega$ denotes a sampled trajectory $\omega=(s_0, a_0, s_1, a_1, \dots)$ generated by the policy $\pi$ and $\mu_0:\mathcal{S}\rightarrow[0, 1]$ denotes the initial state distribution. Considering stochastic dynamics, the dynamic programming principle associated with the action-value function $Q : \mathcal{S} \times \mathcal{A} \rightarrow \mathbb{R}$ representing the maximum expected cumulative discounted reward takes the form of a discrete-time Bellman equation~\citep{bellman1957dynamic}
\par \vspace{-0\parskip} {\normalsize
\begin{equation}
    Q(s,a) = r(s,a) + \gamma \mathbb{E}_{s' \sim p(\cdot \vert s,a)}\left[\max_{a' \in \mathcal{A}} Q (s', a')\right]
    \label{eq:st_rl_bellman}
\end{equation}}
For any $Q \in \mathcal{Q}$, the Bellman operator $\mathcal{T}_{\mathcal{M}}: \mathcal{Q} \rightarrow \mathcal{Q}$ associated with Eq.~\ref{eq:st_rl_bellman} is defined as:
\par \vspace{-0\parskip} {\normalsize
\begin{equation}
    \mathcal{T} [Q](s,a) :=r(s,a) + \gamma \mathbb{E}_{s' \sim p(\cdot \vert s,a)}\left[\max_{a' \in \mathcal{A}} Q (s', a')\right]
    \label{eq:st_rl_bell_op}
\end{equation}}
The Bellman operator defined in Eq.~\ref{eq:st_rl_bell_op} is a contraction mapping on $\mathcal{Q}$ (under the $L_\infty$ norm). By the Banach Fixed-Point Theorem~\citep{banach1922operations}, this implies that iteratively applying $\mathcal{T}_{\mathcal{M}}$ to any initial action-value function $Q$ converges to a unique fixed point $Q^*$, which is the solution to Eq.~\ref{eq:st_rl_bellman}. This theoretical guarantee underpins modern Deep RL, justifying the use of neural networks to approximate this fixed point in high-dimensional state spaces.

\subsection{Differential Inclusions, Asymptotic Pseudotrajectories, and Internally Chain-Transitive Sets}\label{app:di_apt_icts}
Let $E=\mathbb{R}^d$ be equipped with the supremum norm $\Vert\cdot\Vert_{\infty}$. Consider a set-valued map $\Psi:E\rightrightarrows E$. $\Psi$ is \textit{Marchaud} if it is upper semicontinuous, has nonempty compact convex values, and has at most linear growth. Under these conditions, the differential inclusion
\begin{align}
\dot{x}(t) \in \Psi(x(t)) \label{eq:marchaud_diff_inclusion}
\end{align}
admits absolutely continuous solutions from every initial condition \citep{aubin2012differential, benaim2005stochastic}. Since solutions need not be unique, Eq. \ref{eq:marchaud_diff_inclusion} induces a set-valued dynamical system $\Phi_t(x) := \left\{ z(t): z(0) = x, \dot{z}(\tau) \in \Psi(z(\tau))\ \text{a.e.} \right\}$

\begin{definition}(Asymptotic Pseudotrajectory \citep{benaim1996asymptotic})\label{defn:apt}
Let $\Phi_t$ denote the set-valued flow of Eq. \ref{eq:marchaud_diff_inclusion}. A continuous function $y:\left[0, \infty\right)\rightarrow E$ is an \textit{asymptotic pseudotrajectory} (APT) of $\Phi$, if for every T > 0,
\begin{align}
    \lim_{t\rightarrow\infty} \sup_{0\le h\le T} \mathrm{dist}\left(y(t+h), \Phi_h\left(x(t)\right)\right) = 0
\end{align}
\end{definition}
\begin{definition}(Internally Chain-Transitive Set \citep{benaim2005stochastic})\label{defn:ict}
    A compact invariant set $A\subseteq E$ is \textit{internally chain-transitive} (ICT), if for any $a, b\in A$ and any $\epsilon>0, T>0$, there exist points $a=x_0,x_1,\dots, x_k=b$ in A and times $t_1, \dots, t_k\ge T$ such that
    \begin{align}
        \mathrm{dist}\left(\Phi_{t_i}\left(x_{i-1}\right), x_i\right) < \epsilon \qquad i=1,\dots,k
    \end{align}
\end{definition}


\begin{theorem}(Limit set of a bounded APT \citep{benaim2005stochastic})\label{thm:APT_ICT}
Let $\Psi$ satisfy the preceding conditions, and let $y:[0, \infty)\rightarrow E$ be a bounded APT of $\dot{x}(t) \in \Psi(x(t))$. Then, its $\omega$-limit set:
\begin{align}
    \omega(y) = \bigcap_{t\geq 0} \overline{\left\{y(s): s\geq t\right\}}
\end{align}
is a non-empty, compact, invariant and \textit{internally chain-transitive} set-valued dynamics $\Phi$.
\end{theorem}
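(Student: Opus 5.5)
This is the standard Benaïm–Hofbauer–Sorin result. The plan is to reduce it to properties of asymptotic pseudotrajectories of Marchaud inclusions. Nonemptiness and compactness come from boundedness alone. Invariance and internal chain-transitivity come from the APT property together with a compactness-of-solution-sets argument.

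\emph{Step 1 (nonempty, compact).} Since $y$ is bounded, each set $\overline{\{y(s):s\ge t\}}$ is a nonempty compact subset of $E=\mathbb{R}^d$. These sets are nested decreasing in $t$, so their intersection $\omega(y)$ is nonempty and compact by the finite intersection property. Equivalently, $\omega(y)$ is the set of limits of $y(t_k)$ along sequences $t_k\to\infty$.

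\emph{Step 2 (invariance).} The key tool is the compactness of the solution set. The family of solutions of $\dot x\in\Psi(x)$ starting in a compact set is equicontinuous on $[-T,T]$, by linear growth and Gronwall. Because $\Psi$ is upper semicontinuous with convex compact values, uniform limits of solutions are again solutions (the convergence theorem of Aubin–Cellina).

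Fix $p\in\omega(y)$ with $y(t_k)\to p$. The APT property gives, for each $T$, solutions $z_k$ with $z_k(0)=y(t_k)$ and $\sup_{0\le h\le T}\|y(t_k+h)-z_k(h)\|\to0$. To cover negative times as well, apply the same property at the base points $t_k-T$. A diagonal extraction then yields a solution $z:\mathbb{R}\to E$ with $z(0)=p$ and $z(h)=\lim y(t_k+h)\in\omega(y)$ for all $h$. Hence $\omega(y)$ is invariant in the sense of containing a complete trajectory through each of its points.

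\emph{Step 3 (internal chain-transitivity).} This is the main obstacle, because the chains must stay inside $\omega(y)$ rather than merely near the path of $y$.

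Fix $a,b\in\omega(y)$, $\epsilon>0$ and $T>0$. The plan is as follows.
\begin{enumerate}
\item Choose $\delta<\epsilon/3$. Using uniform continuity and the semicontinuity of $\Phi_T$ on a compact neighbourhood, select $t_0$ such that for $t\ge t_0$ the point $y(t)$ lies within $\delta$ of $\omega(y)$ and within $\delta$ of $\Phi_{[T,2T]}(y(t-h))$-type targets.
\item Pick times $s_0<s_1<\dots<s_k$ beyond $t_0$ with $s_{i}-s_{i-1}\in[T,2T]$, such that $y(s_0)\approx a$ and $y(s_k)\approx b$. This is possible because $a$ and $b$ are recurrent values of $y$ along diverging sequences.
\item Project each $y(s_i)$ to a nearest point $x_i\in\omega(y)$, setting $x_0=a$ and $x_k=b$.
\item By the APT property and the compactness/upper semicontinuity of $\Phi_{t}$ in its initial condition, obtain $\mathrm{dist}(\Phi_{s_i-s_{i-1}}(x_{i-1}),x_i)<\epsilon$.
\end{enumerate}
The delicate point is the uniform continuity of $x\mapsto\Phi_t(x)$ in the Hausdorff upper sense on compacts for $t\in[T,2T]$. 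I would establish this by contradiction, again using the limit-of-solutions argument from Step 2. This is exactly the content of Benaïm–Hofbauer–Sorin (2005, Thm. 3.6), which I would cite for the final details.
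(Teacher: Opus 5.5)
The paper gives no argument of its own here; it imports the result from Bena\"im--Hofbauer--Sorin. Your Steps~1--2 already go beyond it and are fine. \textbf{The gap is in Step~3, items~3--4, and it is more than ``final details''.} Deferring the delicate point to Bena\"im--Hofbauer--Sorin (2005, Thm.~3.6) just re-cites the statement being proved.

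After replacing $y(s_{i-1})$ by a nearest point $x_{i-1}\in\omega(y)$, you know some solution from $y(s_{i-1})$ ends near $y(s_i)$. You need a solution from $x_{i-1}$ ending near $x_i$, i.e.\ $\Phi_t(y(s_{i-1}))$ must lie in the $\eta$-neighbourhood of $\Phi_t(x_{i-1})$. That is upper semicontinuity at the \emph{moving} base point $x_{i-1}$. Since $x_{i-1}$ is only determined after $\delta$ is fixed, you need a modulus uniform over $\omega(y)$, which is your ``uniform upper-Hausdorff continuity on compacts''.

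That property is false for inclusions without uniqueness. On $E=\mathbb{R}$ let $\Psi(x)=\{1\}$ for $x>0$, $\{-1\}$ for $x<0$, and $\Psi(0)=[-1,1]$ (a Marchaud map). Then $\Phi_t(0)=[-t,t]$ but $\Phi_t(1/n)=\{t+1/n\}$, so on $K=[0,1]$ no $\delta$ works for any $\eta<2t$. The proof by contradiction cannot work either. From $x_n,x_n'\to x$ and $p_n\in\Phi_t(x_n')$ far from $\Phi_t(x_n)$, closedness of the graph only gives $\lim p_n\in\Phi_t(x)$. A contradiction would need $\Phi_t(x_n)$ to approach $\Phi_t(x)$, which is lower semicontinuity. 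This is a genuine failure mode for the chain. If $\omega(y)$ contains two arcs leaving a branching point, $y(s_{i-1})$ can run along one arc while its nearest point in $\omega(y)$ sits on the other, from which no solution need come near $x_i$.

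\textbf{The missing idea} is to take chain points as initial points of \emph{shadowing solutions}, using your Step~2 compactness uniformly. For every $\delta>0$ there is $t_0$ such that for all $t\ge t_0$ some solution $z$ with $z([0,2T])\subseteq\omega(y)$ satisfies $\sup_{0\le h\le 2T}\|y(t+h)-z(h)\|<\delta$. Otherwise a violating sequence $t_n\to\infty$ contradicts the Step~2 extraction. Then:
\begin{itemize}
\item Choose $s_0$ along a sequence with $y(s_0)\to a$, so the shadowing solution $z_1$ is the Step~2 limit and $z_1(0)=a$.
\item Choose $s_k$ with $\|y(s_k)-b\|<\delta$, and for $i\ge2$ let $z_i$ shadow $y$ on $[s_{i-1},s_i]$.
\item Set $x_{i-1}:=z_i(0)$ for $1\le i\le k$ and $x_k:=b$.
\end{itemize}
Then $z_i(t_i)\in\Phi_{t_i}(x_{i-1})$ and $\|z_i(t_i)-x_i\|\le\|z_i(t_i)-y(s_i)\|+\|y(s_i)-x_i\|<2\delta$. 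This gives the chain, in fact with every arc $z_i([0,t_i])$ inside $\omega(y)$, stronger than Definition~\ref{defn:ict}.

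Alternatively, keep only \emph{pointwise} upper semicontinuity of the solution map into $C([0,2T],E)$. Take chain points among the centres $x^j$ of a finite cover of $\omega(y)$ (including $a,b$) by balls $B(x^j,r_j/2)$, where $r_j\le\epsilon/3$ is a semicontinuity modulus at $x^j$. For large $s_i$, each $y(s_i)$ lies within $r_j$ of some centre, so the local modulus suffices.

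Finally, your Step~2 uses a \emph{single} solution shadowing $y$ uniformly on $[0,T]$. That is stronger than the paper's Definition~\ref{defn:apt}, which is pointwise in $h$. Under that definition, argue instead that a locally uniform limit $w$ of translates is Lipschitz with $w(v)\in\Phi_{v-u}(w(u))$ for $u<v$. Such a $w$ is a solution by convexity and upper semicontinuity of $\Psi$.
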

In the two-timescale setting considered in this work, the required APT property is obtained from the \textit{stochastic recursive inclusion} framework of \citep{10.1287/moor.2019.1037}: after the ``\textit{equilibration}'' of the fast timescale, the slow-timescale Bellman residual is averaged w.r.t admissible limiting occupation measures of the quasi-static fast timescale dynamics, yielding the averaged differential inclusion in Theorem \ref{thm:ode}. 


\subsection{Constrained Markov Decision Processes}\label{app:cmdps}
A Constrained Markov Decision Process (CMDP) \citep{altman1999constrained} extends the standard Markov Decision Process (MDP) framework by introducing auxiliary cost functions that encode safety constraints and perhaps, resource limitations. Formally, a CMDP is defined by the tuple $\mathcal{M}=\left(\mathcal{S, A, P}, r, c, \gamma, \mu_0\right)$, where $\mathcal{S}$ denotes the state space, $\mathcal{A}$ denotes the action space, $\mathcal{P}:\mathcal{S}\times\mathcal{A}\rightarrow\Delta(\mathcal{S})$ defines the transition dynamics, $r:\mathcal{S}\times\mathcal{A}\rightarrow\mathbb{R}$ denotes the task reward function, $c:\mathcal{S}\times\mathcal{A}\rightarrow\mathbb{R}$ denotes the constraint cost function, $\gamma\in\left[0, 1\right)$ denotes the discount factor and $\mu_0:\mathcal{S}\rightarrow[0, 1]$ denotes the initial state distribution. 

Let $\pi:\mathcal{S}\rightarrow\Delta(\mathcal{A})$ denote a stochastic policy, with the corresponding \textit{expected discounted cumulative return} $J_r(\pi)$ and \textit{expected discounted cumulative cost} $J_c(\pi)$ being defined as:
\begin{align}
    J_r(\pi) := \mathbb{E}_{\omega\sim\pi}\left[\sum_{t=0}^{\infty}\gamma^t r(s_t, a_t)\middle\vert s_0\sim\mu_0\right], \quad J_c(\pi) := \mathbb{E}_{\omega\sim\pi}\left[\sum_{t=0}^\infty \gamma^tc(s_t, a_t)\middle\vert s_0\sim\mu_0\right]
\end{align}
where $\omega=(s_0, a_0, s_1, a_1,\dots)$ denotes a trajectory generated by the policy $\pi$, with $s_0\in\mu_0$, $a_t\sim\pi(\cdot|s_t)$, and $s_{t+1} \sim\mathcal{P}(\cdot|s_t, a_t)$. Then, the objective in a CMDP is to find an optimal policy $\pi^*$ that maximizes the expected return, while ensuring that the cost remains below a predefined safety threshold $d\in\mathbb{R}$:
\begin{align}
    \max_{\pi\in\Pi} J_r(\pi) \quad \mathrm{s.t.} \quad J_c(\pi)\le d
\end{align}
where $\Pi$ is the set of all stationary stochastic policies. To solve this constrained optimization problem, a standard approach is to employ the \textit{Lagrangian relaxation} by introducing a dual variable $\lambda\ge 0$ (Lagrange multiplier). One can then obtain the unconstrained Lagrangian function, and the resulting \textit{min-max saddle-point problem}: 
\begin{align}
    \min_{\lambda\ge 0}\max_{\pi\in\Pi}\mathcal{L}(\pi, \lambda), \quad \mathrm{where}\quad \mathcal{L}(\pi, \lambda):=J_r(\pi) - \lambda(J_c(\pi)-d)
\end{align}
Under Slater's condition (assuming that at least one strictly safe policy exists, where $J_c(\pi)\le d$), strong duality holds, and optimizing this unconstrained minimax objective is equivalent to solving the primal CMDP. More details can be found in seminal works on CMDPs\citep{altman1999constrained, achiam2017constrained, tesslerreward, stooke2020responsive}.

\subsection{Hamilton-Jacobi Safety Analysis}\label{app:hji_safety_analysis}
Consider a continuous-time dynamical system $\dot{s}(t)=f(s(t), a(t), d(t))$, where $s\in\mathcal{S}, a\in\mathcal{A}$ and $d\in\mathcal{D}$ as in Definition \ref{subsec:defns_notation}. We assume that the specification of a failure set is given by a margin function $l:\mathcal{S}\rightarrow\mathbb{R}$ such that $\mathcal{F}^c:=\left\{s : g(s) \geq 0\right\}$. The goal of safety analysis is then to determine the \textit{safe set} $\Omega\subseteq\mathcal{F}^c\subseteq\mathcal{S}$ which is the set of all initial states from which there exists a control policy that can keep the system away from the failure set $\mathcal{F}$, at all times, regardless of the adversarial disturbance \citep{mitchell2005time}. This is formalized as a \textit{zero-sum differential game}, yielding the robust safety value function $V(s)$:  
\begin{align}
    V(s) = \inf_{\mathcal{N}[\pmb{d}](\cdot)}\sup_{\pmb{a}(.)} \inf_{t \geq 0} \ g\left(\xi_{s_0}^{\pmb{a}, \mathcal{N}[\pmb{d}]}(t)\right)
\end{align}
where $\xi_{s_0}^{\pmb{a}, \mathcal{N}[\pmb{d}]}(\cdot)$ denotes the trajectory starting at the state $s_0$, and evolving according to the control and disturbance trajectories $(\pmb{a}, \mathcal{N}[\pmb{d}])$, with $\mathcal{N}[\pmb{d}](\cdot)$ is defined to be the set of \textit{non-anticipative strategies} for the disturbance, as defined in \citet{evans1984differential} and \citet{fisac2017general}. By the principle of Dynamic Programming, $V(s)$ is the unique viscosity solution to the Hamilton-Jacobi-Isaacs (HJI) variational inequality: 
\begin{align}
    0 = \min \left\{ g(s) - V(s,t), \frac{\partial V}{\partial t}(s, t) + \max_{a \in \mathcal{A}}\min_{d\in\mathcal{D}} \nabla_s V^\top(s, t) f(s,a, d) \right\}\label{eq:HJI}
\end{align}
where $V(s,T) = g(s),  \forall s \in \mathcal{S}$, $t \in [0, T]$. In the infinite-horizon $T\rightarrow\infty$ case, $V(s, t)$ becomes independent of $t$ as long as there exists a nonempty safe set. Under stochastic dynamics, one can obtain the infinite-horizon Bellman equation as:
\begin{align}\tag{\ref{eq:HJI}}
    V(s) = \min \left\{ g(s), \max_{a \in \mathcal{A}} \min_{d\in\mathcal{D}} \mathbb{E}_{s'\sim p(\cdot|s, a, d)}\left[V(s')\right] \right\} 
\end{align}
where $s'\sim p(\cdot|s, a, d) \iff s' = s + f(s, a, d)\Delta t + \varepsilon_t$. However, the Bellman operator that governs Eq. \ref{eq:HJI} doesn't induce a contraction mapping in the space of safe value functions. To leverage techniques from TD-learning and deep RL, \citet{fisac2019bridging} introduces a discount factor $\gamma \in (0,1)$ that ensures contraction on the space of value functions. This results in the following \textit{time-discounted Bellman equation}:
\begin{align}\label{eq:disc_safe_vs}
    V(s) = (1-\gamma) g(s) + \gamma \min \left\{ g(s), \max_{a \in \mathcal{A}}\min_{d\in\mathcal{D}} \mathbb{E}_{s'\sim p(\cdot|s, a, d)}\left[V(s')\right]\right\}
\end{align}
For more details, please refer to prior work\citep{evans1984differential, mitchell2005time, fisac2017general, fisac2019bridging}.

\section{Proofs of Theoretical Results}\label{app:proofs_theoretical_results_section}
In this section, we provide detailed proofs and remarks on the theoretical results in Sec.~\ref{sec:defining_joint_action_value_function}, Sec.~\ref{subsec:objective_function}, Sec.~\ref{sec:two_timescale_approx}, Sec.~\ref{sec:forward_invariance}, and Sec.~\ref{sec:convergence}.
\subsection{Safety Action-Value of the Joint Policy (Sec.~\ref{subsec:contraction_safety_action_value})}\label{app:contraction_safe_action_value}
Given a joint policy $\pi$, the safety Bellman equation is defined as:
\begin{equation}
\tag{\ref{eq:safe_action_value}}
    Q^\phi_\pi(s,a,d) = (1-\gamma)g(s) + \gamma \min\left\{ g(s), \mathop{\mathbb{E}}_{s' \sim p(\cdot | s,a,d)} \left[\max_{a' \in \mu(s')} \min_{d' \in \nu(s',a')} Q^\phi_\pi (s' , a', d') \right]\right\}
\end{equation}
Note that $\nu(\cdot, a')$ refers to the specific sample $a'$ drawn from $\mu(\cdot)$ by the $\max_{a'\in\mu}$ operation. The corresponding safety Bellman operator $\mathcal{T}^\phi_\pi: \mathcal{Q} \rightarrow \mathcal{Q}$ is defined as:
\begin{equation}
\tag{\ref{eq:safety_bellman}}
    \mathcal{T}^\phi_\pi[Q^\phi_\pi](s,a,d) = (1-\gamma)g(s) + \gamma \min\left\{ g(s), \mathop{\mathbb{E}}_{s' \sim p(\cdot | s,a,d)} \left[\max_{a' \in \mu(s')} \min_{d' \in \nu(s',a')} Q^\phi_\pi(s' , a', d') \right]\right\}
\end{equation}

\safetybellmancontraction*
\begin{proof}
It will suffice to show for all tuples $(s,a,d) \in \mathcal{S} \times \mathcal{A} \times \mathcal{D}$,

$\bigg\vert\mathcal{T}^\phi_\pi\left[Q^\phi_\pi\right] (s,a,d) - \mathcal{T}^\phi_\pi[\widetilde{Q}^\phi_\pi](s,a,d) \bigg\vert$$\leq \gamma \norm{Q^\phi_\pi - \widetilde{Q}^\phi_\pi}_\infty$. We have: 
\begin{align*} 
\hspace{-0.4cm}\bigg\vert\mathcal{T}^\phi_\pi\left[Q^\phi_\pi\right] (s,a,d) - &\mathcal{T}^\phi_\pi[\widetilde{Q}^\phi_\pi](s,a,d) \bigg\vert = \bigg \vert (1-\gamma) g(s)\\
    &+ \!\gamma \!\min\left\{ \!g(s), \!\mathop{\mathbb{E}}_{s' \sim p(\cdot | s,a,d)} \left[\!\max_{a' \in \mu(s')} \min_{d' \in \nu(s',a')} \!Q^\phi_\pi (s' , a', d') \!\right]\!\right\} - \!(1-\gamma) g(s)\\
    &- \gamma \min\left\{ g(s), \mathop{\mathbb{E}}_{s' \sim p(\cdot | s,a,d)} \left[\max_{a' \in \mu(s')} \min_{d' \in \nu(s',a')} \widetilde{Q}^\phi_\pi (s' , a', d') \right]\right\}\bigg \vert\\
    & = \bigg \vert \gamma \min\left\{ g(s), \mathop{\mathbb{E}}_{s' \sim p(\cdot | s,a,d)} \left[\max_{a' \in \mu(s')} \min_{d' \in\nu(s',a')} Q^\phi_\pi (s' , a', d') \right]\right\}\\
    &- \gamma \min\left\{ g(s), \mathop{\mathbb{E}}_{s' \sim p(\cdot | s,a,d)} \left[\max_{a' \in \mu(s')} \min_{d' \in \nu(s',a')} \widetilde{Q}^\phi_\pi (s' , a', d') \right]\right\}\bigg \vert
\end{align*}
Observing that$\left\vert \min\left\{b,a\right\} - \min\left\{c,a \right\} \right\vert \leq \left\vert b -c \right\vert$ $\forall a,b,c \in \mathbb{R}$,
\begin{align*}           \hspace{-0.6cm}\bigg\vert\mathcal{T}^\phi_\pi\left[Q^\phi_\pi\right]& (s,a,d) - \!\mathcal{T}^\phi_\pi [\widetilde{Q}^\phi_\pi](s,a,d) \bigg\vert \\
&\leq \!\gamma \bigg \vert \!\mathop{\mathbb{E}}_{s' \sim p(\cdot | s,a,d)} \left[\max_{a' \in \mu(s')} \min_{d' \in \nu(s',a')}Q^\phi_\pi(s', a', d')\!-\! \max_{a' \in \mu(s')} \min_{d' \in \nu(s',a')} \widetilde{Q}^\phi_\pi(s', a', d')\!\right]\!\bigg \vert \\
     & \text{(Applying Jensen's inequality)}\\
     & \leq \gamma \mathop{\mathbb{E}}_{s' \sim p(\cdot | s,a,d)} \left[\bigg \vert \max_{a' \in \mu(s')} \min_{d' \in \nu(s',a')}Q^\phi_\pi(s', a', d')- \max_{a' \in \mu(s')} \min_{d' \in \nu(s',a')} \widetilde{Q}^\phi_\pi(s', a', d') \!\bigg \vert \right]
\end{align*}
Without loss of generality, let the first min-max term on the right-hand side be the larger one, and denote by $a^*$ a maximizer of this term. Then, we have:
\begin{align*}
\hspace{-0.1cm}\bigg\vert\!\mathcal{T}^\phi_\pi\left[Q^\phi_\pi\right] (s\!,a\!,d)\! -\! \mathcal{T}^\phi_\pi[\widetilde{Q}^\phi_\pi](s\!,a\!,d) \bigg\vert \!\leq \!\gamma\! \mathop{\mathbb{E}}_{s' \sim p(\cdot | s\!,a\!,d)} \left[\bigg \vert \min_{d' \in \nu(s'\!,a'\!)}\!Q^\phi_\pi(s'\!, a^*\!, d'\!)-\! \min_{d' \in \nu(s'\!,a'\!)} \!\widetilde{Q}^\phi_\pi(s'\!, a^*\!, d'\!) \bigg \vert \right]
\end{align*}
Similarly, let $d^*$ be a minimizer of the second, smaller term.
\begin{align*}
\hspace{-1.5cm}\bigg\vert\mathcal{T}^\phi_\pi\left[Q^\phi_\pi\right] (s,a,d) - \mathcal{T}^\phi_\pi[\widetilde{Q}^\phi_\pi](s,a,d) \bigg\vert &\leq \gamma \mathop{\mathbb{E}}_{s' \sim p(\cdot | s,a,d)} \left[\bigg \vert Q^\phi_\pi(s', a^*, d^*)-  \widetilde{Q}^\phi_\pi(s', a^*, d^*) \bigg \vert \right]\\
& \leq \gamma \sup_{s'} \max_{a' \in \mathcal{A}} \max_{d' \in \mathcal{D}} \bigg \vert Q^\phi_\pi(s',a',d') - \widetilde{Q}^\phi_\pi(s',a',d') \bigg \vert\\& = \gamma \norm{Q^\phi_\pi - \widetilde{Q}^\phi_\pi}_\infty
\end{align*} 
\end{proof}

\subsection{Joint Action-Value Function (Sec.~\ref{subsec:contraction_joint_action_value})}\label{app:contraction_joint_action_value}
For any fixed safety action-value function, $Y \in \mathcal{Q}$, the joint Bellman operator $\mathcal{T}_Y: \mathcal{Q} \rightarrow\mathcal{Q}$ is given by:
\begin{equation}\tag{\ref{eq:joint_bellman}}
\begin{aligned}
\mathcal{T}_Y[Q](s,a,d) = (1-\gamma) m(s,a,d)+ \gamma \min \Bigl\{ r(s,a,d) +\mathop{\mathbb{E}}_{s' \sim p(\cdot | s,a,d)} \left[V_Q(s')\right], Y(s,a,d) \Bigr\}
\end{aligned}
\end{equation}
where $m(s,a,d) = \min \{ r(s,a,d), g(s)\}$ and $V_Q(s) = \max_{a \in \mathcal{A}} \min_{d \in \mathcal{D}} Q(s,a,d) $.

\jointbellmancontraction*
\begin{proof}
    It will suffice to show for all tuples $(s,a,d) \in \mathcal{S} \times \mathcal{A} \times \mathcal{D}$,$\bigg\vert\mathcal{T}_Y\left[Q\right] (s,a,d) - \mathcal{T}_Y [\widetilde{Q}](s,a,d) \bigg\vert$ $\leq \gamma \norm{Q - \widetilde{Q}}_\infty$. We have: 

\begin{align*} 
    \hspace{-0.5cm}\bigg\vert\mathcal{T}_Y&\left[Q\right] (s,a,d) - \mathcal{T}_Y [\widetilde{Q}](s,a,d) \bigg\vert = \bigg \vert (1-\gamma) m(s,a,d)\\
    &+ \!\gamma \min \Bigl\{ r(s,a,d)\! + \!\mathop{\mathbb{E}}_{s' \sim p(\cdot | s,a,d)} \left[\max_{a' \in \mathcal{A}} \min_{d' \in \mathcal{D}} Q(s', a', d')\right], Y(s,a,d) \!\Bigr\} - \!(1-\!\gamma) m(s,a,d)\\
    &- \gamma \min \Bigl\{ r(s,a,d) + \mathop{\mathbb{E}}_{s' \sim p(\cdot | s,a,d)} \left[\max_{a' \in \mathcal{A}} \min_{d' \in \mathcal{D}} \widetilde{Q}(s', a', d')\right], Y(s,a,d) \Bigr\}\bigg \vert\\
    & = \bigg \vert \gamma \min \Bigl\{ r(s,a,d) + \mathop{\mathbb{E}}_{s' \sim p(\cdot | s,a,d)} \left[\max_{a' \in \mathcal{A}} \min_{d' \in \mathcal{D}} Q(s', a', d')\right], Y(s,a,d) \Bigr\}\\
    &- \gamma \min \Bigl\{ r(s,a,d) + \mathop{\mathbb{E}}_{s' \sim p(\cdot | s,a,d)} \left[\max_{a' \in \mathcal{A}} \min_{d' \in \mathcal{D}} \widetilde{Q}(s', a', d')\right], Y(s,a,d) \Bigr\}\bigg \vert
\end{align*}
Observing that $\left\vert \min\left\{b,a\right\} - \min\left\{c,a \right\} \right\vert \leq \left\vert b -c \right\vert$ $\forall a,b,c \in \mathbb{R}$,
\begin{align*}           
\hspace{-1cm}\bigg\vert\mathcal{T}_Y\left[Q\right] &(s,a,d) - \mathcal{T}_Y[\widetilde{Q}](s,a,d) \bigg\vert \leq \gamma \bigg \vert r(s,a,d) + \mathop{\mathbb{E}}_{s' \sim p(\cdot | s,a,d)} \left[\max_{a' \in \mathcal{A}} \min_{d' \in \mathcal{D}}Q(s', a', d')\right]\\
    & - r(s,a,d) - \mathop{\mathbb{E}}_{s' \sim p(\cdot | s,a,d)} \left[\max_{a' \in \mathcal{A}} \min_{d' \in \mathcal{D}} \widetilde{Q}(s', a', d')\right]\bigg \vert\\
    &= \gamma \bigg \vert \mathop{\mathbb{E}}_{s' \sim p(\cdot | s,a,d)} \left[\max_{a' \in \mathcal{A}} \min_{d' \in \mathcal{D}}Q(s', a', d')\right]
     - \mathop{\mathbb{E}}_{s' \sim p(\cdot | s,a,d)} \left[\max_{a' \in \mathcal{A}} \min_{d' \in \mathcal{D}} \widetilde{Q}(s', a', d')\right]\bigg \vert\\
     & = \gamma \bigg \vert \mathop{\mathbb{E}}_{s' \sim p(\cdot | s,a,d)} \left[\max_{a' \in \mathcal{A}} \min_{d' \in \mathcal{D}}Q(s', a', d')- \max_{a' \in \mathcal{A}} \min_{d' \in \mathcal{D}} \widetilde{Q}(s', a', d')\right]\bigg \vert \\
     & \text{(Applying Jensen's inequality)}\\
     & \leq \gamma \mathop{\mathbb{E}}_{s' \sim p(\cdot | s,a,d)} \left[\bigg \vert \max_{a' \in \mathcal{A}} \min_{d' \in \mathcal{D}}Q(s', a', d')- \max_{a' \in \mathcal{A}} \min_{d' \in \mathcal{D}} \widetilde{Q}(s', a', d') \bigg \vert \right]
\end{align*}
Without loss of generality, let the first min-max term on the right-hand side be the larger one, and denote by $a^*$ a maximizer of this term. Then, we have:
\begin{align*}
\hspace{-0.6cm}\bigg\vert\mathcal{T}_Y\left[Q\right] (s,a,d) - \mathcal{T}_Y [\widetilde{Q}](s,a,d) \bigg\vert \leq \gamma \mathop{\mathbb{E}}_{s' \sim p(\cdot | s,a,d)} \left[\bigg \vert \min_{d' \in \mathcal{D}}Q(s', a^*, d')- \min_{d' \in \mathcal{D}} \widetilde{Q}(s', a^*, d') \bigg \vert \right]
\end{align*}
Similarly, let $d^*$ be a minimizer of the second, smaller term.

\begin{align*}
\bigg\vert\mathcal{T}_Y\left[Q\right] (s,a,d) - \mathcal{T}_Y [\widetilde{Q}](s,a,d) \bigg\vert &\leq \gamma \!\mathop{\mathbb{E}}_{s' \sim p(\cdot | s,a,d)} \left[\bigg \vert Q(s', a^*, d^*)-  \widetilde{Q}(s', a^*, d^*) \bigg \vert \right]\\
&\! \leq \gamma \sup_{s'} \max_{a' \in \mathcal{A}} \max_{d' \in \mathcal{D}} \bigg \vert Q(s',a',d') - \widetilde{Q}(s',a',d') \bigg \vert = \gamma \norm{Q - \widetilde{Q}}_\infty
\end{align*}
\end{proof}
\subsection{Objective Function of Joint Action-Value Function (Sec.~\ref{subsec:objective_function})}\label{app:obj_function_derivation}
In this section, we derive the objective function from robust optimal stopping theory to provide a principled theoretical interpretation of the joint value function. To this end, we first prove supporting Lemmas.
\begin{lemma}\label{lem:safety_upper_bdd}
    (Global upper bound and non-negativity of $Q^\phi_\pi$ under Joint Trajectory Evaluation) For all $(s,a,d) \in \mathcal{S} \times \mathcal{A} \times \mathcal{D}$ and all greedy joint policies $\pi$, $Q^\phi_\pi (s,a,d) \leq g(s)$ independent of the policy evaluated. Consequently, for all states within the failure set, $s \in \mathcal{F}$, $Q^{\text{safe}}_{\pi^a, \pi^d} (s,a,d) < 0$, where the failure set is defined as the set $\mathcal{F} = \{ s \in \mathcal{S}: g(s)<0\}$.
\end{lemma}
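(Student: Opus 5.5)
The argument uses only the fixed-point equation for $Q^\phi_\pi$ from Eq.~\ref{eq:safe_action_value}, whose existence and uniqueness are given by Proposition~\ref{prop:safety_bellman_contraction}. Fix any joint policy $\pi=(\mu,\nu)$; greediness plays no role, so the bound holds for every $\pi$. Take an arbitrary tuple $(s,a,d)$. Because $Q^\phi_\pi = \mathcal{T}^\phi_\pi[Q^\phi_\pi]$, we can write
\[
Q^\phi_\pi(s,a,d) = (1-\gamma)g(s) + \gamma \min\left\{ g(s), \mathop{\mathbb{E}}_{s'\sim p(\cdot|s,a,d)}\left[\max_{a'\in\mu(s')}\min_{d'\in\nu(s',a')} Q^\phi_\pi(s',a',d')\right]\right\}.
\]
The inner minimum is at most $g(s)$ no matter what value the expectation takes. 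Hence
\[
Q^\phi_\pi(s,a,d) \le (1-\gamma)g(s) + \gamma g(s) = g(s).
\]
This is the claimed global upper bound. It holds uniformly in $(a,d)$ and in the policy evaluated, because the bootstrap term never needs to be controlled.

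The second claim follows from the first. For $s\in\mathcal{F}$ we have $g(s)<0$ by definition of the failure set. Therefore $Q^{\text{safe}}_{\pi^a,\pi^d}(s,a,d) = Q^\phi_\pi(s,a,d) \le g(s) < 0$ for all $(a,d)$. The notation $Q^{\text{safe}}_{\pi^a,\pi^d}$ is read as $Q^\phi_{(\mu,\nu)}$.

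Any deviation from the exact fixed point can be handled the same way. If one wants the bound along the iterates, for example $Y_{n}$ with exact Bellman targets, note that $\mathcal{T}^\phi_\pi[y](s,a,d)\le g(s)$ holds for every $y\in\mathcal{Q}$. So the bound transfers to any application of the operator, not only to its fixed point.

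There is no real obstacle. The only point needing care is the word "non-negativity" in the lemma's title, which the statement itself does not actually assert; it only asserts negativity on $\mathcal{F}$. I would not try to prove non-negativity on $\mathcal{F}^c$ here, since that fails in general: states in $\mathcal{F}^c\setminus\Omega^\pi$ can have negative safety value. Instead I would note that the sign information comes entirely from the upper bound by $g$.
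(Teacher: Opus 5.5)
Your proposal is correct and follows the paper's proof essentially line for line: in the fixed-point equation you bound the inner $\min\{g(s),\cdot\}$ by $g(s)$ to get $Q^\phi_\pi(s,a,d)\le g(s)$, and then use $g(s)<0$ on $\mathcal{F}$. Your remark that greediness plays no role matches the paper's remark after the lemma, and your extension to $\mathcal{T}^\phi_\pi[y]$ for arbitrary $y$ is a valid, slightly more general observation.
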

\begin{proof}
    By definition,
    \begin{align*}
        Q^\phi_\pi (s,a,d) = (1-\gamma)g(s) + \gamma \min\left\{ g(s), \mathop{\mathbb{E}}_{s' \sim p(\cdot | s,a,d)} \left[\max_{a' \in \mu (s')} \min_{d' \in \nu(s',a')} Q^\phi_\pi (s' , a', d') \right]\right\}
    \end{align*}
    Noting that $\min \{x,y\} \leq x$, $Q^\phi_\pi (s,a,d) \leq (1-\gamma)g(s) + \gamma g(s) = g(s)$.
    For all states $s \in \mathcal{F}$, by definition, $g(s) < 0$. Furthermore, applying our previous result gives us $Q^\phi_\pi (s,a,d) < 0$. Finally, noting that $g(s)$ is independent of $\pi$, completes the proof.
\end{proof}
\begin{remark}
    As seen in Lemma~\ref{lem:safety_upper_bdd}, the upper bound of $Q^\phi_\pi$ relies solely on the minimum operator in the safety Bellman equation. Therefore, it holds universally for any arbitrary policy and does not require that the policy be greedy with respect to safety $Q$-function.
\end{remark}
\begin{remark}
    A positive immediate margin ($g(s) > 0$) does not guarantee viability. In inevitable failure states (unsafe states) where no safe policy exists, the long-term safety value correctly captures this future inevitable failure by evaluating to $Q^\phi_\pi(s,a,d) < 0$. Note that Lemma~\ref{lem:safety_upper_bdd} only provides a global upper bound and does not impose a lower bound, allowing $Q^\phi_\pi(s,a,d)$ to reflect the unsafe states.
\end{remark}
\optregions*
\begin{lemma}\label{lem:switching_time}
    (First Switching time as valid stopping time) Let $(\mathcal{W}, \mathcal{G}, \mathbb{P})$ be a probability space and let $(s_k)_{k=0}^T$ be a stochastic process taking values in a measurable state space $(\mathcal{S}, \mathcal{B}(\mathcal{S}))$. Let $\mathbb{F} = (\mathcal{G}_k)_{k=0}^T$ be the natural filtration where $\mathcal{G}_k = \sigma(s_0,\dots,s_k)$. Given a Borel-measurable safety maximizing region, $\Omega^\phi \in \mathcal{B}(\mathcal{S})$, define the first switching time $\tau : \mathcal{W} \rightarrow \{ 0, \cdots, T\}$ as:
    \[ \tau = \inf \{k \in \{0, \cdots, T  \} : s_k \in \Omega^\phi \}\]
    where $T$ is the time horizon ($T\rightarrow\infty$ for infinite horizon setting) and $\inf \emptyset = T$. Then the first switching time, $\tau$, is a stopping time with respect to $\mathbb{F}$.
\end{lemma}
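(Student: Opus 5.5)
The plan is to verify the defining property of a stopping time directly: for every $k \in \{0,\dots,T\}$ the event $\{\tau = k\}$, or equivalently $\{\tau \le k\}$, must lie in $\mathcal{G}_k = \sigma(s_0,\dots,s_k)$. The two ingredients are that $\Omega^\phi$ is Borel-measurable (it is assumed to lie in $\mathcal{B}(\mathcal{S})$) and that each coordinate $s_j$ is a random variable into $(\mathcal{S},\mathcal{B}(\mathcal{S}))$. Together they give $\{s_j \in \Omega^\phi\} = s_j^{-1}(\Omega^\phi) \in \sigma(s_j) \subseteq \mathcal{G}_j$. Since $\mathcal{S}$ is finite in the paper's setting, every subset of $\mathcal{S}$ is Borel, so this step is automatic there; I will still state it for a general measurable space.

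For $k < T$ I would write the event as a finite union,
\begin{equation*}
\{\tau \le k\} = \bigcup_{j=0}^{k} \{ s_j \in \Omega^\phi \}.
\end{equation*}
Each set in the union lies in $\mathcal{G}_j \subseteq \mathcal{G}_k$ because the natural filtration is increasing. A finite union of sets in the $\sigma$-algebra $\mathcal{G}_k$ is again in $\mathcal{G}_k$. Equivalently, one can write $\{\tau = k\} = \{s_k\in\Omega^\phi\}\cap\bigcap_{j<k}\{s_j\notin\Omega^\phi\}$, and this set lies in $\mathcal{G}_k$ by closure under complements and intersections.

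The only step needing care is the endpoint $k = T$, because of the convention $\inf\emptyset = T$. In the finite-horizon case, $\{\tau \le T\} = \mathcal{W}$, which belongs to every $\sigma$-algebra. Written out, $\{\tau = T\}$ is the union of $\{s_T\in\Omega^\phi\}\cap\bigcap_{j<T}\{s_j\notin\Omega^\phi\}$ and the event that the process never enters $\Omega^\phi$. Both pieces are in $\mathcal{G}_T$, so the convention causes no problem.

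In the infinite-horizon case, I would set $\tau = \infty$ on the never-hitting event. The argument for each finite $k$ is unchanged. The event $\{\tau=\infty\} = \bigcap_{j\ge 0}\{s_j\notin\Omega^\phi\}$ is a countable intersection and so lies in $\mathcal{G}_\infty = \sigma(\bigcup_k \mathcal{G}_k)$. Hence $\tau$ is a stopping time taking values in $\mathbb{N}\cup\{\infty\}$, which completes the argument.
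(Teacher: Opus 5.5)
Your proposal is correct and follows the same route as the paper: you write $\{\tau\le k\}$ as the finite union $\bigcup_{j=0}^{k}\{s_j\in\Omega^\phi\}$ of events in $\mathcal{G}_j\subseteq\mathcal{G}_k$, then use closure of $\mathcal{G}_k$ under finite unions. Your separate treatment of $k=T$ is slightly more careful than the paper's, whose union identity fails at $k=T$ under the convention $\inf\emptyset=T$ (there $\{\tau\le T\}=\mathcal{W}$), although the conclusion is trivial in that case.
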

\begin{proof}
    $\tau$ is a stopping time if and only if for every $k \in \{ 0, \cdots, T\}$, the set of $\{ \tau \leq k \} \in \mathcal{G}_k$. We start by observing that the event $\{ \tau \leq k \}$ is equivalent to union of all events where the trajectory switched the first time from task optimizing region to safety maximizing region, $\{ \tau \leq k \} = \bigcup_{j=0}^k \{s_j \in \Omega^\phi\}$. Noting that the state $s_j$ is $\mathcal{G}_j$-measurable by definition of natural filtration, $\{s_j \in \Omega^\phi\} \in \mathcal{G}_j$ because $\Omega^\phi$ is a Borel set in $\mathcal{S}$. Moreover, we have $\{s_j \in \Omega^\phi\}\in \mathcal{G}_k$ due to monotonicity of filtration (i.e., $\mathcal{G}_j \subseteq \mathcal{G}_k \; \forall j \leq k$). Finally, because $\sigma$-algebras are closed under finite union, we get $\left\{\tau \leq k\right\} \in \mathcal{G}_k $. This concludes the proof that $\tau$ is a valid stopping time.
\end{proof}

\begin{remark}
    (Reduction of the Joint Action-Value Function to Only Task Action-Value Function Prior to Switching) \\
    Prior to the first switching time $\tau$, the trajectory remains entirely within the task optimizing region $\Omega^\psi$. In this region, by definition, $V_q^\psi(s)=r_\pi(s) + \mathbb E[V_q(s')] < V_\pi^\phi(s)$. Furthermore, recalling from Lemma~\ref{lem:safety_upper_bdd} that the safety value is globally bounded by the immediate margin, and the fact that, since the successor state remains in the domain $V_q(s')\ge 0$, we have the strict ordering $r_{\pi}(s) \le V^\psi_\pi(s) < V^\phi_\pi(s) \leq g(s)$. This implies, $r(s,a) < g(s)$. Consequently, $V_q(s) = (1-\gamma)r_\pi(s) + \gamma V_q^\psi(s) = r_\pi(s) + \gamma\mathbb{E}[V_q(s')]$.
    This ensures that before entering the safety-maximizing region, the joint formulation evaluates only the task objective while remaining implicitly upper-bounded by $g(s)$.
\end{remark}
Let the cumulative return from time $t=0$, evaluated at a stopping time $\tau$ be given by $\mathcal{R}_\tau := \sum_{t=0}^{\tau -1} \gamma^t r(s_t) + \gamma^\tau \left((1-\gamma) \min \{r(s_\tau),g(s_\tau)\} +\gamma V^\phi_Y(s_\tau) \right)$, where $V^\phi_Y(s_\tau) = Y(s_\tau, \mu(s), \nu(s, \mu(s)))$. For a process observed from the next state $s_1$ (which we will \textit{alias} as $s_0'$), let $\tau'$ denote a stopping time. Then the cumulative return from $t=1$ is defined as : $\mathcal{R}_\tau' :=\sum_{t=0}^{\tau' -1} \gamma^t r(s'_t) + \gamma^{\tau'} \left((1-\gamma) \min \{r(s'_{\tau'}),g(s'_{\tau'})\} +\gamma V^\phi_Y(s'_{\tau'}) \right)$. Then, consider the following lemma:
\begin{lemma}
    Let $\Lambda_k$ be the set of stopping times $\tau$ bounded by time horizon $k$, i.e., $0 \leq \tau \leq k$. Let $\mathcal{R}_{\tau} = (\mathcal{R}_t)_{t \leq \tau}$ be the stochastic process representing the cumulative discounted reward. Let $V_{Q_k}(s_1) $ be the value function evaluated at the next state $s_1$, defined as the optimal stopping problem $V_{Q_k}(s_1) := \mathop{\inf_{\tau' \in \Lambda_{k}}} \;\mathop{\mathbb{E}}_{\omega}\left[\mathcal{R}'_{\tau'} \middle\vert s'_0 = s_1\right]$. Then the following equality holds, evaluated from $s_0=s$:
\par \vspace{-0\parskip} {\normalsize\begin{align*}
r(s) + \gamma\mathop{\mathbb{E}}_{\omega(s_1)} \left[V_{Q_k}(s_1) \middle\vert s_0 = s\right] \  = \inf_{\tau \in \Lambda_{k+1}, \; \tau \geq 1} \;  \mathop{\mathbb{E}}_{\omega} \left[ \mathcal{R}_{\tau}\middle\vert s_0 = s \right] 
\end{align*}}
\par \vspace{-0\parskip} {\normalsize where $\omega(s_1)$ denotes the $s_1$ element of the trajectory $\omega$.}
\label{lem:interchange_inf_and _exp}
\end{lemma}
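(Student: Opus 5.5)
The argument is a one-step shift of stopping times. It rests on the Markov property of the trajectory under the fixed joint policy $\pi$ and on an interchange of the infimum over stopping times with the conditional expectation over $s_1$.

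\textbf{Step 1 (bijection of stopping times).} Every $\tau\in\Lambda_{k+1}$ with $\tau\ge 1$ can be written as $\tau = 1+\tau'\circ\theta_1$. Here $\theta_1$ is the shift operator on trajectories, and $\tau'$ is a stopping time for the shifted filtration $\mathcal{G}'_j=\sigma(s'_0,\dots,s'_j)=\sigma(s_1,\dots,s_{j+1})$, possibly depending measurably on the parameter $s_0=s$, with $0\le\tau'\le k$. This follows because $\{\tau\le j+1\}\in\mathcal{G}_{j+1}$ and $s_0=s$ is deterministic. Conversely, each such $\tau'$ produces a valid $\tau = 1+\tau'\in\Lambda_{k+1}$, $\tau\ge1$, by the same argument as in Lemma~\ref{lem:switching_time}. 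On this correspondence, a direct reindexing of the sum gives pathwise
\[
\mathcal{R}_\tau = r(s_0) + \gamma\,\mathcal{R}'_{\tau'} .
\]
Indeed, the $t=0$ term is split off, and the terminal factor $\gamma^{\tau}=\gamma\cdot\gamma^{\tau'}$ multiplies the same terminal payoff $J$ evaluated at $s'_{\tau'}=s_\tau$.

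\textbf{Step 2 (conditioning).} Taking expectations and using the tower property together with the Markov property of the process generated by $\pi$, I would write
\[
\mathbb{E}[\mathcal{R}_\tau\mid s_0=s] = r(s)+\gamma\,\mathbb{E}_{s_1}\bigl[\mathbb{E}[\mathcal{R}'_{\tau'}\mid s'_0=s_1]\bigr].
\]
This holds because, conditional on $s_1$, the future trajectory has the law of a fresh trajectory started at $s_1$. Bounding the inner expectation below by its infimum $V_{Q_k}(s_1)$ and then taking the infimum over $\tau$ gives the inequality ``$\ge$''.

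\textbf{Step 3 (reverse inequality, the main obstacle).} For ``$\le$'' one must interchange $\inf_{\tau'}$ and $\mathbb{E}_{s_1}$, which requires choosing $\tau'$ depending on $s_1$. Since $\mathcal{S}$ is finite, I would fix $\varepsilon>0$ and, for each value $x$ of $s_1$, pick $\tau'_x\in\Lambda_k$ that is $\varepsilon$-optimal for the problem started at $x$; the infimum is finite because all payoffs are bounded by $R_{\max}$, $G_{\max}$ and $\|Y\|_\infty$. I would then paste these together as
\[
\tau := 1+\sum_x \mathbb{1}\{s_1=x\}\,\tau'_x\circ\theta_1 .
\]
This is a stopping time: the event $\{\tau\le j+1\}$ is the union over $x$ of $\{s_1=x\}\cap\{\tau'_x\circ\theta_1\le j\}$, and each piece lies in $\mathcal{G}_{j+1}$. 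Moreover $\tau\ge1$ and $\tau\le k+1$. Its expected reward is at most $r(s)+\gamma\,\mathbb{E}[V_{Q_k}(s_1)]+\gamma\varepsilon$, and letting $\varepsilon\to0$ gives ``$\le$''. The delicate points are checking that the pasted time is adapted and handling the convention $\inf\emptyset=T$; both are resolved by the finiteness of $\mathcal{S}$ and the bounded horizon, so no measurable-selection theorem is needed.
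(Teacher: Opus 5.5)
Your proposal is correct and follows essentially the paper's argument. One inequality comes from writing an arbitrary $\tau\ge 1$ as $1+\tau'$ after conditioning on $s_1$ and bounding the inner expectation below by $V_{Q_k}(s_1)$; the other comes from pasting $\varepsilon$-optimal local stopping times and letting $\varepsilon\to 0$, where you check adaptedness of the pasted time directly via the finiteness of $\mathcal{S}$ and the paper instead cites a stability-under-pasting lemma from robust optimal stopping theory.
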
 \vspace{-0\parskip}
\begin{proof}
    We will prove this by showing inequality in both directions. We first prove:
    \par \vspace{-0\parskip} {\normalsize\begin{align*}
    r(s) + \gamma\mathop{\mathbb{E}}_{\omega(s_1)} \left[V_{Q_k}(s_1) \middle\vert s_0 = s\right] \geq \inf_{\tau \in \Lambda_{k+1}, \; \tau \geq 1} \;  \mathop{\mathbb{E}}_{\omega} \left[ \mathcal{R}_{\tau}\middle\vert s_0 = s \right] 
    \end{align*}}
    \par \vspace{-0\parskip} {\normalsize To prove the above inequality, we will construct a global stopping time by \textit{pasting together} $\epsilon$-optimal local stopping times.} 
    \begin{enumerate}
    \item Fix an arbitrarily small $\epsilon > 0$. For every realization $z$ of the state $s_1$, there exists a valid stopping time $\tau'_z \in \Lambda_{k}$ such that its expected value is $\epsilon$-close to the optimal value $V_{Q_k}(z)$, i.e., $\mathbb{E}_\omega [\mathcal{R}'_{\tau'_z} | s'_0 = z] \leq V_{Q_k}(z) + \epsilon$.
    \item Define a global candidate stopping time $\tau^* \in \Lambda_{k+1}$ by waiting one step and then \textit{executing} $\tau'_z$, i.e., $\tau^* := 1+ \tau'_z$. By the recursive definition of cumulative return, $\mathcal{R}_{\tau^*} = r(s) + \gamma\mathcal{R}'_{\tau'_z}$.
    \item We now invoke Lemma 2.8 in~\citet{aguilar2009tsystemslowersnellenvelope} (Stability under Pasting, from optimal robust stopping time theory), which states that the set of stopping times is closed under such concatenation. Since $\tau'_z$ depends on the history after $t=1$, the constructed variable $\tau^*$ is non-anticipative and is a valid stopping time in $\Lambda_{k+1}$ with $\tau^* \geq 1$.
    \item The global infimum over $\Lambda_{k+1}$ must be less than or equal to the expectation of the specific candidate:
    \par \vspace{-0\parskip} {\normalsize\[\inf_{\tau \in \Lambda_{k+1}, \; \tau \geq 1} \;  \mathop{\mathbb{E}}_{\omega} \left[ \mathcal{R}_{\tau}\middle\vert s_0 = s \right] \leq \mathop{\mathbb{E}}_{\omega} \left[ \mathcal{R}_{\tau^*}\middle\vert s_0 = s \right]\]}
    \item Finally, applying the Law of Iterated Expectations and our $\epsilon$-bound yields:
    \par \vspace{-0\parskip} {\normalsize \begin{align*}
        \inf_{\tau \in \Lambda_{k+1}, \; \tau \geq 1} \;  \mathop{\mathbb{E}}_{\omega} \left[ \mathcal{R}_{\tau}\middle\vert s_0 = s \right] &\leq \mathop{\mathbb{E}}_{\omega(s_1)}\left[ r(s) + \gamma \mathbb{E}_\omega [\mathcal{R}'_{\tau'_{s_1}} | s'_0 = s_1] \right]\\
        &\leq \mathop{\mathbb{E}}_{\omega(s_1)}\left[r(s) +  \gamma\left(V_{Q_k}(s_1) + \epsilon\right) \right]\\
        &=r(s) + \gamma\mathop{\mathbb{E}}_{\omega(s_1)}\left[ V_{Q_k}(s_1)\right] + \gamma\epsilon
    \end{align*}}
    \end{enumerate}
    Since this inequality holds for any arbitrary small $\epsilon>0$, letting $\epsilon\rightarrow 0$ yields the first direction.
    
    Next, we prove the opposite direction:
    \par \vspace{-0\parskip} {\normalsize \begin{align*}
    r(s) + \gamma\mathop{\mathbb{E}}_{\omega(s_1)} \left[V_{Q_k}(s_1) \middle\vert s_0 = s\right] \leq \inf_{\tau \in \Lambda_{k+1}, \; \tau \geq 1} \;  \mathop{\mathbb{E}}_{\omega} \left[ \mathcal{R}_{\tau}\middle\vert s_0 = s \right]
    \end{align*}}
    \begin{enumerate}
        \item Let $\tau \in \Lambda_{k+1}$ be any arbitrary global stopping time with $\tau \geq 1$.
        \item Conditioning on the first transition $s_1 = z$, the remainder of the trajectory induces a local stopping time $\tau'_z = \tau - 1$. The induced stopping time satisfies $0\leq\tau'_z \leq k$, and thus, $\tau'_z \in \Lambda_k$.
        \item By the definition of value function as an infimum over $\Lambda_k$, we have $\mathbb{E}_\omega [\mathcal{R}'_{\tau'_z} | s'_0 = z] \geq V_{Q_k}(z)$.
        \item. Using Law of Iterated Expectations, 
        \par \vspace{-0\parskip} {\normalsize
        \begin{align*}
            \mathop{\mathbb{E}}_{\omega} \left[ \mathcal{R}_{\tau}\middle\vert s_0 = s \right] &= r(s) + \gamma \mathop{\mathbb{E}}_{\omega(s_1)}\left[\mathbb{E}_{\omega}\left[\mathcal{R}'_{\tau'_{s_1}}\middle\vert s'_0=s_1\right]\right] \\
            &\geq r(s) + \gamma\mathop{\mathbb{E}}_{\omega(s_1)}\left[ V_{Q_k}(s_1)\right]
        \end{align*}}
    \item Finally, since this inequality holds for any arbitrary $\tau \in \Lambda_{k+1}, \ \ (\tau\geq1)$, it must hold for the infimum over all such $\tau$, completing the proof.
    \end{enumerate} 
\end{proof}
\begin{remark}
    (On Non-Anticipativity) The invocation of Lemma 2.8 in~\citet{aguilar2009tsystemslowersnellenvelope} guarantees that our constructed variable $\tau^*$ uses only current information, since the specific local stopping time $\tau'_z$ is solely determined by the realization of state $s_1$ observed at time $t=1$. Lemma~\ref{lem:interchange_inf_and _exp} justifies the interchange of the infimum and expectation operators. Intuitively, it proves that the optimal value of the global stopping time is equivalent to the expectation of local optimal values of the local stopping times.
\end{remark}
\begin{theorem}\label{thm:fixed_traj_objective_fun}
    (Objective Function for a fixed trajectory) Let $\omega = (s_0, s_1, \cdots, s_T)$ be a trajectory generated by action policy, $\mu(s)$, and disturbance policy $\nu(s,\mu(s))$, with associated rewards $r(s_t)$. Here, $T$ denotes the time horizon. Then the objective function that the joint action-value function optimizes is given by:
    \[
    J(s) := \inf_{\tau \in \Lambda_T} \mathop{\mathbb{E}}_\omega \left[\sum_{t=0}^{\tau -1} \gamma^t r(s_t) + \gamma^\tau \left((1-\gamma) \min \{r(s_\tau),g(s_\tau)\} +\gamma V^\phi_Y(s_\tau) \right) \bigg\vert s_0 = s \right]
    \]
\end{theorem}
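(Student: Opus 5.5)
The proof goes by induction on the horizon, working backward in time, as a finite-horizon robust optimal-stopping (Snell envelope) argument. For $k = 0,1,\dots,T$ define the horizon-$k$ joint value along the fixed joint policy $\pi=(\mu,\nu)$ by
\[
V_{Q_k}(s) := \inf_{\tau\in\Lambda_k}\mathbb{E}_\omega\left[\mathcal{R}_\tau \,\middle\vert\, s_0=s\right].
\]
The claim is that $V_{Q_k}$ coincides with the $k$-fold iterate of the joint Bellman operator $\mathcal{T}_Y$ of Eq.~\ref{eq:joint_bellman}, evaluated along $\pi$.

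\textbf{Base case.} For $k=0$ the only admissible stopping time is $\tau\equiv 0$. Then
\[
\mathcal{R}_0=(1-\gamma)\min\{r(s_0),g(s_0)\}+\gamma V^\phi_Y(s_0),
\]
which is exactly the terminal payoff $J_0$. This is the fixed-$Y$ Bellman backup when the continuation value is absent, or equivalently when truncation by $Y$ is active.

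\textbf{Inductive step.} Split any $\tau\in\Lambda_{k+1}$ on the event $\{\tau=0\}$ versus $\{\tau\ge 1\}$. Because $\{\tau=0\}\in\mathcal{G}_0$ is determined by $s_0$ alone, the infimum over $\Lambda_{k+1}$ decomposes as
\[
\inf_{\tau\in\Lambda_{k+1}}\mathbb{E}[\mathcal{R}_\tau]=\min\left\{ J_0(s),\ \inf_{\tau\in\Lambda_{k+1},\,\tau\ge1}\mathbb{E}[\mathcal{R}_\tau]\right\}.
\]
Lemma~\ref{lem:interchange_inf_and _exp} identifies the second argument as $r(s)+\gamma\,\mathbb{E}[V_{Q_k}(s_1)]$. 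This gives the recursion
\[
V_{Q_{k+1}}(s)=\min\{\, r(s)+\gamma\mathbb{E}[V_{Q_k}(s_1)],\ (1-\gamma)\min\{r,g\}+\gamma V^\phi_Y(s)\,\}.
\]
I then match this recursion to $\mathcal{T}_Y$. Using $\min\{a+c,b+c\}=c+\min\{a,b\}$ and the weighting of the $(1-\gamma)m(s,a,d)$ term, I check that the recursion agrees with
\[
(1-\gamma)m+\gamma\min\{\, r+\mathbb{E}[V_Q(s')],\ Y\,\}
\]
on the two regions of Definition~\ref{defn:opt_regions}:
\begin{itemize}
\item On $\Omega^\psi$ the task branch is strictly smaller. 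There the Remark on reduction prior to switching gives $r<g$, hence $m=r$, and both expressions reduce to the task backup.
\item On $\Omega^\phi$ the truncation by $Y$ is active, so the value is $J_\tau$ with $\tau$ equal to the current time.
\end{itemize}
This region-by-region identification also shows that the optimal stopping rule is the first switching time $\tau$ of Lemma~\ref{lem:switching_time}, which is a valid stopping time. Consequently the infimum is attained by the hitting time of $\Omega^\phi$.

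\textbf{Infinite horizon.} For $T\to\infty$ I pass to the limit using Proposition~\ref{prop:joint_bellman_contraction}. Since $\mathcal{T}_Y$ is a $\gamma$-contraction, the iterates $V_{Q_k}$ converge uniformly to the unique fixed point $Q_Y$. On the stopping side, the truncated objectives converge as well: rewards are bounded, so the tails are dominated by $\gamma^k\big(R_{\max}/(1-\gamma)+\|Y\|_\infty\big)$. Hence $J(\pi)$ equals the value of the fixed point, and because the policy is greedy with respect to $Q_Y$, $\pi$ maximizes it.

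\textbf{Main obstacle.} The hard step is the exact algebraic matching between the stopping recursion and $\mathcal{T}_Y$. The recursion weights $r$ and $(1-\gamma)\min\{r,g\}$ inside the minimum, whereas $\mathcal{T}_Y$ places $(1-\gamma)m$ outside the minimum. Closing this gap requires the ordering $r_\pi\le V^\psi<V^\phi\le g$ from Lemma~\ref{lem:safety_upper_bdd} and the Remark, together with care about whether $\mathbb{E}[\cdot]$ is taken before or after the $\min$. I would first try verifying the identity pointwise on $\Omega^\psi$ and on $\Omega^\phi$ separately. If it fails, I would interpret the objective as an upper envelope that is attained along the first switching time.
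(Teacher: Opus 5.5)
Your proposal is correct and follows the paper's route: backward induction on the horizon, splitting admissible stopping times into $\tau\equiv 0$ and $\tau\ge 1$, and using the inf--expectation interchange lemma to identify the continuation branch as $r(s)+\gamma\,\mathbb{E}[V_{Q_k}(s_1)]$. The paper takes the resulting stopping recursion as ``the Bellman equation'' without your explicit matching to $\mathcal{T}_Y$ or your $T\to\infty$ limit, so those steps are added rigor rather than a departure; they rest on the same ordering $r<g$ on $\Omega^\psi$ taken from the paper's Remark.
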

\begin{proof}
    We begin by observing that the first switching time $\tau$ from target optimizing region to safety maximizing region is a valid stopping time (Refer to Lemma~\ref{lem:switching_time}). We prove the theorem by induction on the time horizon $T$.
    \begin{enumerate}
        \item \textbf{Base Case ($T=1$):} Consider a time horizon $T=1$. The set of valid stopping times is given by $\Lambda_1 = \{ 0,1\}$. When $\tau=0$, the initial state $s_0 \in \Omega^\phi$ and the trajectory is terminated immediately, yielding the return $\mathcal{R}_0 = (1-\gamma) \min \{r(s_0) , g(s_0) \} + \gamma V^\phi_Y(s_0)$. When $\tau = 1$, the agent takes one step and is stopped and thus, the next state $s_1 \in \Omega^\phi$, with $s_0 \in \Omega^\psi$. This yields the expected return $\mathbb{E}[\mathcal{R}_1] = r(s_0) + \gamma \mathbb{E}_{\omega(s_1)}[(1-\gamma) \min \{r(s_1) , g(s_1) \} + \gamma V^\phi_Y(s_1) | s_0=s]$. As the value function is the infimum over $\Lambda_1$, it evaluates to the minimum of these two choices:
        \begin{align*}
        V_{Q_1}(s) = \min \Biggl\{ &(1-\gamma) \min \{r(s_0) , g(s_0) \} + \gamma V^\phi_Y(s_0),\\
        & r(s_0) + \gamma \mathbb{E}_{\omega(s_1)}[(1-\gamma) \min \{r(s_1) , g(s_1) \} + \gamma V^\phi_Y(s_1) | s_0=s] \Biggr\}
        \end{align*}
        Since the minimum is equivalent to the infimum over the set of valid stopping times $\tau \in \Lambda_1$, we have $V_{Q_1}(s) = \inf_{\tau \in \Lambda_1}\mathbb{E}_\omega [\mathcal{R}_\tau | s_0 =s]$.
        \item \textbf{Induction Hypothesis ($T=k$):} Assume for a time horizon $T=k$, the value function is given by:
        \[
        V_{Q_k}(s) = \inf_{\tau \in \Lambda_k} \mathop{\mathbb{E}}_\omega [\mathcal{R}_\tau|s_0=s]
        \]
        \item \textbf{Inductive Step ($T= k+1$):} Consider the time horizon $T=k+1$. According to the Bellman equation, the value at the current state is the minimum of stopping immediately ($\tau=0$) or continuing for one step and behaving optimally thereafter (which is horizon $k$ starting at $t=1$):
        \begin{align*}
            V_{Q_{k+1}}(s) = \min \Biggl\{ &(1-\gamma) \min \{r(s_0) , g(s_0) \} + \gamma V^\phi_Y(s_0),\\
            & r(s_0) + \gamma \mathbb{E}_{\omega(s_1)}[V_{Q_k}(s_1) | s_0=s]\Biggr\}
        \end{align*}
        Applying Lemma~\ref{lem:interchange_inf_and _exp} to the above yields:
        \begin{align*}
            V_{Q_{k+1}}(s) = \min \Biggl\{ &(1-\gamma) \min \{r(s_0) , g(s_0) \} + \gamma V^\phi_Y(s_0),\inf_{\tau \in \Lambda_{k+1}, \; \tau \geq 1} \;  \mathop{\mathbb{E}}_{\omega} \left[ \mathcal{R}_{\tau}\middle\vert s_0 = s \right]\Biggr\}
        \end{align*}
    \end{enumerate}
Finally, observing that the first term is exactly the return $\mathcal{R}_0$ corresponding to the case where $\tau = 0$, the outer minimum operator simplifies into the infimum over the strict subset $\tau\geq1$, yielding an infimum over the entire domain $0\leq \tau\leq k+1$:
\par \vspace{-0\parskip} {\normalsize
\begin{align*}
V_{Q_{k+1}}(s) &= \inf_{\tau\in\Lambda_{k+1}}\mathbb{E}_{\omega}\left[\mathcal{R}_\tau\middle\vert s_0=s\right]\\
&=\inf_{\tau \in \Lambda_T} \mathop{\mathbb{E}}_\omega \left[\sum_{t=0}^{\tau -1} \gamma^t r(s_t) + \gamma^\tau \left((1-\gamma) \min \{r(s_\tau),g(s_\tau)\} +\gamma V^\phi_Y(s_\tau) \right) \bigg\vert s_0 = s \right]
\end{align*}}
\end{proof}
\objectivefunc*
\begin{proof}
    This follows directly from Theorem~\ref{thm:fixed_traj_objective_fun}. Without loss of generality, the Theorem used state-based rewards $r(s_t)$ in the main proof for notational simplicity. The extension to state-action-based rewards $r(s_t, a_t, d_t)$ holds because the expectation is taken over the distribution of trajectories.
\end{proof}
\begin{restatable}[Boundary Condition Without Joint Actions]{remark}{objectivefuncremark}
\label{remark:objective_fun_remark}
    When the stopping time $\tau$ occurs such that no subsequent action is executed at the terminal state $s_\tau$, the state-action-disturbance reward $r(s_\tau, a_\tau, d_\tau)$ is not accrued. Consequently, the terminal condition collapses entirely to the safety value function, evaluating to $(1-\gamma) g(s_\tau)+ \gamma V^\phi_Y(s_\tau)$, with no contribution from the task reward.
\end{restatable}

\subsection{Fast Timescale: Safety Critic Tracking (Sec.~\ref{subsec:fast_timescale})}\label{app:fast_timescale_proofs}
\asspmartingalediffnoise*
\asspsteps*
The updates with step‑size sequences $\{\alpha_n\},\{\beta_n\}$ are given by:
\begin{align}
  Y_{n+1} &= Y_n + \beta_n \left[ \hat{\mathcal{T}}^{\phi}_{n+1, \pi_{Q_n}}[Y_n] - Y_n + \xi^{\phi}_{n+1}\right] \label{eq:fast}\\
  Q_{n+1} &= Q_n + \alpha_n \left[ \hat{\mathcal{T}}_{n+1, Y_n}[Q_n] - Q_n + \xi_{n+1} \right] \label{eq:slow}
\end{align}
where $\xi^\phi_{n+1}$ and $\xi_{n+1}$ are martingale difference perturbations~\citep{borkar2009stochastic}, with $\mathbb{E}[\xi^{\phi}_{n+1}|\mathcal{H}_n] = \mathbb{E}[\xi_{n+1}|\mathcal{H}_n] = 0$. As the map $Q\mapsto Y_{\pi_{Q}}$ need not be continuous or Lipschitz, the standard two-timescale stochastic approximation need not apply. Thus, we analyze the fast timescale recursion as a stochastic recursive inclusion in Section \ref{subsec:fast_timescale} using the \textit{set-valued two-timescale framework} ~\citep{10.1287/moor.2019.1037, borkar2023stochastic}.

\safetycritictracking*
\begin{proof}
    We first establish some properties of the fast timescale differential inclusion. 
    \begin{itemize}
        \item[-] Let $y\in\mathcal{Y}\subseteq \mathbb{R}^{|\cS||\mathcal{A}||\cD|}$ denote the fast timescale safety critic with the limiting dynamics $h_{\pi}(y):=\mathcal{T}^{\phi}_\pi[y] - y$. Here, $\pi$ is quasi-static, as induced by the quasi-static joint critic $q\in\mathcal{Q}$. Thus, note that for any $y\in\mathcal{Y}$, the operator $\mathcal{T}^{\phi}_\pi[y]$ induces a $\gamma$-contraction, as discussed in Section \ref{subsec:contraction_safety_action_value}, and thus possesses a unique fixed point.
        \item[-] Next, note that, since $\cS, \mathcal{A}, \cD$ are finite, $\Pi(q)$ is finite and nonempty for every $q$. For a given $q$, any action that is not maximin-greedy at $q$ has a strictly positive gap from the maximum, and any disturbance that is not minimizing on some greedy action has a strictly positive gap from the minimum. Consequently, there exists $\vartheta(q) > 0$ such that $\Vert q'-q \Vert_{\infty} < \vartheta(q)$, which implies that $\Pi(q')\subseteq\Pi(q)$. Hence, $\Pi(\cdot)$ is upper semicontinuous. Moreover, by continuity of $h_\pi(\cdot)$, the map $\Psi(\cdot, \cdot)$ (as defined in Eq. \ref{eq:set_valued_fast_tracking}) is Marchaud, since it has nonempty compact convex values, and is upper semicontinuous with at most linear growth.
        \item[-] Let $\overline{g}_K := K\max_{s\in\mathcal{S}}|g(s)|$ with $B:=\left[-\overline{g}_K, \overline{g}_K\right]^{|\cS||\mathcal{A}||\cD|}$. Note that if $y\in B$, then $-\overline{g}_K\le \mathbb{E}\left[ y(s, \pi(s) \right]\le \overline{g}_K$. Hence, $\mathcal{T}^{\phi}_\pi[B]\subseteq B, \,\forall \pi$. By convexity, $B$ is forward invariant for the differential inclusion $\dot{y}\in\Psi(q, y)$. Moreover, any solution to the differential inclusion can be represented almost everywhere as:
        \begin{align}
            \dot{y}(t) = \sum_{\pi\in\Pi(q)} \lambda_{\pi}(t)\left( \mathcal{T}^{\phi}_{\pi}[y(t)] - y(t) \right),\quad \text{with }\ \lambda_{\pi}(t) \ge 0, \sum_{\pi}\lambda_{\pi}(t) = 1
        \end{align}
        Next, let $y(\cdot), z(\cdot)$ be two trajectories with the same weight $\lambda_{\pi}(\cdot)$, and let $e(t) := y(t) - z(t)$. Let $I(t) := \left\{ i:|e_i(t)| = \Vert e(t)\Vert_{\infty}\right\}$ be the coordinate set on which $|e(t)|$ achieves the supremum norm. For the supremum norm, the upper Dini derivative satisfies:
        \begin{subequations}
        \begin{align}
            \mathbb{D}^+\Vert e(t) \Vert_\infty &\le \max_{i\in I(t)} \mathrm{sgn}(e_i(t)) \dot{e}(t) \\
            \implies\mathbb{D}^+\Vert y(t) - z(t) \Vert_\infty &\le -(1-\gamma) \Vert y(t) - z(t) \Vert_\infty \ \ (\text{by $\gamma$-contraction of $\mathcal{T}^\phi_\pi$}) \\
            \therefore \Vert y(t) - z(t) \Vert_{\infty} &\le e^{-(1-\gamma)(t-s)}\Vert y(s) - z(s) \Vert_\infty \ \ (\text{by Gr\"{o}nwall})
        \end{align}
        \end{subequations}
        Thus, choosing $z(\cdot)\in B$ and the fact that $\mathcal{T}^{\phi}_\pi[B]\subseteq B$ shows that every fast timescale trajectory $y(\cdot)$ (driven by the same weights $\lambda_\pi$) for a quasi-static $q$ is attracted to the bounded set $B$.
    \end{itemize}
    \begin{enumerate}
        \item Since $\Psi(q, \cdot)$ is a Marchaud map, the differential inclusion for a quasi-static $q$ admits solutions from every initial condition. As the compact set $B$ is forward invariant, and globally attracting, the limiting dynamics $y(\cdot)$ admit a nonempty compact global attracting set $\Lambda(q)\subseteq B$ \citep{benaim2005stochastic}.
        \item Let $q_m\rightarrow q$ and let $y_m\in\Lambda(q_m)$ with $y_m\rightarrow y$. Note that for sufficiently large $m$, $\Pi(q_m)\subseteq \Pi(q)$, and thus, $\Psi(q_m, \cdot)\subseteq\Psi(q, \cdot)$. Next, for any given $y$, $\{ h_{\pi}(y):\pi\in\Pi(q_m) \} \subseteq \{ h_{\pi}(y):\pi\in\Pi(q) \}$. Since the convex hull preserves set inclusions, $\Psi(q_m, y) \subseteq \Psi(q, y)$ for every $y$, and thus, every complete solution of $q_m$-differential inclusion is also a complete solution of the $q$-differential inclusion. Consequently, $\Lambda(q_m)\subseteq\Lambda(q)$ for sufficiently large $m$. Moreover, since $\Lambda(q)$ is closed, $y\in\Lambda(q)$, and thus $\mathrm{Gr} \Lambda$ is closed. This implies that the map $q\mapsto\Lambda(q)$ is upper semicontinuous.
        \item Finally, under Assumptions \ref{ass:martingale_noise}-\ref{ass:steps} and by the two-timescale stochastic recursive inclusion framework \citep{10.1287/moor.2019.1037, ramaswamy2016stochastic}, the limit set of the coupled iterates satisfies $\omega(Y, Q)\subseteq \mathrm{Gr}\Lambda \ \ \mathrm{a.s.}$ Since the iterates are bounded and $\mathrm{Gr}\Lambda$ is closed, 
        \begin{align}
            \mathrm{dist}\left( (Y_n, Q_n), \mathrm{Gr}\Lambda \right) \rightarrow 0 \quad \mathrm{a.s.,}
        \end{align}
    \end{enumerate}

\end{proof}

\begin{remark}
    (The Quasi-Static Joint Critic) Intuitively, Lemma~\ref{lem:safety_critic_tracking} guarantees that the safety critic learns much faster than the joint critic. This implies that the joint critic is provided with an accurate estimate of its corresponding policy's safety evaluation before it updates its estimate and changes the policy.
\end{remark}

\subsection{Slow Timescale: Averaged Differential Inclusion (Sec.~\ref{subsec:slow_timescale})}\label{app:slow_timescale_proofs}

Section \ref{subsec:fast_timescale} shows that the coupled iterates asymptotically approach the graph of the compact, globally attracting set $\Lambda(q)$ for some quasi-static $q$. By Assumption \ref{ass:steps}, the slow-timescale iterate $Q_n$ remains quasi-static while the fast timescale dynamics evolve. Thus, the increments of $Q_n$ depend on the \textit{long-run average} of $G(Y_n, Q_n)$. These averages are encoded by the occupation measures in $\mathfrak{D}(q)$, leading to the set-valued $\overline{G}(q)$ in Eq. \ref{eq:G_hat_q}. 

\ode*
\begin{proof}
    Let $M_{n+1}:= \hat{\mathcal{T}}_{n+1, Y_n}[Q_n] - \mathcal{T}_{Y_n}[Q_n] + \xi_{n+1}$ be the total martingale-difference error (error due to the stochastic Bellman target and additional perturbation noise). Then, Eq. \ref{eq:slow} can be written as:
    \begin{align}
        Q_{n+1} := Q_n + \alpha_n\left[ G(Y_n, Q_n) + M_{n+1} \right], \quad \text{where } G(y, q):=\mathcal{T}_y[q] - q
    \end{align}
    \begin{enumerate}
        \item In the finite tabular setting, $G$ is continuous in both arguments; $V_q(s) = \max_a \min_d q(s, a, d)$ is a $1$-Lipschitz map in $q$, and $x\mapsto \min{\cdot, x}$ is $1$-Lipschitz map. Consequently, 
        \begin{align}
            \Vert G(y, q) - G(y', q')\Vert_\infty \leq \gamma\Vert y-y' \Vert_\infty + (1+\gamma)\Vert q-q' \Vert_\infty
        \end{align}
        Thus, $G(\cdot, \cdot)$ is continuous and has at most linear growth. 

        \item Next, let $\mathfrak{D}(q)$ denote the admissible limiting occupation measure set, appearing in the two-timescale stochastic recursive inclusion theorem of \citep{10.1287/moor.2019.1037}; these measures account for the long-run fast timescale behavior at a given $q$; the slow-timescale differential inclusion is then formed through probability-measure averaging \citep{10.1287/moor.2019.1037}:
        \begin{align}
            \overline{G}(q) = \left\{ \int_{\Lambda(q)} G(y, q)\mathbb{P}(\dd y) : \mathbb{P}\in\mathfrak{D}(q) \right\}    
        \end{align}
        \item Next, by Assumptions \ref{ass:martingale_noise}-\ref{ass:steps}, the \textit{accumulated noise} on the slow-timescale vanishes on every finite time-window:
        \begin{align}
            \lim_{n\rightarrow\infty} \sup_{m\geq n: \sum_{k=n}^m \alpha_k \le T} \left\Vert \sum_{k=n}^m \alpha_k M_{k+1} \right\Vert_\infty = 0\quad \text{a.s.}
        \end{align}
        for every $T>0$. Hence, by the slow-timescale APT result of \citep{10.1287/moor.2019.1037}, the continuous affine interpolation $\overline{Q}(\cdot)$ is almost surely an asymptotic pseudotrajectory of $\dot{q}(t) \in \overline{G}(q(t))$. Finally, by assumption, the iterates are almost surely bounded, and thus, by the limit-set theorem for bounded asymptotic pseudotrajectories \citep{benaim2005stochastic}, $\mathcal{U}:=\omega(\overline{\mathcal{Q}})$ is nonempty, compact, invariant and internally chain transitive for the differential inclusion $\dot{q} \in \overline{G}(q)$.
    \end{enumerate}
\end{proof}

\subsection{Safe Configuration Set and Forward Invariance (Sec.~\ref{sec:forward_invariance})}\label{app:forward_invariance}

The state, action and disturbance spaces are finite, so the action-value space $\mathcal{Q} = \mathcal{B}(\mathcal{S} \times \mathcal{A} \times \mathcal{D})$ is identified with $\mathbb{R}^N, N = |\mathcal{S} \times \mathcal{A} \times \mathcal{D}|$, and the limiting dynamics are an inclusion in finite dimension. The extended class-$\mathcal{K}_e$ function is written explicitly as $Kg(s)$, since the margin scale affects the bounds. We write $\norm{\cdot}_\infty$ for the supremum-norm. Recalling also that the reward function is $r:\mathcal{S}\times \mathcal{A} \times \mathcal{D} \rightarrow [0, R_{\text{max}}]$ and $\Omega \subseteq \mathcal{F}^c = \{ s : g(s)>0 \}$.

For a fixed critic $y \in \mathcal{Q}$, let $\mathcal{T}_y[q]$ denote the joint Bellman operator of Eq.~\ref{eq:joint_bellman}, and for policy $\pi$ let $\mathcal{T}^\phi_\pi[y]$ denote the safety Bellman operator of Eq.~\ref{eq:safety_bellman}. 

\kernelpreservingactions*
Since $\Omega$ is the set of states from which a control policy exists keeping the agent safe at all times regardless of the adversarial disturbances, taking $a = \mu(s)$ for such a policy shows $\mathcal{A}^\Omega(s) \neq \emptyset \; \forall s \in \Omega$. This is a property of the viability kernel. Since the transition kernel is stochastic, the kernel preservation is \textit{almost sure}, and the safety conclusions are almost surely with respect to the transition randomness.
\indexsetenlargements*
The index sets are determined by the transition kernel and by $\Omega$; they depend on neither $q$ nor any policy. Throughout we take $\kappa >0$ and $\epsilon_0 \in (0, \kappa/2)$. The upper bound on $\epsilon_0$ ensures the two families of constraints remain strictly separated on all of $\mathcal{C}_{\epsilon_0}$. 

We fix a deterministic Borel-measurable selection in Eq.~\ref{eq:greedy_mu_nu}: we choose a single action $\mu_q(s)$ from the maximizers of $\min_d q(s, \cdot, d)$, and a single disturbance $\nu_q(s,\mu_q(s))$ from the minimizers of $q(s, \mu_q(s),\cdot)$. Existence of such selections is guaranteed by the Kuratowski-Ryll-Nardzewski theorem. Fixing the selection makes $q \mapsto \pi_q$ single-valued but not continuous; this is why fast timescale is analyzed through $\Pi(q)$ and $\Lambda(q)$ rather than through a single equilibrium map, and why the slow limit is the inclusion Eq.~\ref{eq:slow_inclusion}.
\begin{align}\label{eq:slow_inclusion}
    \dot{q}(t) \in \overline{G}(q(t))
\end{align}
One might expect a condition of the form $\min_d q(s,a,d) \le -\kappa$ for kernel-exiting $a$; but the disturbance attaining the inner minimum of $q$ need not be the one under which the action leaves the kernel (the truncation bound in Lemma~\ref{lem:target_bounds} is available only for the latter). Fixing $\hat{d}$ removes the mismatch while still yielding $\min_d q(s,a,d) \leq q(s,a,\hat{d}(s,a)) \leq -\kappa$. 

\safecfg*
The intuition is as follows: the lower bound states that the joint action value has not \textit{underestimated} kernel-preserving actions, and the upper bound states that it has not \textit{overestimated} actions that leave the kernel. We now define the distance to the safe configuration set as $\rho(q):= \text{dist}(q, \mathcal{Q}_\text{cfg})$, the supremum-norm distance.

\subsubsection{Geometry of the Safe Configuration Set}\label{subsubsec:geometry_safe_set}
\begin{lemma}[Structure of $\mathcal{C}_\epsilon$]\label{lem:box_geometry}
    (i) $\mathcal{I} \cap \mathcal{J} = \emptyset$ and $\mathcal{Q}_{\text{cfg}} \neq \emptyset$ (ii) Every $\mathcal{C}_\epsilon$ is a closed convex polyhedron. (iii) For every $q$,  
    \begin{equation*}\label{eq:app_rho_form}
        \rho(q) = \max \Big\{ \max_{x \in \mathcal{I}} (-q(x))_+ \;, \; \max_{x \in \mathcal{J}} (q(x) +\kappa)_+\Big\}
    \end{equation*}
    and consequently $\mathcal{C}_\epsilon = \{ q: \rho(q) \leq \epsilon \}$.
\end{lemma}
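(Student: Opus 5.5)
The plan is to treat $\mathcal{C}_\epsilon$ purely as a system of coordinatewise inequalities in the finite-dimensional space $\mathcal{Q}\cong\mathbb{R}^N$ equipped with $\Vert\cdot\Vert_\infty$. All three claims then reduce to elementary facts about boxes, because each defining constraint involves a single coordinate $q(x)$.

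For (i), I would observe that every element of $\mathcal{I}$ has the form $(s,a,d)$ with $a\in\mathcal{A}^\Omega(s)$. Every element of $\mathcal{J}$ has the form $(s,a,\hat d(s,a))$ with $a\notin\mathcal{A}^\Omega(s)$. These two conditions on the action component are mutually exclusive, so $\mathcal{I}\cap\mathcal{J}=\emptyset$. For nonemptiness, I would construct $q$ explicitly: set $q(x)=0$ on $\mathcal{I}$, $q(x)=-\kappa$ on $\mathcal{J}$, and arbitrary values elsewhere. Disjointness guarantees this assignment is well defined, so $q\in\mathcal{Q}_{\text{cfg}}$.

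For (ii), I would note that $\mathcal{C}_\epsilon$ is a finite intersection of closed half-spaces $\{q: -q(x)\le\epsilon\}$ for $x\in\mathcal{I}$ and $\{q: q(x)\le-\kappa+\epsilon\}$ for $x\in\mathcal{J}$. It is therefore a closed convex polyhedron. It is in fact a product of intervals and copies of $\mathbb{R}$: the set is $\prod_x I_x$ with $I_x=[-\epsilon,\infty)$ on $\mathcal{I}$, $I_x=(-\infty,-\kappa+\epsilon]$ on $\mathcal{J}$, and $I_x=\mathbb{R}$ otherwise. This product structure is what drives (iii).

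For (iii), the key fact is that under the sup norm the distance to a product set $\prod_x I_x$ of closed intervals equals $\max_x \mathrm{dist}(q(x),I_x)$. For the lower bound, any $p$ in the set satisfies $\Vert q-p\Vert_\infty\ge|q(x)-p(x)|\ge\mathrm{dist}(q(x),I_x)$ for each coordinate $x$. For the upper bound, the coordinatewise projection $p(x)=\mathrm{proj}_{I_x}q(x)$ lies in the set and attains the maximum; it is well defined because the sets $I_x$ are assigned independently across the disjoint index sets. For $\epsilon=0$, one computes $\mathrm{dist}(q(x),[0,\infty))=(-q(x))_+$ and $\mathrm{dist}(q(x),(-\infty,-\kappa])=(q(x)+\kappa)_+$, with zero contribution from free coordinates. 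This gives the stated formula for $\rho$. Finally, $\rho(q)\le\epsilon$ holds iff every term is at most $\epsilon$, i.e.\ $q(x)\ge-\epsilon$ on $\mathcal{I}$ and $q(x)\le-\kappa+\epsilon$ on $\mathcal{J}$, which is exactly $\mathcal{C}_\epsilon$.

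The only mildly delicate point is justifying that the distance to the box decomposes coordinatewise under the sup norm. Disjointness of $\mathcal{I}$ and $\mathcal{J}$ from (i) is what makes the box well defined, so this step is routine once (i) is in place.
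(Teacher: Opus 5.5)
Your proposal is correct and follows the paper's proof essentially step for step. The steps are: disjointness from the mutually exclusive conditions $a\in\mathcal{A}^\Omega(s)$ versus $a\notin\mathcal{A}^\Omega(s)$; the explicit witness ($0$ on $\mathcal{I}$, $-\kappa$ on $\mathcal{J}$); a finite intersection of half-spaces for (ii); and, for (iii), a coordinatewise lower bound together with the coordinatewise clip as the point attaining the distance. Your product-of-intervals framing only makes explicit the box structure that the paper uses implicitly, including why disjointness is needed.
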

\begin{proof}
    (i) A triple of $\mathcal{I}$ has $a \in \mathcal{A}^\Omega(s)$ and a triple of $\mathcal{J}$ has $a \notin \mathcal{A}^\Omega(s)$, so the index sets are disjoint. Next, set $q(x) = 0$ on $x \in \mathcal{I}$, $q(x) = - \kappa$ on $x \in \mathcal{J}$ and arbitrary values elsewhere, therefore this point lies in $\mathcal{Q}_{\text{cfg}}$.\\
    (ii) Each condition is a closed half-space of the form $\{q:\pm q(x) \leq \beta\}$, and a finite intersection of half-spaces is a closed convex polyhedron.\\
    (iii) Let $p \in \mathcal{Q}_{\text{cfg}}$. We first check if the distance $\norm{ q - p}_\infty$ is at least the asserted quantity. For $x \in \mathcal{I}$ with $q(x) < 0$ we have $p(x) \ge 0$, hence $\norm{q(x) - p(x)}_\infty \ge -q(x)$; for $x \in \mathcal{J}$ with $q(x) > -\kappa$ we have $p(x) \le -\kappa$, hence $\norm{q(x) - p(x)}_\infty \ge q(x) + \kappa$. Thus $\norm{q - p}_\infty$ is at least the asserted quantity. Conversely, the coordinatewise clip defined by $p(x) = \max(q(x),0)$ on $x \in \mathcal{I}$, $p(x) = \min(q(x),-\kappa)$ on $ x \in \mathcal{J}$ and $p(x) = q(x)$ on the remaining coordinates belongs to $\mathcal{Q}_{\text{cfg}}$ and attains it. Comparison with the defining inequalities of $\mathcal{C}_\epsilon$ gives the last claim.
\end{proof}
The intuition is that $\mathcal{Q}_{\text{cfg}}$ \textit{is a box}: its constraints are coordinate-separated, so its metric projection is a clip and its distance function has a closed form. By Lemma~\ref{lem:box_geometry}(iii), the family $\mathcal{C}_\epsilon$ is precisely the family of enlargements of $\mathcal{Q}_{\text{cfg}}$.

\begin{lemma}[Regularity of the distance]\label{lem:dist_regularity}
    Under the supremum norm, for any convex $C \subseteq \mathbb{R}^N$, the function $\text{dist}(\cdot, C)$ is convex and $1$-Lipschitz.
\end{lemma}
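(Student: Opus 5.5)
We prove the two properties separately, working directly from the definition $\mathrm{dist}(q,C)=\inf_{p\in C}\norm{q-p}_\infty$. Only the triangle inequality is needed for the Lipschitz bound, and only convexity of $C$ together with homogeneity of the norm is needed for convexity. No finiteness or closedness of $C$ is required beyond nonemptiness. This matters because Lemma~\ref{lem:box_geometry}(i) already gives $\mathcal{Q}_{\text{cfg}}\neq\emptyset$, so $\rho$ is finite everywhere.

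\textbf{Lipschitz continuity.} Fix $q,q'\in\mathbb{R}^N$ and any $p\in C$. The triangle inequality gives $\mathrm{dist}(q,C)\le\norm{q-p}_\infty\le\norm{q-q'}_\infty+\norm{q'-p}_\infty$. Taking the infimum over $p\in C$ yields $\mathrm{dist}(q,C)\le\norm{q-q'}_\infty+\mathrm{dist}(q',C)$. Exchanging the roles of $q$ and $q'$ gives $|\mathrm{dist}(q,C)-\mathrm{dist}(q',C)|\le\norm{q-q'}_\infty$.

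\textbf{Convexity.} Fix $q_1,q_2$, $\lambda\in[0,1]$ and $\eta>0$. Choose $p_i\in C$ with $\norm{q_i-p_i}_\infty\le\mathrm{dist}(q_i,C)+\eta$. By convexity of $C$, the point $p_\lambda:=\lambda p_1+(1-\lambda)p_2$ lies in $C$. Then
\begin{align*}
\mathrm{dist}(\lambda q_1+(1-\lambda)q_2,C)&\le\norm{\lambda(q_1-p_1)+(1-\lambda)(q_2-p_2)}_\infty\\
&\le\lambda\,\mathrm{dist}(q_1,C)+(1-\lambda)\,\mathrm{dist}(q_2,C)+\eta,
\end{align*}
where the second step uses the triangle inequality and positive homogeneity of the supremum norm. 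Letting $\eta\downarrow0$ gives convexity.

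There is no real obstacle here. The only point needing care is avoiding the assumption that a nearest point exists, which is why we use $\eta$-approximate minimizers; for the polyhedral $\mathcal{C}_\epsilon$ an exact minimizer is in fact available through the clip construction of Lemma~\ref{lem:box_geometry}(iii).

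As a sanity check for the later use in Theorem~\ref{thm:forward_invariance}, the closed form of Lemma~\ref{lem:box_geometry}(iii) gives a direct argument. There $\rho$ is a maximum of the functions $(-q(x))_+$ and $(q(x)+\kappa)_+$, each convex and $1$-Lipschitz in $\norm{\cdot}_\infty$, and a pointwise maximum preserves both properties. This also makes the Dini-derivative computations in the forward-invariance proof available, since $\rho$ is Lipschitz and hence $\rho(q(t))$ is absolutely continuous along solutions.
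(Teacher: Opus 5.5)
Your proof is correct and follows the paper's argument essentially line for line. For convexity, both use $\eta$-approximate minimizers combined through convexity of $C$; for the Lipschitz bound, both use the triangle inequality, take the infimum over $p \in C$, and swap $q$ and $q'$. Your explicit nonemptiness caveat and your closed-form check via Lemma~\ref{lem:box_geometry}(iii) are valid but unnecessary additions; the caveat is needed for finiteness and is left implicit in the paper's statement.
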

\begin{proof}
    For convexity: Given $q, q' \in \mathcal{Q}$, $\lambda \in [0,1]$ and $\eta >0$, choose $p , p' \in C$ with $\norm{q-p}_\infty \leq \text{dist}(q,C)+\eta$ and similarly for $p'$. By convexity of $C$, $\lambda p + (1-\lambda)p' \in C$. Then, applying to distance function, $\text{dist}(\lambda q + (1-\lambda)q', C) \leq \norm{(\lambda q + (1-\lambda)q') - (\lambda p + (1-\lambda)p')}_\infty = \norm{\lambda (q-p) + (1-\lambda)(q'-p')}_\infty\leq \lambda \text{dist}(q, C) + (1-\lambda)\text{dist}(q', C) +\eta$. Then taking $\eta \rightarrow 0$, gives us the claim of convexity.\\
    For 1-Lipschitz: As $\text{dist}(q,C) \leq \norm{q - p}_\infty \leq \norm{q - q'}_\infty + \norm{q' - p}_\infty \forall p \in C$, taking the infimum over $p$ and exchanging $q$ and $q'$ yields $|\text{dist}(q,C) - \text{dist}(q',C)| \leq \norm{q - q'}_\infty$.
\end{proof}
\begin{lemma}[Convex distance decay]\label{lem:distance_decay}
    Let $C \subseteq\mathbb{R}^N$ be closed, convex and nonempty, $\rho_C: = \text{dist}(\cdot, C)$ be the supremum-norm, $\theta\in [0,1)$, $R \in (0,\infty]$, and $\Phi: \mathbb{R}^N \rightarrow\mathbb{R}^N$ be a set-valued map. Suppose that for every $x$ with $\rho_C(x) < R$, every $v \in \Phi(x)$ has the form $v=z-x$ with $\rho_C(z) \leq \theta \rho_C(x)$. Then every absolutely continuous $x(\cdot)$ with $x(t) \in \Phi(x(t))$ almost everywhere and $\rho_C(x(0)) < R$ satisfies
    \begin{equation*}
        \rho_C(x(t)) \leq \rho_C(x(0))e^{-(1-\theta)t} \quad \forall \; t\geq 0
    \end{equation*}
\end{lemma}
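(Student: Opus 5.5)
The plan is to study the scalar function $f(t):=\rho_C(x(t))$ along a solution, reading the hypothesis as $\dot{x}(t)\in\Phi(x(t))$ almost everywhere (which is clearly intended). First I show that $f$ satisfies the differential inequality $\dot f(t)\le-(1-\theta)f(t)$ for almost every $t$ at which $f(t)<R$. Then I integrate this with an integrating factor. Finally I use a continuation argument to show that the constraint $f<R$ is never lost.

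\textbf{Step 1 (regularity of $f$).} By Lemma~\ref{lem:dist_regularity}, $\rho_C$ is convex and $1$-Lipschitz in $\Vert\cdot\Vert_\infty$. A Lipschitz function composed with an absolutely continuous curve is absolutely continuous, so $f$ is absolutely continuous and differentiable almost everywhere. Fix a time $t$ with three properties: $x$ is differentiable at $t$, $f$ is differentiable at $t$, and $v:=\dot x(t)\in\Phi(x(t))$. Almost every $t$ qualifies.

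\textbf{Step 2 (one-sided derivative bound via convexity).} Suppose $f(t)<R$. By hypothesis, $v=z-x(t)$ for some $z$ with $\rho_C(z)\le\theta f(t)$. For $h\in[0,1]$ we have $x(t)+hv=(1-h)x(t)+hz$, so convexity of $\rho_C$ gives
\begin{equation*}
\rho_C(x(t)+hv)\le(1-h)f(t)+h\,\theta f(t)=\bigl(1-h(1-\theta)\bigr)f(t).
\end{equation*}
Differentiability of $x$ at $t$ gives $\Vert x(t+h)-x(t)-hv\Vert_\infty=o(h)$. By the $1$-Lipschitz property, $f(t+h)\le\bigl(1-h(1-\theta)\bigr)f(t)+o(h)$. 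Dividing by $h$ and letting $h\downarrow0$ yields $\dot f(t)\le-(1-\theta)f(t)$. This is the main step. The only subtlety is that $\Phi$ is set-valued and need not be regular, so the bound must be obtained pointwise at the velocity actually realized, which is why convexity is applied along the segment toward the specific $z$.

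\textbf{Step 3 (integration and continuation).} Let $t^*:=\sup\{t\ge0: f(s)<R\ \forall s\in[0,t]\}$. Then $t^*>0$ by continuity, since $f(0)<R$. On $[0,t^*)$ the function $e^{(1-\theta)t}f(t)$ is absolutely continuous with almost-everywhere derivative $e^{(1-\theta)t}(\dot f+(1-\theta)f)\le0$. Hence it is nonincreasing, and so $f(t)\le f(0)e^{-(1-\theta)t}\le f(0)<R$ there.

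Suppose $t^*<\infty$. Continuity then gives $f(t^*)\le f(0)<R$, so $f<R$ on a neighbourhood beyond $t^*$, contradicting the definition of $t^*$. Therefore $t^*=\infty$, and the exponential bound holds for all $t\ge0$. When $R=\infty$ the continuation step is vacuous.
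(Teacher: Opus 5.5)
Your proposal is correct and follows the paper's proof essentially step for step: the paper also sets $u(t)=\rho_C(x(t))$, applies convexity of $\rho_C$ along the chord $(1-h)x(t)+hz$ together with the $1$-Lipschitz bound to get $\dot u\le-(1-\theta)u$ a.e.\ on $[0,t^*)$, integrates (via Gr\"onwall rather than your integrating factor), and closes with the same supremum/continuation argument for $t^*$. Your version is slightly more careful, but these are refinements rather than a different route: you read the hypothesis as $\dot x(t)\in\Phi(x(t))$, and you explicitly pick $t$ where $x$ itself is differentiable, which the paper glosses as ``absolute continuity gives $x(t+h)=x(t)+hv+o(h)$''.
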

\begin{proof}
    Let $u(t) := \rho_C(x(t))$. Since $\rho_C$ is $1$-Lipschitz (Lemma~\ref{lem:dist_regularity}), $u$ is absolutely continuous and hence differentiable almost everywhere. Let $t^* := \sup \{ \tau \geq 0: u(t) < R \; \forall\; t \leq \tau \}$. Notice that as $u(0)<R$ and $u$ is continuous, $t^* >0$. Consider $t < t^*$ at which $u$ is differentiable and fix $v := \dot{x}(t)$ such that $v = z-x(t)$ with $\rho_C(z)\leq \theta u(t)$. For $h \in (0,1)$, absolute continuity gives $x(t+h) = x(t) + hv +o(h)$. By the Lipschitz property, we get, 
    \[
    u(t+h) \leq \rho_C(x(t)+hv) + o(h) = \rho_C ((1-h)x(t) + hz) +o(h)
    \]
    Convexity of $\rho_C$ gives us,
    \[
    u(t+h) \leq (1-h) u(t) +h \rho_C(z) +o(h) \leq (1-h)u(t) +h \theta u(t) +o(h)
    \]
    Dividing by $h$ and letting $h \to 0$ yields us $\dot{u}(t) \leq- (1-\theta) u(t)$ almost everywhere on $[0, t^*)$, so Gr\"onwall's inequality gives us $u(t) \leq u(0) e^{-(1-\theta)t}$. In particular $u$ is non-increasing, so $u(t) \leq u(0) < R$. If $t^* < \infty$, continuity gives us $u(t^*) \leq u(0) < R$ and hence $u<R$ on neighbourhood of $t^*$, contradicting the definition of $t^*$ as a supremum. Therefore $t^* = \infty$.
\end{proof}
\begin{lemma}[Selection gap]\label{lem:selection_gap}
    Let $0 \leq \epsilon \leq \epsilon_0$ and $q \in \mathcal{C}_\epsilon$. Then,
    \[
    \max_{a \in \mathcal{A}^\Omega(s)} \min_d q(s,a,d) - \max_{a \notin \mathcal{A}^\Omega(s)} \min_d q(s,a,d) \geq \kappa - 2\epsilon > 0 \; \forall \; s \in \Omega
    \]
    and consequently every maximin greedy policy $\pi =(\mu, \nu) \in \Pi_q$ satisfies $\mu(s) \in \mathcal{A}^\Omega(s)$. For $q \in \mathcal{Q}_\text{cfg}$, the gap is atleast $\kappa$.
\end{lemma}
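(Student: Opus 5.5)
The plan is to read both bounds off the defining inequalities of $\mathcal{C}_\epsilon$ and then compare the two maximin values directly; no dynamics are needed. Fix $s\in\Omega$ and $q\in\mathcal{C}_\epsilon$ with $0\le\epsilon\le\epsilon_0<\kappa/2$.

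\emph{Lower bound on kernel-preserving actions.} By the remark following Definition~\ref{defn:kernel_preserving_actions}, $\mathcal{A}^\Omega(s)\neq\emptyset$. For every $a\in\mathcal{A}^\Omega(s)$ and every $d\in\mathcal{D}$, the triple $(s,a,d)$ lies in $\mathcal{I}$. The defining inequality of $\mathcal{C}_\epsilon$ then gives $q(s,a,d)\ge-\epsilon$. Hence $\min_d q(s,a,d)\ge-\epsilon$, and so $\max_{a\in\mathcal{A}^\Omega(s)}\min_d q(s,a,d)\ge-\epsilon$.

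\emph{Upper bound on kernel-exiting actions.} This is the point where the fixed disturbance $\hat d(s,a)$ of Definition~\ref{defn:index_sets_enlargements} is used, as anticipated in the remark preceding Definition~\ref{defn:q_cfg}. For $a\notin\mathcal{A}^\Omega(s)$ we have $(s,a,\hat d(s,a))\in\mathcal{J}$. Therefore
\[
\min_d q(s,a,d)\le q(s,a,\hat d(s,a))\le-\kappa+\epsilon .
\]
Taking the maximum over the finitely many such $a$ gives $\max_{a\notin\mathcal{A}^\Omega(s)}\min_d q(s,a,d)\le-\kappa+\epsilon$. If every action is kernel-preserving, this maximum is over an empty set; we read it as $-\infty$, and the claim is trivial.

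\emph{The gap and its consequences.} Subtracting the two bounds yields a gap of at least $-\epsilon-(-\kappa+\epsilon)=\kappa-2\epsilon$. This is strictly positive because $\epsilon\le\epsilon_0<\kappa/2$.

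Now let $\pi=(\mu,\nu)\in\Pi(q)$. By Eq.~\ref{eq:ties_greedy_policies}, $\mu(s)$ maximizes $a\mapsto\min_d q(s,a,d)$ over all of $\mathcal{A}$. Since the preserving maximum strictly exceeds every exiting value, no maximizer can lie outside $\mathcal{A}^\Omega(s)$, so $\mu(s)\in\mathcal{A}^\Omega(s)$. Setting $\epsilon=0$ gives $\mathcal{C}_0=\mathcal{Q}_{\text{cfg}}$ and the gap $\ge\kappa$.

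There is no real obstacle here. The only delicate point is the mismatch between the minimizing disturbance and the exiting disturbance, and the choice of $\hat d$ resolves it.
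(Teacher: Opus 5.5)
Your proposal is correct and matches the paper's proof step for step: the lower bound $-\epsilon$ on kernel-preserving actions comes from $\mathcal{I}$, the upper bound $-\kappa+\epsilon$ on kernel-exiting actions uses the fixed disturbance $\hat d(s,a)$ in $\mathcal{J}$, and the case $\mathcal{A}^\Omega(s)=\mathcal{A}$ is handled by the empty-set convention. You also spell out why every greedy $\mu(s)$ must be kernel-preserving, which the paper leaves implicit, and you correctly use $\epsilon\le\epsilon_0<\kappa/2$ where the paper writes $\epsilon<\epsilon_0$, slightly mismatching the statement's $\epsilon\le\epsilon_0$.
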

\begin{proof}
    Noting that $\mathcal{A}^\Omega(s) \neq \emptyset \; \forall \; s \in \Omega$, so the first maximin is not over an empty set. For $a \in \mathcal{A}^\Omega(s)$, we have $(s,a,d) \in \mathcal{I} \; \forall \; d \in \mathcal{D}$, by definitions of kernel-preserving action set and index set. Therefore, $\min_d q(s,a,d) \geq -\epsilon$. We now consider two cases for the second maximin
    \begin{itemize}
        \item If $\mathcal{A}^\Omega(s) = \mathcal{A}$, then the second maximin is over an empty set and so $\max_{a \notin \mathcal{A}^\Omega(s)} \min_d q(s,a,d) = -\infty$ by convention. Hence, the claims of the lemma immediately follow.
        \item If $\mathcal{A}^\Omega(s) \neq \mathcal{A}$, for $a \notin \mathcal{A}^\Omega(s)$, the triple $(s,a,\hat{d}(s,a)) \in \mathcal{J}$ by definition of $\mathcal{J}$. Hence, $\min_d q(s,a,d) \leq q(s,a,\hat{d}(s,a)) \leq -\kappa +\epsilon$ and the gap is therefore at least $\kappa - 2\epsilon$. Noting that this selection gap of $\kappa - 2\epsilon$ is strictly positive since $\epsilon< \epsilon_0 < \kappa/2$ completes the proof.
    \end{itemize}
\end{proof}
Lemma~\ref{lem:selection_gap} depends only on the coordinate inequalities definition $\mathcal{C}_\epsilon$, and does not depend on any property of the critic. This is what allows us to use this in the proof of forward invariance without creating circular dependency. Note that this is stated for the whole family $\mathcal{C}_\epsilon$ and $\forall\; \pi \in \Pi_q$ because the lemma applies to arbitrary greedy branch and at configurations in $\mathcal{C}_{\epsilon_0}$ rather than only in $\mathcal{C}_0$. A further consequence is that on $\mathcal{Q}_\text{cfg}$ no tie between kernel-preserving and kernel-exiting action can arise, so choice of tie-breaking rule in the selection convention does not affect the following safety conclusions.

\subsubsection{Safety Certification of the fast attractor}
As the coupled iterated approach the graph of $\Lambda(\cdot)$, $\mathrm{Gr}\,\Lambda$, rather than a single critic, safety certification (both positive and negative) must hold throughout $\Lambda(q)$, uniformly over $q \in \mathcal{C}_{\epsilon_0}$. For $\sigma >0$, we define the certificate as,
\[
\mathcal{Y}_\sigma = \{ y \in \mathcal{Q}: y(x) \geq 0 \; \forall\; x \in \mathcal{I} \; , \; y(x) \leq -\sigma \; \forall\; x \in \mathcal{J} \},
\]
a closed convex polyhedron by Lemma~\ref{lem:box_geometry}(ii).

Only the negative half of the certificate that constrains the safety critic on kernel-exiting triples is assumed as follows.
\attractorcertification*
The corresponding condition on $\mathcal{I}$ is not assumed because it is preserved by every branch of the fast inclusion (proved in the following lemma and corollary) and therefore the certificate holds throughout $\Lambda(q)$.
\begin{lemma}[Branchwise absorption of the positive constraints]\label{lem:branchwise_absorption}
    Let $q \in \mathcal{C}_{\epsilon_0}$ and $\pi = (\mu, \nu) \in \Pi(q)$. If $y(x) \geq 0 \; \forall \; x \in \mathcal{I}$, then $\mathcal{T}^\phi_\pi[y](x) \geq 0 \; \forall \; x \in \mathcal{I}$. 
\end{lemma}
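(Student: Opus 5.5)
We argue coordinatewise. Fix $x=(s,a,d)\in\mathcal{I}$, so $s\in\Omega$, $a\in\mathcal{A}^\Omega(s)$, and $d\in\mathcal{D}$ is arbitrary. By Eq.~\ref{eq:safety_bellman},
\[
\mathcal{T}^\phi_\pi[y](x)=(1-\gamma)Kg(s)+\gamma\min\Bigl\{Kg(s),\ \mathbb{E}_{s'\sim p(\cdot|s,a,d)}\bigl[y(s',\mu(s'),\nu(s',\mu(s')))\bigr]\Bigr\}.
\]
Here the inner max/min over the singleton policy values collapses to evaluating $y$ at the policy triple. It therefore suffices to show two things: that $g(s)\ge 0$, and that the expectation term is nonnegative. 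The result then follows, since a convex combination of nonnegative quantities is nonnegative.

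The first point is immediate. Since $\Omega\subseteq\mathcal{F}^c$, we have $Kg(s)\ge 0$, so the first summand and the first argument of the min are both nonnegative.

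For the second point, use that $a\in\mathcal{A}^\Omega(s)$. By Definition~\ref{defn:kernel_preserving_actions}, $P(s'\in\Omega\mid s,a,d)=1$, so almost every successor $s'$ lies in $\Omega$. At each such $s'$, Lemma~\ref{lem:selection_gap} applies because $q\in\mathcal{C}_{\epsilon_0}$ and $\epsilon_0<\kappa/2$. It shows that every maximin-greedy branch $\pi=(\mu,\nu)\in\Pi(q)$ selects $\mu(s')\in\mathcal{A}^\Omega(s')$. Hence, for whatever disturbance $\nu(s',\mu(s'))\in\mathcal{D}$ is chosen, the triple $(s',\mu(s'),\nu(s',\mu(s')))$ belongs to $\mathcal{I}$. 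The hypothesis $y\ge 0$ on $\mathcal{I}$ then gives a nonnegative integrand almost surely. Integrating, $\mathbb{E}[\cdot]\ge 0$, and therefore $\min\{Kg(s),\mathbb{E}[\cdot]\}\ge 0$.

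The only delicate step is the passage from $s'\in\Omega$ to membership of the successor triple in $\mathcal{I}$. This is precisely why Lemma~\ref{lem:selection_gap} was stated for the relaxed family $\mathcal{C}_\epsilon$ with $\epsilon\le\epsilon_0$ and for every branch of $\Pi(q)$, rather than only for $\mathcal{Q}_{\mathrm{cfg}}$ and a fixed selection. Successor states outside $\Omega$ occur only on a null set, so they do not affect the expectation. I do not anticipate any other obstacle.
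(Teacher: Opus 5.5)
Your proposal is correct and follows essentially the same argument as the paper: $Kg(s)\ge 0$ from $\Omega\subseteq\mathcal{F}^c$, almost-sure kernel preservation of successors under $a\in\mathcal{A}^\Omega(s)$, and Lemma~\ref{lem:selection_gap} at $q\in\mathcal{C}_{\epsilon_0}$ to place each successor policy triple in $\mathcal{I}$, so the expectation and hence the whole target are nonnegative. The paper asserts the strict inequality $Kg(s)>0$, but as your argument shows, only $Kg(s)\ge 0$ is needed.
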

\begin{proof}
    Let $x = (s,a,d) \in \mathcal{I}$. Since $s \in \Omega \subseteq \mathcal{F}^c$, we have $Kg(s) > 0$ by definition of the scaled safety margin function. As $a \in \mathcal{A}^\Omega(s)$, every successor state $s'\in \Omega \; \forall \;d \in \mathcal{D}$. By Lemma~\ref{lem:selection_gap}, $\mu(s') \in \mathcal{A}^\Omega(s') \implies (s', \mu(s'), \nu(s', \mu(s'))) \in \mathcal{I} \implies y(s', \mu(s'), \nu(s', \mu(s'))) \geq 0 $. Hence, $\mathbb{E}_{s'}[y(\cdot)] \geq 0$. Now, substituting these inequalities in the safety Bellman equation, Eq.~\ref{eq:safety_bellman}, we get
    \[
    \mathcal{T}^\phi_\pi[y](x) = (1-\gamma) Kg(s) +\gamma \min\{ Kg(s), \mathbb{E}_{s'}[y(\cdot)] \} \geq 0
    \]
\end{proof}
\begin{corollary}[Positive certification]\label{cor:positive_certification}
    For every $q \in \mathcal{C}_{\epsilon_0}$ and every $y \in \Lambda_1(q)$, $y(x) \geq 0 \; \forall \; x \in \mathcal{I}$. Together with Assumption~\ref{ass:attractor_certication} this gives $\Lambda_1(q) \subseteq \mathcal{Y}_{\sigma_\Lambda}$.
\end{corollary}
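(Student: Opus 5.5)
The plan is to show that the closed convex set $\mathcal{Y}^+ := \{ y \in \mathcal{Q} : y(x) \ge 0 \; \forall\, x \in \mathcal{I}\}$ is forward invariant and globally attracting for the fast inclusion $\dot{y} \in \Psi(q,y)$ at unit margin scale, for any fixed $q \in \mathcal{C}_{\epsilon_0}$. Since $\Lambda_1(q)$ is the global attractor, it must then lie in $\mathcal{Y}^+$. Intersecting with Assumption~\ref{ass:attractor_certication} gives $\Lambda_1(q) \subseteq \mathcal{Y}_{\sigma_\Lambda}$.

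\textbf{Step 1 (one branch).} Fix $\pi \in \Pi(q)$ and write $C := \mathcal{Y}^+$. By Lemma~\ref{lem:box_geometry}(iii) applied to this box, $\mathrm{dist}(y, C) = \max_{x\in\mathcal{I}} (-y(x))_+$. I will show $\mathrm{dist}(\mathcal{T}^\phi_\pi[y], C) \le \gamma\, \mathrm{dist}(y, C)$.

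Let $\delta := \mathrm{dist}(y,C)$ and let $\tilde y$ be the clip of $y$ onto $C$. Then $\tilde y \in C$ and $\|y - \tilde y\|_\infty = \delta$. Lemma~\ref{lem:branchwise_absorption} applies because $q \in \mathcal{C}_{\epsilon_0}$, so $\mathcal{T}^\phi_\pi[\tilde y] \in C$. The $\gamma$-contraction of Proposition~\ref{prop:safety_bellman_contraction} gives $\|\mathcal{T}^\phi_\pi[y] - \mathcal{T}^\phi_\pi[\tilde y]\|_\infty \le \gamma\delta$. Hence $\mathrm{dist}(\mathcal{T}^\phi_\pi[y], C) \le \gamma\delta$.

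\textbf{Step 2 (convexified inclusion).} Every element of $\Psi(q,y)$ is a convex combination $\sum_\pi \lambda_\pi (\mathcal{T}^\phi_\pi[y] - y)$, which equals $z - y$ with $z := \sum_\pi \lambda_\pi \mathcal{T}^\phi_\pi[y]$. The closed convex hull of finitely many points is just their convex hull, so no closure issue arises. By Lemma~\ref{lem:dist_regularity}, $\mathrm{dist}(\cdot,C)$ is convex, so $\mathrm{dist}(z,C) \le \gamma\, \mathrm{dist}(y,C)$. This verifies the hypothesis of Lemma~\ref{lem:distance_decay} with $\theta = \gamma$ and $R = \infty$. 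Consequently every solution satisfies $\mathrm{dist}(y(t),C) \le \mathrm{dist}(y(0),C)\, e^{-(1-\gamma)t}$.

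\textbf{Step 3 (transfer to the attractor).} Take $y_0 \in \Lambda_1(q)$. By definition, $y_0 = y(0)$ for some bounded complete solution $y:\mathbb{R}\to\mathcal{Y}$. Apply Step 2 on $[-T, 0]$ by time-shifting, which gives $\mathrm{dist}(y_0, C) \le \sup_t \mathrm{dist}(y(t),C)\, e^{-(1-\gamma)T}$. The supremum is finite by boundedness and the $1$-Lipschitz property. Letting $T\to\infty$ gives $y_0 \in C$, which is the positive certification. Combining with Assumption~\ref{ass:attractor_certication}, which supplies $y(x) \le -\sigma_\Lambda$ on $\mathcal{J}$, yields $\Lambda_1(q) \subseteq \mathcal{Y}_{\sigma_\Lambda}$.

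\textbf{Main obstacle.} The delicate point is Step 1's use of Lemma~\ref{lem:branchwise_absorption} for an arbitrary $\pi \in \Pi(q)$ when $q$ lies only in the enlargement $\mathcal{C}_{\epsilon_0}$. The lemma needs every greedy branch to select kernel-preserving actions, which Lemma~\ref{lem:selection_gap} guarantees for all $q \in \mathcal{C}_{\epsilon_0}$. One must also check that the absorption argument holds at unit scale $K=1$. It does, because it relies only on $g(s) \ge 0$ on $\Omega$ and on the successor states remaining in $\Omega$, and neither fact depends on $K$.
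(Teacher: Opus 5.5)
Your proposal is correct and follows essentially the same route as the paper: a one-branch $\gamma$-contraction of the distance to $\{y : y(x)\ge 0 \ \forall x\in\mathcal{I}\}$, convexity to pass to $\Psi(q,y)$, Lemma~\ref{lem:distance_decay} with $\theta=\gamma$ and $R=\infty$, and a backward-in-time argument along a bounded complete solution. The only difference is cosmetic. You get the one-branch bound by clipping $y$ and combining Lemma~\ref{lem:branchwise_absorption} with Proposition~\ref{prop:safety_bellman_contraction}, whereas the paper bounds $\mathcal{T}^\phi_\pi[y](x)\ge-\gamma\epsilon$ on $\mathcal{I}$ directly, coordinate by coordinate.
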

\begin{proof}
Let $\mathcal{P} := \{ y:y(x) \geq 0 \; \forall \; x \in \mathcal{I} \}$, a closed convex polyhedron, and let $\rho_\mathcal{P}$ denote the supremum-norm distance to $\mathcal{P}$. 
\begin{enumerate}
    \item As in Lemma~\ref{lem:box_geometry}, $\rho_\mathcal{P}(y) = \max_{x \in \mathcal{I}} (-y(x))_+$. If $y$ violates the constraints of $\mathcal{P}$ by at most $\epsilon \geq 0$, then $\mathcal{T}^\phi_\pi[y](x) \geq \gamma \min\{Kg(s), -\epsilon\} \geq -\gamma \epsilon \; \forall \; x \in \mathcal{I}, \; \pi \in \Pi(q)$ as in Lemma~\ref{lem:branchwise_absorption}. This implies $\rho_\mathcal{P}(\mathcal{T}^\phi_\pi[y]) \leq \gamma \rho_\mathcal{P}(y)$.
    \item Next, every element of $\Psi(q,y)$ has the form $w - y$ with $\rho_\mathcal{P}(w) \leq \gamma \rho_\mathcal{P}(y)$, because of convexity of the relaxation of $\mathcal{P}$. Lemma~\ref{lem:distance_decay} therefore applies, with $C = \mathcal{P}$, $\theta = \gamma$, and $R = \infty$, at every $y$.
    \item For $y \in \Lambda_1(q)$, by Eq.~\ref{eq:set_valued_fast_tracking} there is a bounded complete solution $y(\cdot)$ of $\dot{y} \in \Psi(q,y)$ with $y(0) = y$, and by shift invariance $y(s) \in \Lambda_1(q) \; \forall \; s \in \mathbb{R}$.
    \item Noting that $\Lambda_1(q)$ is compact and $\rho_\mathcal{P}$ is continuous, we get $M:= \max_{w \in \Lambda_1(q)} \rho_\mathcal{P}(w)$ is finite and $\rho_\mathcal{P}(y(-t)) \leq M \; \forall \; t>0$.
    \item Applying Lemma~\ref{lem:distance_decay} on interval $[-t, 0]$ with initial point of $y(-t)$, we get,
    \[
    \rho_\mathcal{P} (y(0)) \leq \rho_\mathcal{P} (y(-t))\, e^{-(1-\gamma)t} \; \forall \; t>0
    \]
    \item Taking the limit of $t \to \infty$, we get, $\rho_\mathcal{P}(y(0)) = 0$ and $\mathcal{P}$ is closed. Hence, $y \in \mathcal{P}$.
\end{enumerate} 
\end{proof}
\begin{remark}[On the selection of $\hat{d}$ and index sets]  The selection $\hat{d}$ is unconstrained, and can be made to satisfy Assumption~\ref{ass:attractor_certication}; this assumption is a condition on the \textit{existence of a suitable selection}; however, it is not a condition on every kernel-exiting disturbance. Enlarging $\mathcal{J}$ to contain every kernel-exiting triple would shrink $\mathcal{Q}_{\text{cfg}}$ and strengthen Assumption~\ref{ass:attractor_certication} without improving any conclusion, since Lemma~\ref{lem:selection_gap} requires only $\min_d q(s,a,d) \leq -\kappa$, which a single coordinate already delivers.
\end{remark}
\begin{remark}[Only the condition on $\mathcal{J}$ is assumed]\label{rem:asymmetry}
    We now want to reason whether the Corollary~\ref{cor:positive_certification} extends to $\mathcal{J}$. Towards this end, let us first consider $x = (s,a,\hat{d}(s,a)) \in \mathcal{J}$, where $s \in \Omega$ by definition of index set. This means $Kg(s)>0$ and hence substituting in Eq.~\ref{eq:safety_bellman}, we get $(1-\gamma)Kg(s) >0$. If $\mathcal{T}^\phi_\pi[y](x) \leq -\sigma$, then $\mathbb{E}_{s'}[y(\cdot)] \leq Kg(s)$ and $\mathbb{E}_{s'}[y(\cdot)] \leq \frac{-\sigma + (1-\gamma)Kg(s)}{\gamma}$. This second term of minimum in Eq.~\ref{eq:safety_bellman} is controlled by the values of $y(x')$, where $x' =(s',\cdot, \cdot)$ and $s' \notin \Omega$, at successor states outside of $\Omega$. Note that the index sets, $\mathcal{I}$ and $\mathcal{J}$ do not provide any information on the value of $y$ at such successor states outside of $\Omega$. Hence, we get a structural asymmetry: positive certification propagates since kernel-preserving action keeps every successor in $\Omega$ where we have the constraints, whereas negative certification must propagate into a region where nothing has been assumed. Certifying the condition on $J$ therefore requires the Assumption~\ref{ass:attractor_certication} and Corollary~\ref{cor:positive_certification} cannot extend to $\mathcal{J}$. 
\end{remark}
\begin{proposition}[Margin Separation]\label{prop:margin_separation}
    Let Assumption~\ref{ass:attractor_certication} hold and set $\Theta := ((1-\gamma)R_{\text{max}} + \kappa)/\gamma$. Then, for every
    \par \vspace{-2\parskip} {\normalsize\begin{equation}
        K \ge \underline{K} := \frac{(1-\gamma) R_{\text{max}} + \kappa}{\gamma \sigma_\Lambda}
    \end{equation}}
    one has $y(x) \le -\Theta$ for all $x \in \mathcal{J}$, all $y \in \Lambda(q)$, and all $q \in \mathcal{C}_{\epsilon_0}$. If $\mathcal{J} = \emptyset$ the bound on $K$ and the conclusion holds trivially. 
\end{proposition}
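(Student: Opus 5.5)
The argument is a scaling argument built on the positive homogeneity noted just before Assumption~\ref{ass:attractor_certication}. I would first make precise that $\Lambda_K(q) = K\Lambda_1(q)$ for every $q$, where $\Lambda_K$ denotes the fast attractor when the margin is scaled to $Kg$.

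\textbf{Step 1 (homogeneity of the fast inclusion).} Write $\mathcal{T}^{\phi,K}_\pi$ for the safety Bellman operator with margin $Kg$. The set $\Pi(q)$ depends only on $q$ and not on $K$. Since $\min\{Kx,Ky\}=K\min\{x,y\}$ for $K>0$, and the expectation and the policy-evaluation $\max_{a'\in\mu(s')}\min_{d'\in\nu(s',a')}$ are positively homogeneous, we get
\[
\mathcal{T}^{\phi,K}_\pi[Ky] = K\,\mathcal{T}^{\phi,1}_\pi[y].
\]
Hence $h^K_\pi(Ky)=K h^1_\pi(y)$. Because scalar multiplication commutes with the closed convex hull, this gives $\Psi_K(q,Ky)=K\Psi_1(q,y)$.

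It follows that $y(\cdot)$ is a solution of the $K=1$ inclusion if and only if $Ky(\cdot)$ is a solution of the $K$ inclusion. Scaling preserves boundedness and completeness of solutions on $\mathbb{R}$. By the definition of $\Lambda(q)$ as the set of initial values of bounded complete solutions, we conclude $\Lambda_K(q)=K\Lambda_1(q)$.

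\textbf{Step 2 (transfer the certificate).} Take $q\in\mathcal{C}_{\epsilon_0}$, $x\in\mathcal{J}$ and $y\in\Lambda_K(q)$. By Step 1, $y=Ky_1$ for some $y_1\in\Lambda_1(q)$. Assumption~\ref{ass:attractor_certication} gives $y_1(x)\le-\sigma_\Lambda$, so
\[
y(x)\le -K\sigma_\Lambda.
\]
If $K\ge\underline{K}$, then $K\sigma_\Lambda\ge((1-\gamma)R_{\max}+\kappa)/\gamma=\Theta$, which yields $y(x)\le-\Theta$. This holds uniformly in $x$, $y$ and $q$. When $\mathcal{J}=\emptyset$, the universal statement over $x\in\mathcal{J}$ is vacuous.

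\textbf{Main obstacle.} The only delicate point is Step 1, specifically the claim that the attractor, defined through complete bounded solutions of a differential inclusion, scales exactly. The inclusion itself scales correctly; what requires care is the time parametrization. The vector field scales linearly and is not time-rescaled, so $\frac{d}{dt}(Ky)=K\dot y\in K\Psi_1(q,y)=\Psi_K(q,Ky)$ holds with the same time variable. With that checked, the map $y\mapsto Ky$ is a bijection between the two solution sets, and the identity $\Lambda_K=K\Lambda_1$ follows.
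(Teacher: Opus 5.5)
Your proposal is correct and follows the paper's proof essentially step for step. Both arguments use the same chain: degree-one homogeneity of the safety Bellman operator in the margin, $K$-independence of $\Pi(q)$, the identity $\Psi_K(q,Ky)=K\Psi_1(q,y)$, and the bijection $y(\cdot)\mapsto Ky(\cdot)$ between bounded complete solutions giving $\Lambda_K(q)=K\Lambda_1(q)$, followed by scaling the unit-scale certificate to $y(x)\le -K\sigma_\Lambda\le-\Theta$; your explicit check that no time reparametrization is needed is a welcome bit of care the paper leaves implicit.
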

\begin{proof}
    \begin{enumerate}
        \item The safety Bellman operator is positively homogeneous of degree one in the margin- replacing $g$ by $Kg(s)$ and $Q^\phi$ by $KQ^\phi$ leaves the Eq.~\ref{eq:safety_bellman} invariant for all $K>0$.
        \item $\Pi(q)$ is greedy with respect to $q$ and is therefore independent of the positive margin scale.
        \item Noting that positive scaling commutes with the closed convex hull and graph completion, $\Psi_K(q, Ky) = K\Psi_1(q,y)$.
        \item $y(\cdot)$ is a bounded complete solution at unit scale if and only if $Ky(\cdot)$ is a complete solution at positive scale $K$.
        \item Taking time zero values in Eq.~\ref{eq:set_valued_fast_tracking}, gives us $\Lambda_K(q) = K \Lambda_1(q)$.
        \item By Assumption~\ref{ass:attractor_certication}, every $y_1 \in \Lambda_1(q)$ satisfies $y_1(x) \leq -\sigma_\Lambda \forall x \in \mathcal{J}$. 
        \item Similarly, for every $y = K y_1 \in \Lambda_K(q)$ satisfies $y(x) \leq -K \sigma \leq - \Theta$ whenever $K \sigma \geq \Theta$, which is the stated bound.
    \end{enumerate}
\end{proof}
Note only the Assumption~\ref{ass:attractor_certication} interacts with $K$. The condition on $\mathcal{I}$ obtained in Corollary~\ref{cor:positive_certification} is positive scale-invariant, that is, multiplying by $K>0$ does not change the sign.
\subsubsection{
Uniform violation contraction}
\begin{lemma}[Target Bounds]\label{lem:target_bounds}
    Let $0 \le \epsilon \leq \epsilon_0$, $q \in \mathcal{C}_\epsilon$, and $y \in \Lambda(q)$. Then (i) $T_y[q](x) \leq -\kappa$ for every $x \in \mathcal{J}$, and (ii) $T_y[q](x) \geq -\gamma\epsilon$ for every $x \in \mathcal{I}$.
\end{lemma}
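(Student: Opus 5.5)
The plan is to expand the joint Bellman operator of Eq.~\ref{eq:joint_bellman} coordinatewise,
\[
\mathcal{T}_y[q](x) = (1-\gamma)\, m(x) + \gamma \min\Bigl\{ r(x) + \mathbb{E}_{s'}\left[V_q(s')\right],\, y(x) \Bigr\}, \qquad x=(s,a,d).
\]
For each part I bound the two summands separately, using upper bounds for (i) and lower bounds for (ii). Throughout, $q\in\mathcal{C}_\epsilon\subseteq\mathcal{C}_{\epsilon_0}$, because $\epsilon\le\epsilon_0$ and the family $\mathcal{C}_\epsilon$ is nested by Lemma~\ref{lem:box_geometry}(iii). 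Hence the certification results for the fast attractor apply to $y\in\Lambda(q)$.

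For (i), fix $x=(s,a,\hat d(s,a))\in\mathcal{J}$.
\begin{itemize}
\item The first summand satisfies $m(x)=\min\{r(x),Kg(s)\}\le r(x)\le R_{\max}$, so it is at most $(1-\gamma)R_{\max}$.
\item For the second summand I use $\min\{\cdot,y(x)\}\le y(x)$. By Proposition~\ref{prop:margin_separation}, valid since $K\ge\underline{K}$, we have $y(x)\le-\Theta$.
\item Combining, $\mathcal{T}_y[q](x)\le(1-\gamma)R_{\max}-\gamma\Theta$. By the definition $\Theta=((1-\gamma)R_{\max}+\kappa)/\gamma$, this equals $-\kappa$.
\end{itemize}
Note that the truncation term alone delivers the bound, so no information about $V_q$ at successors outside $\Omega$ is needed. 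This is exactly why $\hat d$ was fixed and Assumption~\ref{ass:attractor_certication} was imposed.

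For (ii), fix $x=(s,a,d)\in\mathcal{I}$, so $s\in\Omega\subseteq\mathcal{F}^c$ and $a\in\mathcal{A}^\Omega(s)$.
\begin{itemize}
\item Since $Kg(s)>0$ and $r\ge0$, we get $m(x)\ge0$, so the first summand is nonnegative.
\item For the truncation value, Corollary~\ref{cor:positive_certification} gives $y_1(x)\ge0$ for every $y_1\in\Lambda_1(q)$. By homogeneity, $\Lambda(q)=\Lambda_K(q)=K\Lambda_1(q)$, so also $y(x)\ge0$.
\item For the continuation value, $a$ is kernel-preserving, so every successor satisfies $s'\in\Omega$ almost surely.
\item At such $s'$, take any $a'\in\mathcal{A}^\Omega(s')\neq\emptyset$. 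All triples $(s',a',d')$ lie in $\mathcal{I}$, so $\min_{d'}q(s',a',d')\ge-\epsilon$, as in the first step of Lemma~\ref{lem:selection_gap}. Hence $V_q(s')\ge-\epsilon$.
\item Therefore $r(x)+\mathbb{E}_{s'}[V_q(s')]\ge-\epsilon$, the minimum is at least $\min\{-\epsilon,0\}=-\epsilon$, and $\mathcal{T}_y[q](x)\ge-\gamma\epsilon$.
\end{itemize}

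The main obstacle is bookkeeping rather than estimation. I must check that the certificates stated at unit margin scale transfer correctly to the scale-$K$ attractor $\Lambda(q)$. The negative bound $-\Theta$ uses Proposition~\ref{prop:margin_separation} directly. The positive bound uses the sign-invariance of $\Lambda_K=K\Lambda_1$. Both require $q\in\mathcal{C}_{\epsilon_0}$ rather than $q\in\mathcal{Q}_{\text{cfg}}$, which is why I verify the nesting first.
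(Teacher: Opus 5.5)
Your proposal is correct and follows essentially the same argument as the paper: in (i) you bound the first summand by $(1-\gamma)R_{\max}$ and the minimum by $y(x)\le-\Theta$ from Proposition~\ref{prop:margin_separation}, and in (ii) you combine $m\ge 0$, $y(x)\ge 0$ from Corollary~\ref{cor:positive_certification}, and $V_q(s')\ge-\epsilon$ obtained from an arbitrary kernel-preserving action at the successor. Your extra bookkeeping, namely the nesting $\mathcal{C}_\epsilon\subseteq\mathcal{C}_{\epsilon_0}$ and the transfer of the unit-scale positive certificate through $\Lambda_K(q)=K\Lambda_1(q)$, only makes explicit what the paper uses implicitly.
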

\begin{proof}
(i) Let $x = (s,a,d) \in \mathcal{J}$. Bounding each $\min$ in Eq.~\ref{eq:joint_bellman} by one of its arguments and using $m(s,a,d) = \min\{r, Kg(s)\} \leq r \leq R_{\text{max}}$,
\[
T_y[q](x) = (1-\gamma) m(s,a,d) + \gamma \min\{ r + \mathbb{E}[V_q(s')], y(x) \} \leq (1-\gamma)R_{\text{max}} + \gamma y(x).
\]
Since $q \in \mathcal{C}_{\epsilon_0}$ and $y \in \Lambda(q)$, Proposition~\ref{prop:margin_separation} gives $y(x) \leq -\Theta$, so $T_y[q](x) \leq (1-\gamma)R_{\text{max}} - \gamma\Theta = -\kappa$ (again using definition of $\Theta$ as defined in Proposition~\ref{prop:margin_separation}). The bound depends only on $R_{\text{max}}$, $\gamma$ and $\Theta$, and is therefore uniform in $q$ and in $y \in \Lambda(q)$.\\
(ii) Let $x = (s,a,d) \in \mathcal{I}$. Since $s \in \Omega \subseteq F^c$ we have $g(s) > 0$, and $r \ge 0$, so $m(s,a,d) \ge 0$ and the first term of Eq.~\ref{eq:joint_bellman} is non-negative. For the continuation term, $a \in \mathcal{A}^\Omega(s)$, so every successor $s'$ lies in $\Omega$ -  against every disturbance, not only the one appearing in $x$. As $s' \in \Omega$, the set $\mathcal{A}^\Omega(s') \neq \emptyset$; fix $a^+ \in \mathcal{A}^\Omega(s')$. Since $\mathcal{I}$ contains every disturbance coordinate, $(s',a^+,d') \in \mathcal{I}$ for all $d' \in D$, and $q \in \mathcal{C}_\epsilon$ gives $q(s',a^+,d') \geq -\epsilon$ for each such $d'$. Hence
\[
V_q(s') = \max_{a'} \min_{d'} q(s',a',d') \geq \min_{d'} q(s',a^+,d') \geq -\epsilon
\]
the first inequality because a maximum over $a'$ is at least the value at the particular action $a^+$. Therefore $r + E[V_q(s')] \geq -\epsilon$, while $y(x) \geq 0$ by Corollary~\ref{cor:positive_certification} on $\mathcal{I}$. Since $\min\{X,Y\} \geq \min\{x,y\}$ whenever $X \geq x$ and $Y \geq y$,
\[
T_y[q](x) \ge (1-\gamma) \cdot 0 + \gamma \cdot \min\{-\epsilon, 0\} = -\gamma\epsilon
\]
\end{proof}
Part (ii) of the above proof never identifies which action the greedy policy of Eq.~\ref{eq:greedy_mu_nu} selects at $s'$, nor which disturbance is applied there: it requires only that some kernel-preserving action exist at $s'$ and carry value at least $-\epsilon$. This is why the lower constraints propagate without any hypothesis on the disturbance policy, whereas the upper constraints require Assumption~\ref{ass:attractor_certication} on $\mathcal{J}$.
\begin{corollary}[The joint operator contracts the violation]\label{cor:violation_contraction}
    For $0 \leq \epsilon \leq \epsilon_0$ and every $y \in \Lambda(q)$, $T_y[\mathcal{C}_\epsilon] \subseteq \mathcal{C}_{\gamma\epsilon}$.
\end{corollary}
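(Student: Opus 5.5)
The plan is to read the corollary coordinatewise off Lemma~\ref{lem:target_bounds}, using the closed form of $\mathcal{C}_{\gamma\epsilon}$ from Lemma~\ref{lem:box_geometry}(iii). Here $y\in\Lambda(q)$ is understood with $q$ the point at which $\mathcal{T}_y$ is applied, exactly as in the hypotheses of Lemma~\ref{lem:target_bounds}. So I would fix $0\le\epsilon\le\epsilon_0$, an arbitrary $q\in\mathcal{C}_\epsilon$, and an arbitrary $y\in\Lambda(q)$. The goal is to show $\mathcal{T}_y[q]\in\mathcal{C}_{\gamma\epsilon}$, which by Definition~\ref{defn:index_sets_enlargements} means two families of inequalities: $\mathcal{T}_y[q](x)\ge -\gamma\epsilon$ for all $x\in\mathcal{I}$, and $\mathcal{T}_y[q](x)\le -\kappa+\gamma\epsilon$ for all $x\in\mathcal{J}$. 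Coordinates outside $\mathcal{I}\cup\mathcal{J}$ are unconstrained and need no argument.

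\textbf{The lower constraints on $\mathcal{I}$.} These are exactly Lemma~\ref{lem:target_bounds}(ii). That lemma uses $q\in\mathcal{C}_\epsilon$ to guarantee $V_q(s')\ge-\epsilon$ at kernel-preserving successors. It also uses Corollary~\ref{cor:positive_certification} to get $y\ge 0$ on $\mathcal{I}$; this applies because $q\in\mathcal{C}_\epsilon\subseteq\mathcal{C}_{\epsilon_0}$ and the sign condition is invariant under the margin scale $K$, via $\Lambda_K=K\Lambda_1$.

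\textbf{The upper constraints on $\mathcal{J}$.} Lemma~\ref{lem:target_bounds}(i) gives $\mathcal{T}_y[q](x)\le-\kappa$, which is at most $-\kappa+\gamma\epsilon$ since $\gamma\epsilon\ge 0$. This is where the standing choice $K\ge\underline{K}$ enters, through Proposition~\ref{prop:margin_separation}, which supplies $y\le-\Theta$ on $\mathcal{J}$ uniformly over $q\in\mathcal{C}_{\epsilon_0}$. In fact the image lands in the stronger set where the $\mathcal{J}$-constraints hold with zero slack.

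\textbf{Assembling, and the main obstacle.} Combining the two parts and invoking Lemma~\ref{lem:box_geometry}(iii), we get $\rho(\mathcal{T}_y[q])\le\gamma\epsilon$, i.e.\ $\mathcal{T}_y[q]\in\mathcal{C}_{\gamma\epsilon}$. I expect no real obstacle here. The only care point is bookkeeping: the hypotheses of Lemma~\ref{lem:target_bounds} are needed for each pair $(q,y)$ with $y\in\Lambda(q)$, not uniformly in $y$ independent of $q$. One must also keep $\epsilon\le\epsilon_0$, so that Proposition~\ref{prop:margin_separation} and Corollary~\ref{cor:positive_certification} apply on all of $\mathcal{C}_\epsilon$.
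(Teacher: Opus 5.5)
Your proposal is correct and matches the paper's proof. Both read the defining inequalities of $\mathcal{C}_{\gamma\epsilon}$ directly off Lemma~\ref{lem:target_bounds}(ii) on $\mathcal{I}$, and off Lemma~\ref{lem:target_bounds}(i) on $\mathcal{J}$ together with $-\kappa \le -\kappa+\gamma\epsilon$; your detour through Lemma~\ref{lem:box_geometry}(iii) is harmless but unnecessary, since these inequalities are literally the definition of $\mathcal{C}_{\gamma\epsilon}$.
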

\begin{proof}
    Let $q \in \mathcal{C}_\epsilon$ and $y \in \Lambda(q)$. Lemma~\ref{lem:target_bounds}(ii) gives $T_y[q](x) \ge -\gamma\epsilon$ for $x \in \mathcal{I}$, and Lemma~\ref{lem:target_bounds}(i) gives $T_y[q](x) \le -\kappa \le -\kappa + \gamma\epsilon$ for $x \in \mathcal{J}$. These are precisely the defining inequalities of $\mathcal{C}_{\gamma\epsilon}$.
\end{proof}
This is the property on which the invariance argument depends on: the joint Bellman operator maps each relaxed constraint set into a smaller one, uniformly over the critics in the fast attractor. It is \emph{not} a consequence of Proposition~\ref{prop:joint_bellman_contraction}, which holds the safety action-value fixed. Here, the critic ranges over $\Lambda(q)$ while $q$ itself moves and the composed map $q \mapsto T_{\mathbf{Y}_q}[q]$ need not induce a contraction. Corollary~\ref{cor:violation_contraction} asserts the weaker but sufficient property that the operator contracts the \emph{distance} to $\mathcal{Q}_{\text{cfg}}$.

We now proceed to transfer the property from the individual operators to their \textit{occupation-measure average} in the next section of the proof.

\subsubsection{Averaging over the fast attractor preserves the violation contraction}
\begin{lemma}[Averaging preserves the violation contraction]\label{lem:averaged_contraction}
    Let $\rho(q) = \epsilon < \epsilon_0$. Then every $v \in \overline{G}(q)$ has the form $v = z -q$ with $z \in \mathcal{C}_{\gamma \epsilon}$.
\end{lemma}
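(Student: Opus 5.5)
The argument has three steps. First rewrite an arbitrary averaged residual as an average of Bellman targets. Then apply Corollary~\ref{cor:violation_contraction} pointwise on the support of the occupation measure. Finally use convexity and closedness of $\mathcal{C}_{\gamma\epsilon}$ to move the pointwise bound through the integral.

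\textbf{Step 1: the averaged target.} Fix $q$ with $\rho(q)=\epsilon<\epsilon_0$. By Lemma~\ref{lem:box_geometry}(iii), $q\in\mathcal{C}_\epsilon$ with $\epsilon\le\epsilon_0$. Take any $v\in\overline{G}(q)$. By Eq.~\ref{eq:G_hat_q} there is $\mathbb{P}\in\mathfrak{D}(q)$, supported on $\Lambda(q)$, with
\[
v=\int_{\Lambda(q)}\left(\mathcal{T}_y[q]-q\right)\mathbb{P}(\dd y).
\]
Since $\mathbb{P}$ is a probability measure and $q$ does not depend on $y$, this splits as $v=z-q$ with
\[
z:=\int_{\Lambda(q)}\mathcal{T}_y[q]\,\mathbb{P}(\dd y).
\]
The integral is well defined because $y\mapsto\mathcal{T}_y[q]$ is continuous (indeed $\gamma$-Lipschitz, by the bound on $G$ in the proof of Theorem~\ref{thm:ode}) and $\Lambda(q)$ is compact. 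Hence the integrand is bounded and Borel measurable.

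\textbf{Step 2: pointwise membership.} For every $y\in\Lambda(q)$, Corollary~\ref{cor:violation_contraction} gives $\mathcal{T}_y[q]\in\mathcal{C}_{\gamma\epsilon}$. This uses $q\in\mathcal{C}_\epsilon\subseteq\mathcal{C}_{\epsilon_0}$, so that Proposition~\ref{prop:margin_separation} and Corollary~\ref{cor:positive_certification} apply to every $y$ in the support, not just a single critic.

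\textbf{Step 3: passing through the integral.} This is the main, though mild, obstacle: showing that a Bochner average of points of a closed convex set stays in the set. Here the box structure from Lemma~\ref{lem:box_geometry} makes it elementary. Each defining constraint of $\mathcal{C}_{\gamma\epsilon}$ is a coordinate inequality, and coordinate evaluation commutes with integration:
\[
z(x)=\int_{\Lambda(q)}\mathcal{T}_y[q](x)\,\mathbb{P}(\dd y).
\]
For $x\in\mathcal{I}$, the integrand is at least $-\gamma\epsilon$ for $\mathbb{P}$-a.e. $y$, because $\mathbb{P}$ is supported on $\Lambda(q)$. Since $\mathbb{P}$ has total mass one, $z(x)\ge-\gamma\epsilon$. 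For $x\in\mathcal{J}$, the integrand is at most $-\kappa\le-\kappa+\gamma\epsilon$, so $z(x)\le -\kappa+\gamma\epsilon$.

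Therefore $z\in\mathcal{C}_{\gamma\epsilon}$, and $v=z-q$ has the claimed form. Equivalently, $\rho(z)\le\gamma\rho(q)$, which is exactly the hypothesis needed to apply Lemma~\ref{lem:distance_decay} with $C=\mathcal{Q}_{\mathrm{cfg}}$, $\theta=\gamma$ and $R=\epsilon_0$ in the proof of Theorem~\ref{thm:forward_invariance}.
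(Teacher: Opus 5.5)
Your proof is correct and follows the paper's argument. You write $v=z-q$ with $z=\int_{\Lambda(q)}\mathcal{T}_y[q]\,\mathbb{P}(\dd y)$, apply Corollary~\ref{cor:violation_contraction} at every $y\in\Lambda(q)$, and conclude $z\in\mathcal{C}_{\gamma\epsilon}$; the only difference is that the paper simply invokes closedness and convexity of $\mathcal{C}_{\gamma\epsilon}$ (Lemma~\ref{lem:box_geometry}(ii)), whereas you check the barycentre inclusion directly by integrating each coordinate inequality.
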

\begin{proof}
\begin{enumerate}
    \item By Eq.~\ref{eq:G_hat_q}, there is a probability measure, $\mathbb{P} \in \mathfrak{D}(q)$ supported on $\Lambda(q)$ with
    \[
    v = \int_{\Lambda(q)} (T_y[q] -q) \mathbb{P}(dy) = \int_{\Lambda(q)} T_y[q] \mathbb{P}(dy) - \int_{\Lambda(q)} q \mathbb{P}(dy) = \left(\int_{\Lambda(q)} T_y[q] \mathbb{P}(dy)\right)- q
    \]
    \item Comparing this to $v = z-q$, gives us that $z = \int_{\Lambda(q)} T_y[q] \mathbb{P}(dy)$. In words, $z$ is the averaged target.
    \item From Corollary~\ref{cor:violation_contraction}, we know $T_y[q] \in \mathcal{C}_{\gamma \epsilon}$ for every $y \in \Lambda(q)$.
    \item From Lemma~\ref{lem:box_geometry}(ii), we know that $\mathcal{C}_{\gamma \epsilon}$ is closed and convex. By extension, the averaged target $z \in \mathcal{C}_{\gamma \epsilon}$.
\end{enumerate} 
\end{proof}
Averaging over occupation measures still preserves the violation contraction because of the convexity of $\mathcal{C}_\epsilon$, since $\text{conv}(\mathcal{C}_\epsilon) = \mathcal{C}_\epsilon$.
\subsubsection{Forward Invariance}
\forwardinvariance*
\begin{proof}
    \begin{enumerate}
        \item Solutions of Eq.~\ref{eq:slow_inclusion} are by definition absolutely continuous functions of time satisfying $\dot q(t) \in \overline{G}(q(t))$ for almost every $t$.
        \item Such solutions exist because $\overline{G}$ is upper semicontinuous with nonempty compact convex values, $\Lambda$ is nonempty, compact and upper semicontinuous by Lemma~\ref{lem:safety_critic_tracking} and the barycentre map preserves these properties.
        \item By Lemma~\ref{lem:averaged_contraction}, for every $q$ with $\rho(q) < \epsilon_0$ each $v \in \overline{G}(q)$ has the form $v = z - q$ with $z \in \mathcal{C}_{\gamma\rho(q)}$.
        \item By Lemma~\ref{lem:box_geometry}(iii), $\rho(z) \le \gamma\rho(q)$.
        \item The hypothesis of Lemma~\ref{lem:distance_decay} therefore holds with $C = \mathcal{Q}_{\text{cfg}}$, $\theta = \gamma$ and $R = \epsilon_0$. Since $\rho(q(0)) < \epsilon_0$, applying Lemma~\ref{lem:distance_decay}, yielding:
        \par \vspace{-2\parskip} {\normalsize\begin{equation}\label{eq:rho_decay}
            \rho(q(t)) \;\le\; \rho(q(0))\, e^{-(1-\gamma)t} \quad \forall \; t \ge 0.
        \end{equation}}
        \item If $q(0) \in \mathcal{Q}_{\text{cfg}}$ then $\rho(q(0)) = 0$, so $\rho \equiv 0$ and $q(t) \in \mathcal{Q}_{\text{cfg}}\; \forall\;t$ by Lemma~\ref{lem:box_geometry}(iii). The exponential attractivity statement is the bound in Eq.~\ref{eq:rho_decay}.
    \end{enumerate}
\end{proof}
\subsubsection{The safe configuration set certifies safety}
\begin{lemma}[Robust invariance of $\Omega$]\label{lem:invariance_omega}
    Let $q \in \mathcal{Q}_{\text{cfg}}$. Then (i) $\inf_{s\in\Omega} V^\phi_{\pi_q}(s) \ge 0$; and (ii) for every $s_0 \in \Omega$ and every disturbance sequence in $\mathcal{D}$, the state trajectory generated by $\mu_q$ satisfies $\mathbb{P}(s_t \in \Omega \; \forall \; t \ge 0) = 1$.
\end{lemma}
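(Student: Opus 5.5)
The plan is to prove part (ii) first, because it is a purely combinatorial consequence of the selection gap. Part (i) then follows by a comparison argument on the safety Bellman operator.

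For (ii), fix $q \in \mathcal{Q}_{\text{cfg}}$ and let $\pi_q=(\mu_q,\nu_q)$ be the fixed measurable selection. By Lemma~\ref{lem:selection_gap} with $\epsilon=0$, the gap is at least $\kappa>0$, so $\mu_q(s)\in\mathcal{A}^\Omega(s)$ for every $s\in\Omega$. By Definition~\ref{defn:kernel_preserving_actions}, we then have $P(s_{t+1}\in\Omega\mid s_t,\mu_q(s_t),d)=1$ for every $d\in\mathcal{D}$. This holds in particular for whatever disturbance $d_t$ is played at time $t$, possibly depending on the history.

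Induction on $t$ gives $\mathbb{P}(s_0,\dots,s_t\in\Omega)=1$ for all $t$. The key estimate is that $\mathbb{P}(s_{t+1}\notin\Omega,\ s_{0:t}\in\Omega)$ equals an expectation over histories of conditional probabilities that each vanish. Taking the countable intersection then yields $\mathbb{P}(s_t\in\Omega\ \forall t)=1$.

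For (i), I would work with $Y:=Q^\phi_{\pi_q}$, the unique fixed point from Proposition~\ref{prop:safety_bellman_contraction}. Let $\mathcal{P}=\{y: y(x)\ge 0\ \forall x\in\mathcal{I}\}$, the closed polyhedron used in Corollary~\ref{cor:positive_certification}. Lemma~\ref{lem:branchwise_absorption} applies to $\pi_q\in\Pi(q)$, since $q\in\mathcal{C}_0\subseteq\mathcal{C}_{\epsilon_0}$, and it gives $\mathcal{T}^\phi_{\pi_q}[\mathcal{P}]\subseteq\mathcal{P}$. Start Picard iteration from any $y_0\in\mathcal{P}$, for instance $y_0\equiv 0$. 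Every iterate stays in $\mathcal{P}$, the iterates converge to $Y$ by contraction, and $\mathcal{P}$ is closed, so $Y\in\mathcal{P}$.

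Alternatively, and closer to the paper's style, one can repeat the argument of Corollary~\ref{cor:positive_certification}: $\rho_{\mathcal{P}}(\mathcal{T}^\phi_{\pi_q}y)\le\gamma\rho_{\mathcal{P}}(y)$, and the fixed point satisfies $\rho_{\mathcal{P}}(Y)\le\gamma\rho_{\mathcal{P}}(Y)$, hence $\rho_{\mathcal{P}}(Y)=0$.

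Then, for $s\in\Omega$, the triple $(s,\mu_q(s),\nu_q(s,\mu_q(s)))$ lies in $\mathcal{I}$, because $\mu_q(s)\in\mathcal{A}^\Omega(s)$ and $\mathcal{I}$ contains every disturbance. Therefore $V^\phi_{\pi_q}(s)=Y(s,\mu_q(s),\nu_q(s,\mu_q(s)))\ge 0$. Since $\Omega$ is finite, the infimum is a minimum and is nonnegative.

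The main obstacle is small but real. Lemma~\ref{lem:branchwise_absorption} evaluates the successor term at the greedy disturbance $\nu(s',\mu(s'))$, whereas the operator takes $\min_{d'\in\nu(s',a')}$. One has to check that, with the fixed single-valued selection, this is a single coordinate lying in $\mathcal{I}$, which it is because $\mathcal{I}$ contains all disturbances. A second point to verify is that in (ii) adversarial, history-dependent disturbances are covered. This holds because the kernel-preserving property is uniform over $d$, so no assumption on $\nu$ is needed; I would state this explicitly.
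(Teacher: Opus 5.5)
Your proof is correct. Part (ii) is the same induction as the paper's, and your version, which inducts on $\{s_{0:t}\in\Omega\}$ and conditions on the history, handles adaptive disturbances a little more explicitly.

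Part (i) takes a different and more elementary route. The paper argues as follows:
\begin{itemize}
\item $Q^\phi_{\pi_q}$ is an equilibrium of the frozen fast inclusion, since $0\in\Psi(q,Q^\phi_{\pi_q})$.
\item The constant trajectory is therefore a bounded complete solution, so $Q^\phi_{\pi_q}\in\Lambda(q)$.
\item It then invokes Corollary~\ref{cor:positive_certification}, whose proof runs the $\rho_{\mathcal{P}}$-decay backward in time along complete solutions.
\end{itemize}
You instead use Lemma~\ref{lem:branchwise_absorption} to see that the closed nonempty set $\mathcal{P}$ is invariant under the single contraction $\mathcal{T}^\phi_{\pi_q}$. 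Its Banach fixed point therefore lies in $\mathcal{P}$. Your one-line variant $\rho_{\mathcal{P}}(Y)\le\gamma\rho_{\mathcal{P}}(Y)$ is exactly the Corollary's decay estimate specialized to an equilibrium, where the backward-time Gr\"onwall argument collapses to a single inequality.

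What your route buys is that it needs only Proposition~\ref{prop:safety_bellman_contraction} and Lemma~\ref{lem:branchwise_absorption}. It avoids:
\begin{itemize}
\item the complete-solution characterization of $\Lambda(q)$ from Lemma~\ref{lem:safety_critic_tracking};
\item compactness of the attractor;
\item the scale bookkeeping between $\Lambda_1(q)$, where the Corollary is stated, and $\Lambda(q)$, where the paper places $Q^\phi_{\pi_q}$.
\end{itemize}

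What the paper's route buys is economy of narrative. It presents safety of the induced policy as an immediate consequence of the attractor-level certificate already built for the forward-invariance theorem, so no new estimate is needed. It also makes explicit that the true safety critic of each greedy branch is among the points the fast iterates may track.
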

\begin{proof} We will first prove Part(i) below.
\begin{enumerate}
    \item The fixed point $Q^\phi_{\pi_q}$ of the safety Bellman operator $\mathcal{T}^\phi_{\pi_q}$ satisfies $\mathcal{T}^\phi_{\pi_q}[Q^\phi_{\pi_q}] - Q^\phi_{\pi_q} = 0 \in \Psi(q, Q^\phi_{\pi_q})$ by Eq.~\ref{eq:set_valued_fast_tracking}.
    \item So $Q^\phi_{\pi_q}$ is an equilibrium of the frozen inclusion. Therefore, by the second consequence of Eq.~\ref{eq:set_valued_fast_tracking}, $Q^\phi_{\pi_q} \in \Lambda(q)$.
    \item Now, applying Corollary~\ref{cor:positive_certification}, gives us $Q^\phi_{\pi_q} (x) \geq 0 \; \forall \; x \in \mathcal{I}$.
    \item By Lemma~\ref{lem:selection_gap}, $\mu_q(s) \in \mathcal{A}^\Omega(s)$ and therefore $(s,\mu_q(s), \nu_q(s, \mu_q(s))) \in \mathcal{I} \; \forall \;\nu_q(s, \mu_q(s)) \in \mathcal{D}$.
    \item Noting that $\mu_q$ and $\nu_q$ are deterministic, the maximum and minimum in the definition of the safety value function is over singletons, giving us
    \[
    V^\phi_{\pi_q} = \max_{a \in \mu_q(s)} \min_{d \in \nu_q(s, \mu_q(s))} Q^\phi_{\pi_q}(s,a,d) = Q^\phi_{\pi_q}(s,\mu_q(s),\nu_q(s, \mu_q(s))) \geq 0
    \]
    \item As the above reasoning holds $\forall \; s \in \Omega$, taking infimum over $\Omega$, yields, $\inf_{s \in \Omega}V^\phi_{\pi_q} \geq 0$.
\end{enumerate} 
We will next prove Part(ii). Let $E_t := \{ s_t \in \Omega \}$. We show $\mathbb{P}(E_t) = 1$ for every $t$ by induction.
\begin{enumerate}
    \item $\mathbb{P}(E_0) = 1$ by hypothesis.
    \item Suppose $\mathbb{P}(E_t) = 1$. On $E_t$, Lemma~\ref{lem:selection_gap} gives $\mu_q(s_t) \in \mathcal{A}^\Omega(s_t)$, so by the definition of $\mathcal{A}^\Omega$, $\mathbb{P}(s_{t+1} \in \Omega \mid s_t, \mu_q(s_t), d) = 1\; \forall \;d \in \mathcal{D}$; taking expectations over $E_t$ yields $\mathbb{P}(E_{t+1}) = 1$, whichever disturbance is applied.
    \item Since the time index is discrete, $\{ s_t \in \Omega \; \forall\; t \ge 0 \} = \bigcap_{t \ge 0} E_t$ is a countable intersection of probability-one events and therefore has probability one.
\end{enumerate}
\end{proof}
\inducedsafety*
\begin{proof}
    From Theorem~\ref{thm:forward_invariance}, we get $q(t) \in \mathcal{Q}_\text{cfg} \; \forall \; t\geq 0$ because $q(0) \in \mathcal{Q}_\text{cfg}$. Finally, applying Lemma~\ref{lem:invariance_omega} at $q= q(t)$, both assertions follow.
\end{proof}
\begin{remark}[Scope]
    Corollary~\ref{cor:induced_policy_safety} concerns the trajectory generated by the policy induced at a given configuration. Safety throughout training would additionally require safe initialization together with a safe exploration mechanism, neither of which is assumed here.
\end{remark}
\begin{remark}[Intuition for the Geometric Argument]
    The drift points from the current configuration toward an average of its Bellman images under the critics in $\Lambda(q)$, each of which lies closer to $\mathcal{Q}_{\text{cfg}}$ by the factor $\gamma$. A step along the drift is therefore a chord from $q$ into $\mathcal{C}_{\gamma\rho(q)}$, and convexity of the distance function converts this into decay of $\rho$. Forward invariance is thus a statement about convex geometry.
\end{remark}
\subsection{Convergence to Optimal Safety-Constrained Task Performance (Sec.~\ref{sec:convergence})}\label{app:convergence_proofs}
\optjointvaluefunc*

Let $Q^{\phi, 1}_{\pi}$ denote the safety critic obtained by scaling the margin $g(\cdot)$ by $K=1$. By homogeneity, $Q^{\phi}_{\pi}:=Q^{\phi, K}_{\pi}=KQ^{\phi, 1}_{\pi}$. Next, let $\delta_{\Omega} := \inf_{q\in\mathcal{Q}_{\mathrm{cfg}}}\min_{\pi\in\Pi(q)}\min_{s\in\Omega} V^{\phi, 1}_{\pi} > 0$ represent the lowest safety value (represents the \textit{closest} a safe policy ever comes to $\partial\Omega$).

\convergencefixedpt*

\begin{proof}
    For any $q_0\in\mathcal{U}$, since $\mathcal{U}$ is compact and $\mathcal{U}\subseteq \{q:\rho(q)<\epsilon_0\}$, applying Theorem~\ref{thm:forward_invariance} yields $\mathcal{U}\subseteq\mathcal{Q}_{\mathrm{cfg}}$. Then, $\mathcal{U}\subseteq\mathcal{Q}_{\mathrm{cfg}}$ is a compact invariant set of the averaged slow-timescale differential inclusion $\dot{q}(t) \in \overline{G}(q(t))$. We will first prove that $\mathcal{U}\subseteq\mathcal{Q}^*$.
    \begin{enumerate}
        \item By definition of $\delta_\Omega$, for every $q\in\mathcal{Q}_{\mathrm{cfg}}$, every $\pi\in\Pi(q)$, and every $s\in\Omega$, we have $V_\pi^{\phi, 1}(s) \ge \delta_\Omega$, and thus, $g(s) \ge \delta_{\Omega}$ for $s\in\Omega$. We will now show that every $y\in\Lambda(q)$ satisfies $y(x_i) \ge K\delta_\Omega$ for $x_i\in\mathcal{I}$.

        Define $\zeta(y):= \max_{x_i\in\mathcal{I}} \left( K\delta_\Omega - y(x_i) \right)^+$ which, by definition gives $y(x_i) \ge K\delta_\Omega - \zeta(y)$. Next, note that, since $x_i=(s, a, d) \in\mathcal{I}$, by definition, $s\in \Omega$, and $a\in A^{\Omega}(s)$, which implies that $P(s'\in\Omega\mid s, a, d) = 1$. Thus, almost surely, $\mathbb{E}[y(s', \pi(s'))] \ge K\delta_\Omega - \zeta(y)$. Applying the above facts to the safety Bellman operator yields:
        \begin{subequations}
        \begin{align}
            \mathcal{T}^\phi_{\pi}[y](x_i) &= (1-\gamma)Kg(s) + \gamma\min\left\{ Kg(s), \mathbb{E}[y(s', \pi(s'))] \right\} \\
            & \ge (1-\gamma)Kg(s) + \gamma\left( K\delta_\Omega - \zeta(y) \right) \\
            &= K\delta_\Omega - \gamma\zeta(y)
        \end{align}
        \end{subequations}
        Since this holds for all policies $\pi\in\Pi(q)$, every convex combination has the same lower bound. Next, under Assumption \ref{ass:steps}, the induced limiting fast timescale dynamics is given by $\dot{y}(x_i) = \mathcal{T}^\phi_\pi[y](x_i) - y(x_i)$. Plugging the above result yields $\dot{y}(x_i) \ge (1-\gamma)\zeta(y)$. Next, let $E(t):=\{x_i\in\mathcal{I}: x_i = \argmax_{x_i\in\mathcal{I}} K\delta_\Omega - y(x_i)(t)\}$. On these coordinates, the upper Dini derivative satisfies:
        \begin{subequations}
        \begin{align}
            \mathbb{D}^+\zeta(y(t)) &\le \max_{i\in E(t)} \mathbb{D}^+ \left( K\delta_\Omega - y(x_i)(t) \right)^+ \\
            &\le -(1-\gamma)\zeta(y(t)) \label{eq:dini_zeta_y}\\
            \therefore \,\, \zeta(y(t))&\le e^{-(1-\gamma)(t-s)}\zeta(y(s)), \,\, t\ge s \quad \text{(By Gr\"{o}nwall)}
        \end{align}
        \end{subequations}
        For contradiction, assume $\zeta(y(0)) > 0$, which yields $\zeta(y(s)) \ge e^{(1-\gamma)(0-s)}\zeta(y(0))$. As $s\rightarrow -\infty$ (since $s<t$), $\zeta(y(s)) \rightarrow\infty$ which is a contradiction, since $y(\cdot)$ is bounded, and thus, $\zeta(y(\cdot))$ must also be bounded. Thus, $\zeta(y(0)) = 0$, and since $y(0)$ is an arbitrary point of $\Lambda(q)$, we must have $y(x_i)\ge K\delta_\Omega$, for all $x_i\in\mathcal{I}$ and for all $y\in\Lambda(q)$.

        \item Next, let $\overline{V}:=\frac{R_{\text{max}}}{1-\gamma}$ be the maximum task return possible. For a quasi-static $q\in\mathcal{U}, y\in\Lambda(q)$ and $x_i=(s, a, d)\in\mathcal{I}$, since $K^* \ge K$, we have:
        \begin{align}
            Kg(s) \ge K\delta_\Omega \ge R_{\text{max}} + \overline{V} \ge R_{\text{max}} \ge r(x_i) \label{eq:safety_trunc_inactive}
        \end{align}
        which implies that the safety truncation is inactive. Moreover, we showed above that $y(x_i) \ge K\delta_\Omega$, and thus, the joint Bellman operator of Eq. \ref{eq:joint_bellman} reduces to:
        \begin{align}
            \mathcal{T}_y[q](x_i) = r(x_i) + \gamma\mathbb{E}\left[ V_q(s') \right]
        \end{align}
        Since $a\in A^\Omega(s)$, $s'\in\Omega$ almost surely; moreover, since $q\in\mathcal{Q}_{\text{cfg}}$ implies that every maximin greedy controller action at $s'\in\Omega$ is kernel preserving, we have:
        \begin{align}
            V_q(s') = \max_{a\in A^\Omega(s)}\min_{d'\in\mathcal{D}} q(s', a', d')
        \end{align}
        thus, yielding Eq. \ref{eq:safety_constr_task_bellman}. Consequently, with $\mathfrak{q}=q\vert_{\mathcal{I}}$, we have: 
        \begin{align}
            G(y,q)\vert_{\mathcal{I}} &= \mathcal{T}_\Omega[\mathfrak{q}] - \mathfrak{q} \quad \forall y\in\Lambda(q) \\
            \hspace{-4cm}\text{Averaging over any }\mathbb{P}\in\mathfrak{D}(q) &\text{yields:} \\
            \overline{G}(q)\vert_\mathcal{I} &= \left\{ \mathcal{T}_\Omega[q\vert_\mathcal{I}] - q\vert_{\mathcal{I}} \right\}, \quad q\in\mathcal{U} \label{eq:overline_G_sigleton}
        \end{align}
        \item Next, note that the operator $\mathcal{T}_\Omega$ is a $\gamma$-contraction in the supremum norm:
        \begin{align}
            \left\Vert \mathcal{T}_\Omega[\mathfrak{q}] - \mathcal{T}_\Omega[\mathfrak{q'}] \right\Vert_\infty \leq \gamma\left\Vert \mathfrak{q} - \mathfrak{q}' \right\Vert_\infty
        \end{align}
        and thus, possesses a unique fixed point $Q^*_\Omega$. Moreover, let $q(\cdot)$ be any complete solution contained in $\mathcal{U}$. Let $\mathfrak{q}(t):=q(t)\vert_{\mathcal{I}}$. Then, by Eq. \ref{eq:overline_G_sigleton}, $\dot{\mathfrak{q}}(t) = \mathcal{T}_{\Omega}[\mathfrak{q}(t)] - \mathfrak{q}(t)$. Now, since $\mathcal{T}_\Omega[Q^*_{\Omega}] = Q^*_{\Omega}$, its contraction yields:
        \begin{align}
            \mathbb{D}^+\left\Vert \mathfrak{q}(t) - Q^*_{\Omega} \right\Vert_\infty \le -(1-\gamma)\left\Vert \mathfrak{q}(t) - Q^*_{\Omega} \right\Vert_\infty \label{eq:dini_mathfrak_q_Q_star}
        \end{align}
        If $\mathfrak{q}(0)\ne Q^*_\Omega$, then by Gr\"{o}nwall:
        \begin{align}
            \left\Vert \mathfrak{q}(0) - Q^*_\Omega \right\Vert_\infty \le e^{-(1-\gamma)(0-s)}\left\Vert \mathfrak{q}(s) - Q^*_\Omega \right\Vert_\infty, \quad s<t=0
        \end{align}
        which contradicts the compactness of $\mathcal{U}$. Thus, $q\vert_{\mathcal{I}} = Q^*_\Omega$ for all $q\in\mathcal{U}$. Moreover, since $\mathcal{U}\subseteq \mathcal{Q}_{\mathrm{cfg}}$, we have
        \begin{align}
            \mathcal{U} \subseteq \left\{ q\in\mathcal{Q}_{\mathrm{cfg}} : q\vert_\mathcal{I} = Q^*_\Omega \right\} = \mathcal{Q}^*
        \end{align}
        \item Finally, by Theorem \ref{thm:ode}, the slow-timescale continuous affine interpolation $\overline{Q}(\cdot)$ has a nonempty compact invariant internally chain-transitive $\omega$-limit set. Moreover, by Theorem \ref{thm:forward_invariance}, this limit set lies in $\mathcal{Q}_\mathrm{cfg}$. Hence, by the previous argument, we have almost surely $\omega(\overline{Q}) \subseteq \mathcal{Q}^*$. Since the iterates $\{Q_n\}$ are almost surely bounded, and $\mathcal{Q}^*$ is closed, every subsequential limit of $\{Q_n\}$ belongs to $\mathcal{Q}^*$. Hence, 
        \begin{align}
            \mathrm{dist}\left( Q_n, \mathcal{Q}^* \right) \rightarrow 0\qquad\mathrm{a.s.}
        \end{align}
    \end{enumerate}
    Thus, every corresponding limiting controller attains the optimal task value $V^{*}_{\Omega}$ on $\Omega$ while selecting only kernel-preserving actions.
\end{proof}

Intuitively, the sufficient margin scale $K^*$ scales the safety critic \textit{just enough} (i.e., based on $\delta_{\Omega}$) to ensure that the safety truncation is never active (as illustrated in Fig. \ref{fig:fig_joint_and_region}). Consequently, if $\delta_\Omega \rightarrow 0$, i.e., if the safe policy approaches $\partial\Omega$ arbitrarily closely, then $K^* \rightarrow \infty$. If $\delta_\Omega = 0$, Theorem \ref{thm:convergence_fixed_pt} no longer guarantees convergence to the optimal safety-constrained task value for any finite $K$; we can characterize the resulting finite-$K$ conservatism as follows. Let $\Gamma_{\Omega}$ be the set of kernel-preserving controller policies, and for $\zeta\ge 0$, define:
\begin{align}
   \Gamma_{\Omega, \zeta} : = \left\{ \mu\in\Gamma_{\Omega} : \inf_{\nu}\inf_{s\in\Omega} V^{\phi, 1}_{(\mu, \nu)}(s) \ge \zeta \right\}  \qquad V^{\ast}_{\Omega, \zeta}(s) := \sup_{\mu\in\Gamma_{\Omega, \zeta}} \inf_\nu V^{\psi}_{\mu, \nu}(s)
\end{align}
Let $\Delta_{\mathrm{marg}}(\zeta):=\Vert V^*_{\Omega} - V^*_{\Omega, \zeta} \Vert_\infty$, and let $V^*_{K}$ denote the optimal joint value of Eq. \ref{eq:joint_st_value_eqn} over kernel-preserving controllers, for some finite $K\ge \underline{K}$. 

\finitekchar*

\begin{proof}
For a given policy $\pi=(\mu, \nu)$, let $V^K_{\mu, \nu}$ denote the unique fixed point of the finite-$K$ joint Bellman equation of Eq. \ref{eq:joint_bellman}, written as:
\begin{align}
    \mathcal{T}_{V^{\phi, 1}_{\mu, \nu}}[v](s) = (1-\gamma) \min\{r_{\mu, \nu}(s), Kg(s)\} + \gamma\min\left\{ r_{\mu, \nu}(s) + \mathbb{E}[v(s')], KV^{\phi, 1}_{\mu, \nu}(s) \right\}
\end{align}
Define $V^*_K(s):=\sup_{\mu\in\Gamma_\Omega}\inf_\nu V^K_{\mu,\nu}(s)$. Then, we have the following:
\begin{enumerate}
    \item Let $\mathcal{T}^\psi_{\mu, \nu}[v](s):= r_{\mu, \nu}(s) + \gamma\mathbb{E}[v(s')]$ be the task Bellman operator of Eq. \ref{eq:robust_det_bell_eqn}. Note that by definition of $\min\{\cdot\}$, we have $\mathcal{T}_{V^{\phi, 1}_{\mu, \nu}}[v] \le \mathcal{T}^\psi_{\mu, \nu}[v]$; taking $\inf_\nu$ and $\sup_{\mu\in\Gamma_\Omega}$ on both sides yield:
    \begin{align}
        V^*_{K}(s) \le \sup_{\mu\in\Gamma_\Omega}\inf_\nu V^\psi_{\mu, \nu}(s) = V^*_\Omega(s).
    \end{align}
    Thus, finite-$K$ truncation can only \textit{decrease the task return} obtained by the agent. 

    \item Next, define:
    \begin{align}
        K^* := \frac{R_{\mathrm{max}} + \overline{V}}{\delta_{\Omega}} = \frac{(2-\gamma)R_\mathrm{max}}{(1-\gamma)\delta_\Omega}, \qquad \overline{V}:=\frac{R_\mathrm{max}}{1-\gamma} \label{eq:optimal_K_star}
    \end{align}
    Next, note that, for $\mu_{\zeta_K}\in\Gamma_{\Omega, \zeta_K}$, $V^{K}_{\mu_{\zeta_K}, \nu} = V^\psi_{\mu_{\zeta_K, \nu}}$ i.e., if the controller $\mu$ is uniformly "$\zeta_K$ units" away from $\partial\Omega$, scaling by $K$ prevents safety truncation to be active. This argument follows analogously from Eq. \ref{eq:optimal_K_star} and Eq. \ref{eq:safety_trunc_inactive}. Then, the following holds:
    \begin{subequations}
    \begin{align}
        V^*_{K}(s) &= \sup_{\mu\in\Gamma_\Omega}\inf_\nu V^{K}_{\mu, \nu}(s) \\
        & \ge \sup_{\mu\in\Gamma_{\Omega, \zeta_K}}\inf_\nu V^K_{\mu, \nu}(s) \quad \text{(finite-$K$ suboptimality)} \\
        &= \sup_{\mu\in\Gamma_{\Omega, \zeta_K}}\inf_\nu V^{\psi}_{\mu_{\zeta_K},\nu}(s) \\
        &= V^*_{\Omega, \zeta_K}(s)
    \end{align}
    \end{subequations}
    Combining the above inequalities, we have:
    \begin{subequations}
    \begin{align}
        V^*_{\Omega, \zeta_K}(s) \le &V^*_K(s) \le V^*_\Omega(s) \\
        \hspace{-5cm}\implies 0\le V^*_\Omega(s) - V^*_K(s) &\le V^*_\Omega - V^*_{\Omega, \zeta_K}(s) \\
        & \le \Delta_{\mathrm{marg}}(\zeta_K)
    \end{align}
    \end{subequations}
    \item Finally, as $K\rightarrow\infty$, $\zeta_K = \frac{R_{\mathrm{max}} + \overline{V}}{K}\rightarrow 0$, and thus, if $\Delta_{\text{marg}}(\zeta)\rightarrow 0$ as $\zeta\downarrow 0$, then:
    \begin{align}
        \left\Vert V^*_{\Omega} - V^*_K \right\Vert_\infty \le \Delta_{\text{marg}}(\zeta_K) \rightarrow 0 \implies V^*_K\rightarrow V^*_\Omega\qquad\text{uniformly on $\Omega$}
    \end{align}

    \item To characterize $\overline{G}(\cdot)$ in this regime, consider the following analysis. For some $q\in\mathcal{Q}_{\mathrm{cfg}}$, let $\mathscr{q}=q\vert_\mathcal{I}$ with $x_i=(s,a, d)\in\mathcal{I}$. Since $q\in\mathcal{Q}_\mathrm{cfg}$, we have $\mathcal{T}_{\Omega}[\mathscr{q}](x_i) = r(x_i) + \gamma\mathbb{E}[V_q(s')]$. Then, using the property $\min\{a, b\} = a - [a-b]^+$, we have:
    \begin{subequations}
    \begin{align}
        \mathcal{T}_{y}[q](x_i) &= (1-\gamma)\min\{ r(x_i), Kg(s) \} + \gamma\min\{ r(x_i) + \mathbb{E}[V_q(s')], y(x_i) \} \\
        &= \mathcal{T}_{\Omega}[\mathscr{q}](x_i) - (1-\gamma)\left[ r(x_i) - Kg(s) \right]^+ - \gamma\left[ r(x_i) + \mathbb{E}[V_q(s')] - y(x_i) \right]^+ 
    \end{align}
    \end{subequations}
    Hence, the slow-timescale Bellman residual is given by:
    \begin{subequations}
    \begin{align}
        &\qquad G_K(y, q)\vert_\mathcal{I} = \mathcal{T}_\Omega[\mathscr{q}] - \mathscr{q} - e_K(y, q) \\
        \text{where, }\quad e_K(y, q)&:= (1-\gamma)\left[ r(x_i) - Kg(s) \right]^+ + \gamma\left[ r(x_i) + \mathbb{E}[V_q(s')] - y(x_i) \right]^+
    \end{align}
    \end{subequations}
    Averaging over an admissible occupation measure yields:
    \begin{subequations}
    \begin{align}
        &\qquad \overline{G}_K(q)\vert_\mathcal{I} = \left\{\mathcal{T}_\Omega[\mathscr{q}] - \mathscr{q} - e^{\mathbb{P}}_K(q): \mathbb{P}\in\mathfrak{D}_K(q)\right\} \\
        \text{with }\ e^\mathbb{P}_K(q)&:= (1-\gamma)\left[ r(x_i) - Kg(s) \right]^+ + \gamma\int_{\Lambda_K(q)}\left[ r(x_i) + \mathbb{E}[V_q(s')] - y(x_i) \right]^+ \mathbb{P}(dy)
    \end{align}
    \end{subequations}
    Note that when $\delta_\Omega > 0$ i.e., when $K\ge K^*$, we have $y(x_i) \ge r(x_i) + \mathbb{E}[V_q(s')]$ with $Kg(s) \ge r(x_i)$, yielding $e_K(y, q) = e_K^{\mathbb{P}}(q) = 0$, resulting in $\left.\overline{G}(q)\middle\vert_{\mathcal{I}}\right. = \left\{ \mathcal{T}_{\Omega}[q\vert_{\mathcal{I}}] - q\vert_{\mathcal{I}} \right\}$.
\end{enumerate}

\end{proof}

\subsection{Extension to Asynchronous settings}\label{app:asynchronous_ext}
In this section, we discuss how our proposed framework extends to asynchronous coordinate updates using the \textit{asynchronous stochastic-approximation} construction of \citep{perkins2013asynchronous}. Let $\mathcal{E}_Q$ and $\mathcal{E}_Y$ denote the finite coordinate sets of joint and safety critics; at iteration $n$, only subsets $E^Q_n\subseteq\mathcal{E}_Q$ and $E^Y_n\subseteq\mathcal{E}_Y$ need to be updated. For each coordinate $x\in \mathcal{S}\times\mathcal{A}\times\mathcal{D}$, let:
\begin{align}
    N_n^Q(x):=\sum_{k=1}^n \mathbbm{1}_{\left\{x\in E^Q_k \right\}}, \quad N_n^Y(x):= \sum_{k=1}^n \mathbbm{1}_{\left\{ x\in E_k^Y \right\}}
\end{align}
be its local update counters, with corresponding local step sizes $\alpha_{N_n^Q(x)}$ and $\beta_{N^Y_n(x)}$ respectively. We assume the standard asynchronous regularity conditions of \citep{perkins2013asynchronous}: 
\begin{itemize}
    \item[-] update process is recurrent so that every coordinate gets a uniformly asymptotic fraction of updates,
    \item[-] the local step-size sequences satisfy the required regularity and noise conditions,
    \item[-] the effective two-timescale separation is preserved.
\end{itemize}
Then, there exists $\epsilon_Q, \epsilon_Y > 0$ such that the \textit{effective relative update rates} belong to:
\begin{align}
    \mathfrak{M}^m_{\epsilon_Q} := \left\{ M=\mathrm{diag}(\lambda_1, \dots, \lambda_m) : \lambda_i\in[\epsilon_Q, 1] \right\} \quad \text{with $\epsilon=\epsilon_Q$ or $\epsilon_Y$}
\end{align}
Consequently, the limiting mean fields get premultiplied by positive diagonal rescaling matrices. In particular, for a quasi-static $q$, the fast-timescale differential inclusion becomes:
\begin{align}
    \dot{y}(t) \in \mathfrak{M}^{|\mathcal{E}_Y|}_{\epsilon_Y} \Psi(q, y(t)) := \left\{ Mh_\pi : M\in\mathfrak{M}^{|\mathcal{E}_Y|}_{\epsilon_Y}, h_\pi(y(t)) \in \Psi(q, y(t)) \right\}
\end{align}
and its globally attracting set replaces $\Lambda(q)$ in the two-timescale analysis. The corresponding slow-timescale limiting field is obtained analogously by averaging the diagonally rescaled Bellman residual over the admissible occupation measures.

Note that this rescaling (due to asynchronity) does not alter any qualitative conclusions made in the synchronous setting. As every diagonal entry is positive, the coordinatewise inequalities retain their sign, with only their contraction rates being reduced at most by the corresponding $\epsilon_Q, \epsilon_Y$ factors. For instance, Eq. \ref{eq:dini_zeta_y} and Eq. \ref{eq:dini_mathfrak_q_Q_star} become:
\begin{align}
    \mathbb{D}^+\zeta(y(t)) \le -\epsilon_Y\left(1-\gamma\right)\zeta(y(t)), \qquad \mathbb{D}^+\Vert \mathfrak{q}(t) - Q^*_\Omega \Vert_\infty \le -\epsilon_Q(1-\gamma)\Vert \mathfrak{q}(t) - Q^*_\Omega \Vert_\infty
\end{align}
Hence, the asynchronous extension preserves the limiting safety and optimality results; only the effective convergence rates alter.

\section{Additional Experimental Details and Results}\label{app:additional_exp_section}
In this section, we provide more details of Sec.~\ref{sec:empirical_evaluation}, where we choose to answer the following questions:
(1) \textit{Does the proposed two-timescale learning framework achieve stable convergence on continuous-control tasks with TD bootstrapping via deep neural approximations?} (2) \textit{How does our joint policy perform against well-established CMDP-based Safe RL benchmarks, particularly concerning its ability to achieve near-zero violations during evaluation without sacrificing task performance?} (3) \textit{How does the set of safe states under the joint policy dynamically expand during the learning process?}


While our theoretical analysis establishes that the joint policy corresponding to the exact fixed point of $Y^* = Q^\phi_{\pi_{Q^*}}, \quad Q^* = \mathcal{T}_{Y^*} Q^*$ \textit{hard-guarantees} safety and \textit{optimal} safety-constrained task performance within the safe set, computing it using exact Dynamic Programming solutions is seldom feasible for high-dimensional continuous state spaces; hence, to scale our framework, we adopt techniques from deep RL to obtain \textit{neural approximations} to our joint value function and policy. In particular, we closely follow the ISAACS framework \citep{hsu2023isaacs} with \textit{gradient descent-ascent finite timescale separation} ($\tau$-GDA) \cite{fiez2021local} (between the adversarial disturbance and controller), built on the Soft Actor-Critic algorithm \citep{haarnoja2018soft}, and call it \texttt{JointSAC}.

\subsection{Low-dimensional Tabular Experiment}\label{app:subsec_low_dim_tabular}
In this subsection, we consider a low-dimensional dynamical system, namely the \textit{double integrator}, and provide \textit{exact} computation of the Joint Value function using tabular Dynamic Programming.
\textbf{Double-integrator System:}
We consider a deterministic double integrator with state $z=(x,v)$ and control $a$:
\begin{align}
v^{+}=v+\Delta t\,a \qquad x^{+}=x+\Delta t\,v^{+} \qquad |a|\leq 1.
\end{align}
The safety margin is defined as $g(x,v)=1-|x|$, so the system must remain in $|x|<1$. The state domain is given by the box $x\in[-2.5,2.5]$, $v\in[-3,3]$. The system is rewarded for moving to $x_g=1.2$, with the reward signal:
\begin{align}
r(x,v,a) = \exp\!\left( -\frac{|x-x_g|^2}{\sigma_x^2} -\frac{v^2}{\sigma_v^2} -\lambda_a |a|^2 \right) \in(0,1],
\end{align}
where $\sigma_x=0.8$, $\sigma_v=1.0$ indicate the \textit{bump width} of the reward signal over $x_g=1.2$, $v_g=0$ respectively, and $\lambda_a=1.0$ denotes the cost on control effort. Thus, the task encourages exiting the right boundary, while safety requires braking early enough to remain inside $|x|<1$.

\medskip
\noindent
\textbf{Discretization:}
We use $\Delta x=0.01$, $\Delta v=0.05$, $\Delta a=0.25$, and $\Delta t=0.2$, yielding $60{,}379$ states and $9$ actions. All value functions and policies are computed by full tabular dynamic programming to numerical convergence, with no neural network approximations.

\medskip
\noindent
\textbf{Safe set $\Omega$ and admissible actions: $A_{\Omega}(z)$:}
For reference, the continuous-time safe set of the disturbance-free double integrator is
\begin{align}
\Omega_{\mathrm{ct}} = \left\{(x,v): |x|\leq 1,\quad \left|x+\frac{v|v|}{2}\right|\leq 1 \right\}.
\end{align}
The experiment itself uses the exact kernel $\Omega_h$ of the discretized dynamics. It is computed by finite backward reachability:
\begin{align}
S_0 = z:g(z)>0, \qquad S_{k+1} = S_k \cap \left\{ z:\exists a\in A,\ f_h(z,a)\in S_k \right\}.
\end{align}
Because the state space is finite, this sequence terminates at its greatest fixed point $\Omega_h$. The admissible action set is then
\begin{align}
A_{\Omega_h}(z)
=
\left\{
a\in A:f_h(z,a)\in\Omega_h
\right\},
\end{align}
which is nonempty for every $z\in\Omega_h$ by construction.

\medskip
\noindent
\textbf{Decoupled baselines:}
We independently compute an unconstrained task critic $Q_{\mathrm{task}}$ and a task-oblivious optimal safety critic $Q_{\mathrm{safe}}^*$. We compare against three runtime compositions:
\begin{enumerate}
    \item A ``\textit{hard mask}'':
    \begin{align}
    \mu_{\mathrm{mask}}(z) \in \argmax_{a:Q_{\mathrm{safe}}^*(z,a)\geq 0} Q_{\mathrm{task}}(z,a).
    \end{align}
    We verify exhaustively that
    \begin{align}
    \left\{a:Q_{\mathrm{safe}}^*(z,a)\geq 0\right\} = A_{\Omega_h}(z)
    \end{align}
    for every viable state-action pair. Hence, any performance difference is caused by the ranking of admissible actions, not by an inaccurate safety critic or a mismatched feasible set.

    \item A least-restrictive filter, which executes the unconstrained greedy action when it is admissible and otherwise switches to $\arg\max_a Q_{\mathrm{safe}}^*(z,a)$.

    \item A nearest-safe-action, Q-CBF-style projection, which selects the admissible action closest to the unconstrained greedy action.
\end{enumerate}

The oracle value $Q^*$ is obtained by directly running value iteration, restricted to $A_{\Omega_h}(z)$ (unlike the hard mask, where $Q_{\mathrm{task}}$ is first obtained).

\medskip
\noindent
\textbf{Computation of the joint value:}
On the fast timescale (inner loop), we compute Eq.~(5) in the paper for a given joint value function (and its joint policy $\pi_{Q_k}$) until convergence, and on the slow timescale (outer loop), we compute Eq.~(6) in the paper, given the previously converged safety critic $Y_k$. This implements the two-timescale separation corresponding to $\alpha_n/\beta_n\rightarrow 0$.

The following table provides optimality gap characterization (mean, median, max, and violations) for each method, where the optimality gap is defined as
\begin{align}
\Delta = V_{\Omega_h}^{*} - V^{\mathrm{method}}(z):
\end{align}

\begin{center}
\begin{tabular}{lrrrr}
\hline
\textbf{Policy}
& \textbf{Mean}
& \textbf{Median}
& \textbf{Max}
& \textbf{Violations} \\
\hline
Oracle
& 0 & 0 & 0 & 0 \\

Hard mask
& 0.301 & 0.304 & 0.507 & 0 \\

$Q$-CBF
& 0.301 & 0.304 & 0.507 & 0 \\

Least-restrictive
& 0.452 & 0.442 & 0.820 & 0 \\

\textbf{Joint} $(K=2\times10^{4})$
& 0 & 0 & 0 & 0 \\

\textbf{Joint} $(K=1\times10^{4})$
& 0.541 & 0.5366 & 3.0394 & 0 \\

\textbf{Joint} $(K=5\times10^{3})$
& 1.1076 & 1.1 & 3.5627 & 0 \\

\textbf{Joint} $(K=1\times10^{3})$
& 5.6149 & 5.6476 & 11.2038 & 0 \\
\hline
\end{tabular}
\end{center}

We see that an exact runtime composition of independently computed reward and safety critics is already near-optimal. Our joint framework recovers the viability-restricted optimum at a large margin scale ($K$). The improvement over the hard mask and the least-restrictive filter is likely due to propagating the policy-conditioned safety truncation backward through the Bellman updates; earlier actions are ranked according to their safety-constrained evaluation (conditional on overcoming finite-$K$ conservatism).

\subsection{Safe Set Expansion}\label{subsec:safe_set_expansion}
Since the safety value function $V^\phi_\pi$ evaluates the reward-maximizing joint policy $\pi$, in theory, the safe set $\left\{s:V^\phi_\pi(s) > 0\right\}$ expands in the direction of maximum task return, before asymptotically reaching the maximal safe set. We demonstrate this on \texttt{InvertedPendulum-v4}, rewarding positive cart velocity within $\pm0.8$m limits. Thus, the agent slides towards the positive $x$-limit, after which it stays there, balancing the pole. The safe set expansion towards $0.8$m during training is illustrated in Fig.~\ref{fig:safet_set_expansion}.

\begin{figure}[!htb] \vspace{-3mm}
    \centering
    \includegraphics[width=0.24\linewidth]{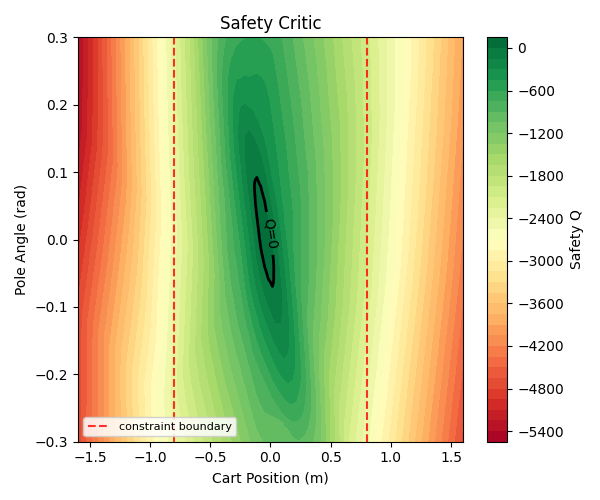}
    \includegraphics[width=0.24\linewidth]{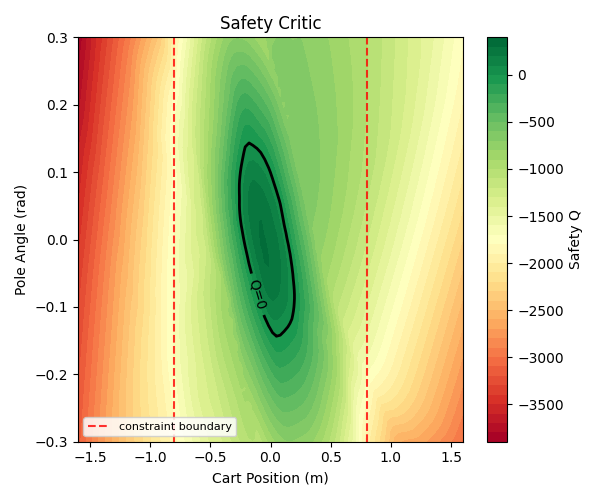}
    \includegraphics[width=0.24\linewidth]{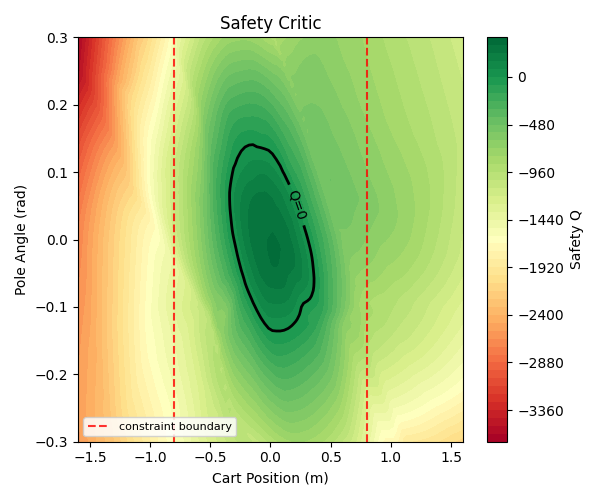}
    \includegraphics[width=0.24\linewidth]{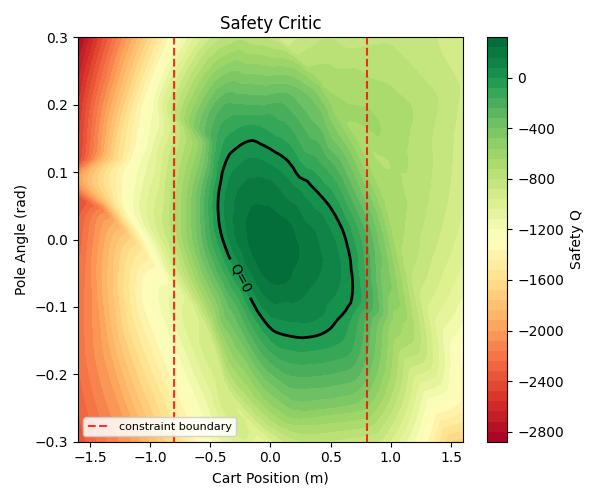}\vspace{-2mm}
    \caption{Safe set expands initially in the direction of high rewards during training (\texttt{InvertedPendulum-v4} rewarded for positive cart velocity), before converging to the maximal safe set. Constraint boundary (red dotted line) and $0$-level set of safety critic (black line) are visualized.}\vspace{-4mm}
    \label{fig:safet_set_expansion}
\end{figure}

\subsection{Continuous Control Experimental Setup}\label{app:exp_setup}
We evaluate our proposed framework across a suite of high-dimensional continuous control tasks implemented in the \texttt{MuJoCo} simulator \citep{todorov2012mujoco}, utilizing environments from both \texttt{Gymnasium} \citep{towers2026gymnasium} and \texttt{SafetyGymnasium} \citep{ji2023safety}. The primary objective in all tasks is to maximize returns while strictly satisfying safety constraints at all times.

\textbf{Simulation Tasks} We consider three locomotion environments from \texttt{Gymnasium MuJoCo} suite  \citep{towers2026gymnasium} and eight single agent environments from the \texttt{SafeVelocity} suite from \texttt{Safety-Gymnasium}\citep{ji2023safety}. 
\begin{enumerate}
    \item \texttt{Gymnasium} (Liveness constraints): Here, safety is implicitly defined by the environment's \textit{liveness bounds}, which mandate postural stability to avoid premature episode termination while moving forward. To rigorously test the safety guarantees of our formulation, we augment the environment with a \textit{physical adversary}. The adversary applies continuous bounded disturbances directly to the agent's torso. Thus, while the controller attempts to safely maximize forward locomotion, the adversary actively seeks to destabilize the agent and prematurely terminate the episode.
    \item \texttt{SafeVelocity} (Conflicting Objectives): To evaluate our framework under explicitly conflicting reward-cost signals, we utilize the \texttt{SafeVelocity} suite of \texttt{SafetyGymnasium}. Here, agents receive monotonically increasing rewards for forward velocity, but incur safety violations if they exceed a \textit{predefined safety limit}. As with standard Gymnasium tasks, we subject these agents to adversarial torso disturbances, causing the agent to either prematurely terminate (violating liveness bounds) or push the agent over the speed limits.
\end{enumerate}
\textbf{Baselines}
We compare our method against a safety filter baseline (least-restrictive safety filter) and three well-established CMDP baselines. We specifically select these CMDP baselines, as they represent the two dominant paradigms of model-free Safe RL: \textit{Lagrangian dual-descent} and \textit{trust-region projections}:
\begin{enumerate}
    \item \textbf{Safety Filters (\texttt{LRSF})}: Safety filters \citep{hsu2024safety_filter, fisac2019bridging} serve as the standard Hamilton-Jacobi-reachability-based baseline, where a \textit{task-oblivious} safety value function (often termed a ``safety critic'') is learned for an entire \textit{operational design domain} (ODD), where the constraint set (and hence its complement, the failure set) is often encoded by a margin function. Given any \textit{safety-oblivious} task policy, one can query for \textit{the safety} of its actions (by querying the safety-action value function). If the action is deemed unsafe, only then does the \textit{filter intervene} by choosing a greedy action w.r.t. the safety-action value function. Hence, this style of intervention is called \textit{least-restrictive} (LR-safety filters). 
    \item \textbf{Projection-based Constrained Policy Optimization (\texttt{PCPO})}: \texttt{PCPO} \citep{yang2020projection} serves as the standard projection-based baseline, employing a two-step trust region framework. It first optimizes the reward advantage function subject to a Kullback-Leibler (KL) divergence constraint, ignoring safety entirely. 
    In the second step, it explicitly projects this intermediate policy onto a feasible safe set. To preserve task performance, this projection minimizes either an $L^2$ norm or a KL-divergence distance penalty with a second-order Taylor expansion.
    \item \textbf{PPO-Lagrangian (\texttt{PPOLag})}: \citet{ray2019benchmarking, ji2024omnisafe} combine the Lagrangian multiplier approach with Proximal Policy Optimization (PPO) to obtain \texttt{PPOLag}. It enforces safety bounds using dynamic penalty coefficients, solving the equivalent min-max optimization problem via alternating gradient ascent on the reward objective and dual gradient descent on the constraint violation costs.  
    \item \textbf{SAC-Lagrangian \texttt{(SACLag)}}: This baseline (\texttt{SACLag}) \citep{ray2019benchmarking, ji2024omnisafe} is
     conceptually identical to \texttt{PPOLag}, but replaces the underlying architecture with Soft-Actor Critic \citep{haarnoja2018soft}. 
\end{enumerate}
\textbf{Robust Adversarial Integration}: To incorporate adversarial disturbances, we modify all baselines into adversarial \textit{zero-sum} formulations using the principles of Robust Adversarial RL \citep{pinto2017robust}. We introduce a secondary adversarial disturbance $\nu(s',a')$ that can observe the controller's actions $a' \in \mu(s')$ and injects \textit{adversarial} bounded perturbations into the environment dynamics. This yields robust versions of all three baselines, where the controller seeks to maximize the dynamic Lagrangian objective (maximize rewards, minimize violation costs), while the adversarial disturbance actively attacks and minimizes it.

\textbf{Metrics} We use cumulative \textit{Episodic Return} and \textit{Episodic Cost} tracked across training to evaluate stable convergence. We also use \textit{cumulative reward} and \textit{episode cost} as the evaluation metrics. For Lagrangian-based methods, a permissive cost violation threshold of $d=0$ is allowed. We take safety as the primary criterion for evaluations, and pursue higher rewards on the basis of meeting safety requirements.

\textbf{Compute} 
We train our algorithms on a \texttt{slurm} cluster with a single Nvidia L40, 48GB of VRAM. Each run takes 4 CPU cores. We implement the code in \texttt{PyTorch}, and all experiments take up to 4 days to fully train for $5$M timesteps.

\textbf{Hyperparameters} 
\begin{itemize}
    \item We choose seeds 5 training seeds: $[0, 5, 10, 15, 20]$ as the algorithms in \texttt{SafetyGymnasium} have been reported to be stable on \href{https://github.com/PKU-Alignment/safety-gymnasium/tree/main/benchmarks#important-notes}{these seeds}. 
    \item We tune the scaling of the margin function $\beta(g(s)) = K\cdot g(s)$ based on the maximum task return possible for each environment. Additionally, we pick the maximum value of $K$ such that there are no safety violations / algorithmic instabilities. Specifically, we choose the following:
    
    \noindent
    \begin{enumerate}
        \item \texttt{Hopper-v4}: $800.0$
        \item \texttt{Walker2d-v4}: $1200.0$
        \item \texttt{InvertedPendulum-v4}: $200.0$
        \item \texttt{SafetyWalker2dVelocity}: $1200.0$
        \item \texttt{SafetyHopperVelocity}: $800.0$
        \item \texttt{SafetyHalfCheetahVelocity}: $1200.0$
        \item \texttt{SafetyAntVelocity}: $1200.0$
    \end{enumerate}
    We pick the disturbance magnitude (applied to the torso) that is meaningfully challenging to the agent. Any higher value resulted in high safety violations for all algorithms:
    \begin{enumerate}
        \item \texttt{InvertedPendulum-v4}: $5.0$ 
        \item \texttt{Walker2d-v4}: $30.0$
        \item \texttt{Hopper2d-v4}: $7.5$
        \item \texttt{SafetyWalker2dVelocity}: $25.0$
        \item \texttt{SafetyHopperVelocity}: $5.0$
        \item \texttt{SafetyHalfCheetahVelocity}: $20.0$
        \item \texttt{SafetyAntVelocity}: $2.5$
    \end{enumerate}
    \hfill
    
    \item For 
    For \texttt{JointSAC}, we use a safety-critic learning rate of $1.0\times10^{-3}$, a joint-critic learning rate of $5\times10^{-4}$, a controller actor learning rate of $1\times10^{-4}$, a disturbance learning rate of $5\times10^{-4}$, and autotune alpha (entropy term) with a learning rate of $5\times10^{-4}$. We implement timescale separation through the critic learning rates: the joint critic is updated at half the learning rate of the safety critic, with both critics updated at every step (this led to stable training while completing in reasonable amount of time, requiring around $5$M training steps; \textit{polyak averaging} ($\tau = 5\times10^{-3}$) is used only for the target networks of both critics. Finally, we use different discount factors $\gamma$ for the joint Bellman equation and $\widetilde{\gamma}$ for the safety Bellman equation; we anneal $\widetilde{\gamma}$ from $0.85$ to $\gamma=0.99$ while training progresses, as discussed in \citep{fisac2019bridging}.
\end{itemize}

\subsection{Margin Scale $K$ Variation}\label{app:K_ablation}

We perform an ablation experiment on the variation of episode rewards and cost as $K$ increases progressively (in steps of $200$, from $K=200$ to $K=2000$). Results are illustrated in Fig.~\ref{fig:K_ablation} for $4$ environments - \texttt{Hopper, Walker2d} (from \texttt{Gymnasium} suite), and \texttt{SafetyHalfCheetah, SafetyWalker2d} (from \texttt{SafetyGymnasium} suite). Training was conducted on $5$ seeds, and each training seed was evaluated for $300$ unique episodes, totaling $1,500$ unique evaluation episodes.

\begin{figure}[!htb]
    \centering
    \includegraphics[width=0.24\linewidth]{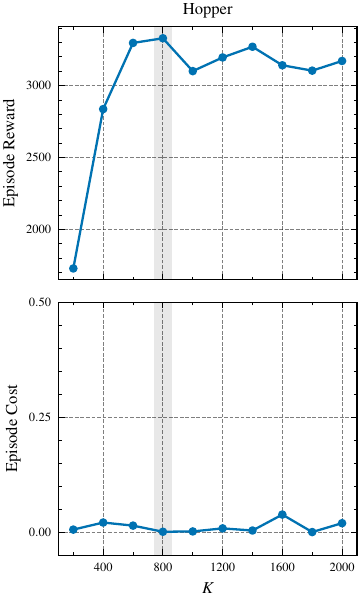}
    \includegraphics[width=0.24\linewidth]{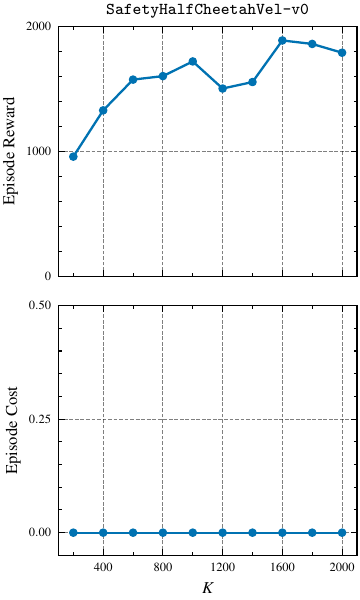}
    \includegraphics[width=0.24\linewidth]{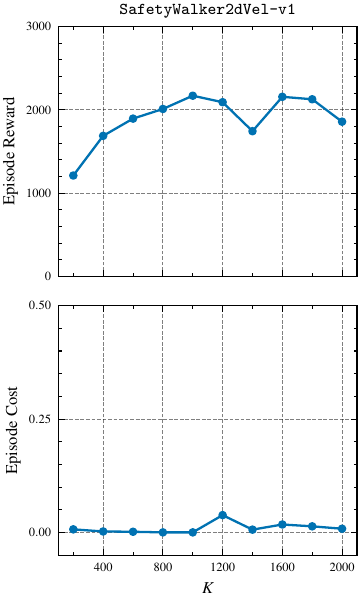}
    \includegraphics[width=0.24\linewidth]{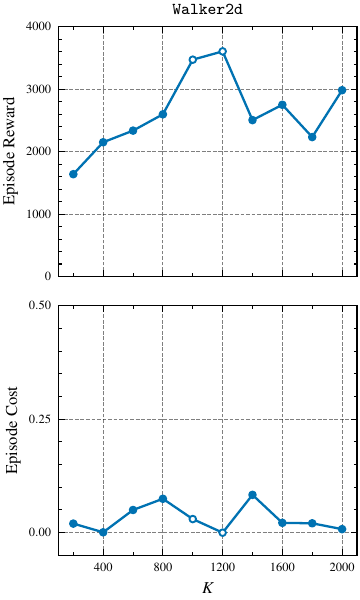}
    \caption{Figure shows variation of episode reward and episode cost as $K$ increases from $200$ to $2000$. While episode reward generally tends to increase with an increase in $K$, scaling beyond a point can introduce \textit{training instabilities} with neural approximations, as seen with  \texttt{Walker2d} environment.}
    \label{fig:K_ablation}
\end{figure}

The general trend agrees with our previous discussion in Sections~\ref{sec:two_timescale_approx}-\ref{sec:convergence}: scaling $K$ \textit{generally improves} episode reward (as the region $\Omega^{\psi}$ increases), while safety is \textit{generally preserved}, with small variations. However, one can also observe that increasing $K$ too much may lead to \textit{training instabilities} (as seen in \texttt{Walker2d} environment). 

\subsection{Additional Results}\label{app:additional_results}
The training curves for 5 seeds on:
\begin{itemize}
    \item three locomotion tasks from \texttt{Gymnasium} suite incorporating adversarial disturbance is presented in Fig.~\ref{fig:training_curves_gym_robust}
    \item eight single agent environments from \texttt{SafeVelocity} (\texttt{v0}) suite from \texttt{Safety-Gymnasium} incorporating adversarial disturbance is presented in Fig.~\ref{fig:training_curves_safetygym_robust}
\end{itemize}
These results align with the results presented in Sec.~\ref{subsec:main_results}. Throughout training, \texttt{JointSAC} demonstrates stable convergence to high task rewards, while consistently incurring near-zero episode costs. 
A single cost violation incurs $+1$ penalty each timestep. The baseline methods exhibit higher variance, with only a subset of them reliably converging below the permissible cost threshold ($d=0$). 

For evaluations, we perform $300$ rollout evaluations for each of the $5$ training seeds (totaling $1,500$ distinct episodes per environment), with results summarized in Table~\ref{tab:robust_true}. 
Results indicate that \texttt{JointSAC} maintains near-zero costs across all 11 environments (maximum of $0.03\pm0.08$ on the \texttt{Hopper-v4} adversarial environment), while the baseline exhibits varied cost violations, occasionally achieving high episode costs for some environments and staying below the permissible cost violation for others. Overall, it is seen that \texttt{JointSAC} yields stable performance while maintaining near-zero violations.



\begin{figure}[!htb]
    \centering
    \includegraphics[width=1.0\linewidth]{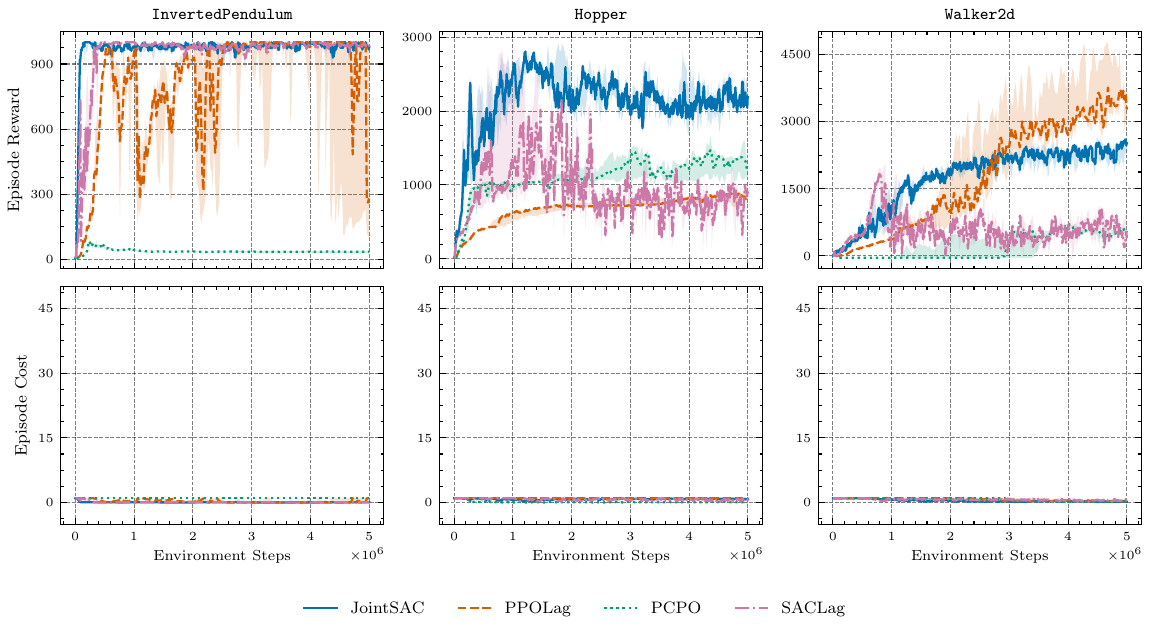}
    \caption{Training performance across 5 seeds (mean $\pm$ standard deviation) on three locomotion tasks from \texttt{Gymnasium} suite with adversarial disturbances. The top row indicates episodic returns, while the bottom row indicates episodic costs as training progresses. \texttt{JointSAC} demonstrates stable convergence with near-zero episode costs on two out of three tasks. 
    A single cost violation incurs +1 penalty each timestep. }
    \label{fig:training_curves_gym_robust}
\end{figure}

\begin{figure}[!htb]
    \centering
    \includegraphics[width=1.0\linewidth]{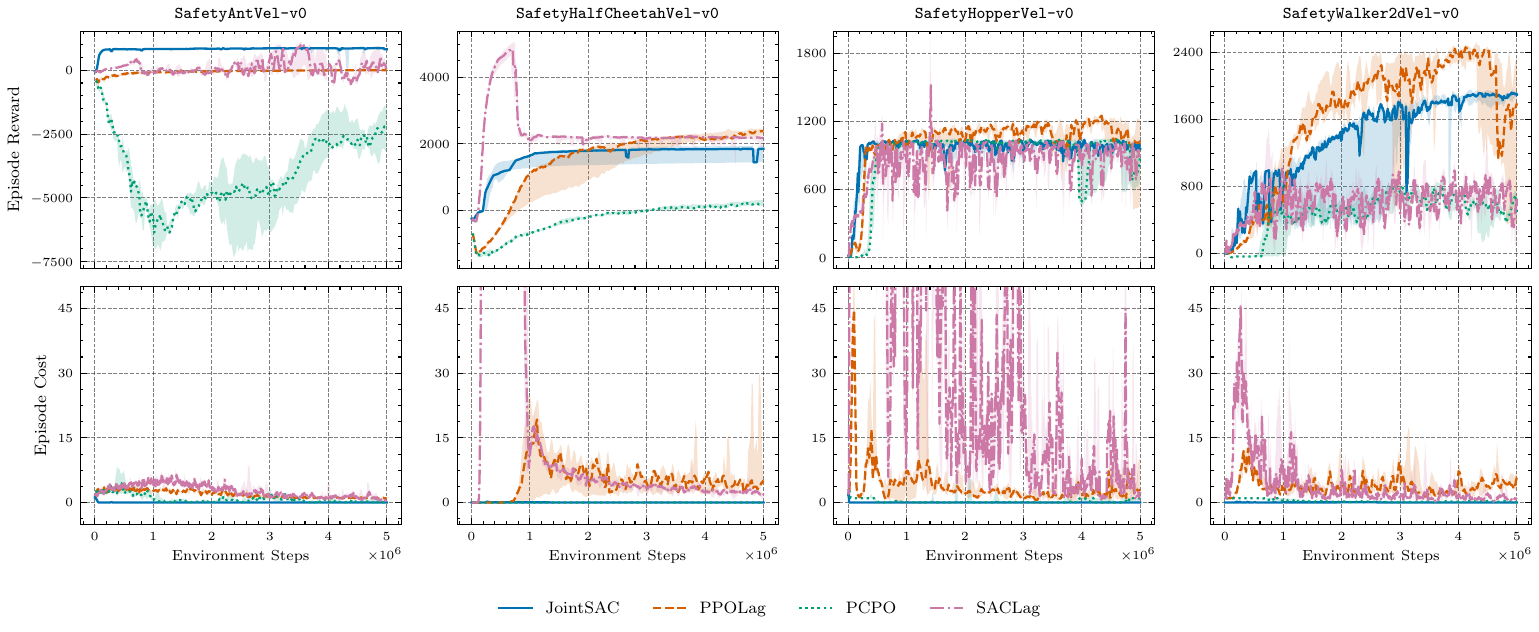}
    \caption{Training performance across 5 seeds (mean $\pm$ standard deviation) on eight single agent environments from \texttt{SafeVelocity} (\texttt{v0}) suite from \texttt{SafetyGymnasium} with adversarial disturbances. The top two rows indicate episodic returns, while the bottom two rows indicate episodic costs as training progresses. \texttt{JointSAC} demonstrates stable convergence with near-zero episode costs.}
    \label{fig:training_curves_safetygym_robust}
\end{figure}

\end{document}